
\documentclass{article}

\usepackage{microtype}
\usepackage{graphicx}

\usepackage{subcaption}
\usepackage{booktabs} 

\usepackage{hyperref}



\usepackage[accepted]{icml2024}

\usepackage{amsmath}
\usepackage{amssymb}
\usepackage{mathtools}
\usepackage{amsthm}

\usepackage[capitalize,noabbrev]{cleveref}

\theoremstyle{plain}
\newtheorem{theorem}{Theorem}[section]

\newtheorem{lemma}[theorem]{Lemma}

\theoremstyle{definition}
\newtheorem{definition}[theorem]{Definition}
\newtheorem{assumption}[theorem]{Assumption}
\theoremstyle{remark}

\usepackage[textsize=tiny]{todonotes}

\usepackage{cuted}
\usepackage{multirow}
\usepackage{multicol}
\usepackage{stfloats}

\usepackage{tikz}
\usepackage{pgfplots}
\pgfplotsset{compat=1.18} 

\usepackage{algorithm}
\usepackage[lined,boxed,commentsnumbered,algo2e]{algorithm2e}


\icmltitlerunning{Triadic-OCD: Asynchronous Online Change Detection with Provable Robustness, Optimality, and Convergence}

\begin{document}

\twocolumn[
\icmltitle{Triadic-OCD: Asynchronous Online Change Detection with Provable Robustness, Optimality, and Convergence}




\begin{icmlauthorlist}
\icmlauthor{Yancheng Huang}{yyy}
\icmlauthor{Kai Yang}{yyy}
\icmlauthor{Zelin Zhu}{yyy}
\icmlauthor{Leian Chen}{sch,note}
\end{icmlauthorlist}

\icmlaffiliation{yyy}{Tongji University}
\icmlaffiliation{sch}{Columbia University}

\icmlaffiliation{note}{This work was done prior to Leian Chen joining Amazon}

\icmlcorrespondingauthor{Kai Yang}{kaiyang@tongji.edu.cn}

\icmlkeywords{}

\icmlkeywords{Machine Learning, ICML}

\vskip 0.3in
]



\printAffiliationsAndNotice{}  

\begin{abstract}
The primary goal of online change detection (OCD) is to promptly identify changes in the data stream. 
OCD problem find a wide variety of applications in diverse areas, e.g., security detection in smart grids and intrusion detection in communication networks.
Prior research usually assumes precise knowledge of the system parameters.
Nevertheless, this presumption often proves unattainable in practical scenarios due to factors such as estimation errors, system updates, etc.
This paper aims to take the first attempt to develop a triadic-OCD framework with certifiable robustness, provable optimality, and guaranteed convergence. In addition, the proposed triadic-OCD algorithm can be realized in a fully asynchronous distributed manner, easing the necessity of transmitting the data to a single server. This asynchronous mechanism  could also mitigate the straggler issue that faced by traditional synchronous algorithm.   
Moreover, the non-asymptotic convergence property of Triadic-OCD is theoretically analyzed, and its iteration complexity to achieve an $\epsilon$-optimal point is derived.
Extensive experiments have been conducted to elucidate the effectiveness of the proposed method. 
\end{abstract}

\section{Introduction}
Detecting distribution changes as quickly as possible while controlling the false alarm rate is the fundamental objective of OCD. And this problem is frequently encountered in a diverse range of fields such as econometrics, climate modeling, and system security \cite{econometrics,climate,network_security,yang2016deep,dou2019pc,adaptive_cusum,methodin14,QCD_in_cyber_attack,QCD_in_cyber_attack_2}. In contrast to general detection problems, OCD problem in dynamic systems are particularly challenging because changes in the distribution of observed data can be caused by the dynamics of the system itself, even without external changes.
Although extensive research has been conducted on this problem, existing studies still face critical issues. The primary challenge is that the parameters associated with systems are assumed to be perfectly estimated. In practice, however, this is often not the case due to factors such as estimation errors, system updates, etc.
For instance, the investigated OCD problem within dynamic systems subsumes the false data injection attacks (FDIA) detection problem in smart grids and the blockage detection in MIMO systems —both critical issues that have received extensive attention \cite{adaptive_cusum, MIMO-blockage}.
In smart grids, fluctuations in the environment can lead to changes in line admittances, causing inaccuracies in system matrix \cite{methodin14}.
Similarly, in MIMO systems, factors such as estimation error, aging, and quantization often prevent the perfect estimation of the channel matrix \cite{error_1}.
Furthermore, current research focuses on centralized setting, which may incur privacy breaches \cite{privacy} and high communication costs. Additionally, the synchronous distributed approach often encounters straggler issues \cite{ADBO} which can cause significant delays in the detection process.
Therefore, in this paper, we aim to address these issues, and our contributions are summarized as follows:
\begin{itemize}
\item \textbf{Certified Robustness:} As opposed to existing works in the literature, Triadic-OCD provides a high degree of confidence across a broad spectrum of system parameter uncertainties.
This certifiable robustness assures the reliability of change point detection in real-world applications.
\item \textbf{Asynchronous Updating:} In order to address concerns related to potential privacy breaches, elevated communication costs, and straggler issues, we propose a novel \textit{asynchronous} distributed algorithm to effectively address the detection problem in the presence of parameter uncertainties. In addition, the proposed approach goes beyond empirical performance by offering theoretical proofs that establish the optimality of Triadic-OCD under certain conditions. This represents a theoretic advancement in ensuring the algorithm's efficiency and effectiveness. 
\item \textbf{Non-asympotic Convergence Analysis:}
We not only proves that Triadic-OCD is guaranteed to converge, but also  undertake non-asymptotic convergence analysis to establish an upper bound to the iteration complexity of the proposed asynchronous algorithm to attain an $\epsilon$-optimal solution.  
\end{itemize}

\section{Related Work}
\label{Related-work}
There have been numerous studies that address the OCD problem with uncertain pre- and post-change distributions.
\cite{ULR} proposes the Uncertain Likelihood Ratio (ULR) test statistic to tackle the parameters within pre- and post-change distributions, which are completely unknown, or known with limited prior knowledge. On the basis of that, \cite{WULR} develops a more efficient method called the Windowed Uncertain Likelihood Ratio (W-ULR) test.
\cite{Minimax} consider the OCD problem when the pre-change and post-change distributions belong to known uncertainty sets. It provides a condition under which the detection rule based on the least favorable distributions (LFDs) are minimax robust. \cite{Misspecified} relaxes this condition and provides the new performance guarantee of misspecified CUSUM rules. \cite{wasserstein} proposes a non-parametric method based on Wasserstein uncertainty sets. However, there’s a significant distinction between these works and our paper.
The general OCD methods considering uncertain distributions assume that the distribution before and after the change remains constant over time, which is completely inconsistent with our problem due to the system's inherent dynamics. As a result, general OCD methods are not applicable to our problem.

The OCD problem in dynamic system has attracted increasing attention recently.
Both \cite{adaptive_cusum} and \cite{methodin14} proposed CUSUM-type algorithms with prior assumptions on the state variables of the systems. And the method proposed in \cite{adaptive_cusum} is inefficient for large or negative attacking vectors injected into the system. In contrast, our approach makes no assumptions for the state variables of the system and imposes no restrictions on the sign of the attacking vectors. \cite{QCD_in_cyber_attack} proposes a real-time detection method based on residuals and constructs the decision statistic with the Rao test statistic. However, in some cases, the decision statistic cannot be evaluated due to the covariance singularity of the residuals. 
All the methods mentioned above, as well as \cite{ICASSP,zhangjiangfan}, assume that the system parameters can be perfectly determined, which is often impractical in reality. 
In addition, some recent work has studied the OCD in certain dynamic systems in the distributed setting.  
A distributed algorithm based on the Kalman filter is proposed in \cite{GLR}. 
And \cite{methodin14} propose distributed sequential detectors based on the generalized likelihood ratio. However, these methods are synchronously distributed and may suffer from straggler problems, which could incur significant delays during the detection.  
As opposed to all previous methods, this paper proposes an asynchronous distributed algorithm to solve the OCD problem in the presence of parameter uncertainties.

\section{Problem Statement}
In this section, we provide a detailed explanation of the OCD problem with parameter uncertainties.   The system of interest can be expressed as follow,
\begin{equation}
    \mathbf{y}^{(t)} =\bar{f}( \boldsymbol{\theta}^{(t)} \mid \mathbf{H})+\mathbf{n}^{(t)}, \mathbf{H} \in \mathcal{U}
    \label{equ-1}
\end{equation}
where $\mathbf{y}^{(t)} \in \mathbb{R}^{M}$ represents the known observation vector and $\boldsymbol{\theta}^{(t)} \in \mathbb{R}^{N} $ represents the unknown time-varying system states. $\bar{f}$ models the relationship within the system and the matrix $\mathbf{H} \in \mathbb{R}^{M \times N}$ incorporates the system parameters. In contrast to previous work, the system matrix is assumed to belong to an uncertainty set $\mathcal{U}$ instead of being perfectly known, i.e. $\mathbf{H} \in \mathcal{U}$. 
When $t \geq t_a$, the time-varying attack vector $\mathbf{a}^{(t)}$ is injected into the system which alters the distribution of $\mathbf{y}^{(t)}$. Our goal is to detect the injected vector as soon as possible.
Note that the complexity of this problem is formidable, and its practical applications remain somewhat under-explored.
However, in numerous real-world scenarios, including the FDIA detection in smart grids and the blockage detection in MIMO systems, the function $\bar{f}$ degenerates into a linear form. 
In such case, prevailing methodologies typically rely on precise knowledge of the system parameters, which often proves unattainable in practical settings. To address this limitation, we propose an asynchronous  OCD method capable of robust and high-performance detection.

Given that the system states $\boldsymbol{\theta}^{(t)}$ and the injected anomaly vector $\mathbf{a}^{(t)}$ are unknown, we estimate them using their maximum likelihood estimates (MLE) \cite{GCUSUM}. This leads to the generalized CUSUM detector for our problem, which can be written as follows.
\begin{equation}
   T_{G}=\mathrm{min} \{  {J}:\operatorname*{max}_{1\leq j\leq J} \Lambda_{j}^{(J)}\geq \zeta \}.
   \label{GCUSUM}
\end{equation}
where $\zeta$ is the predefined threshold.  $\Lambda_{j}^{(J)}$ is given in (\ref{test statistic}), where $ \mathbf{x}^{(t)}$ represent the component of $\mathbf{a}^{(t)}$ orthogonal to the column space of $\mathbf{H}$. And $\rho_H$ is the upper bound for the absolute value of each component of $\mathbf{x}^{(t)}$.
Based on (\ref{GCUSUM}), let $V_{J}$ represents $ \operatorname*{max}_{1\leq j\leq J} \Lambda_{j}^{(J)}$, we can obtain that, 
\begin{figure*}
\begin{equation}
\Lambda_{j}^{(J)} \triangleq  \sup\limits_{\mathbf{H} \in \mathcal{U}}\ln \frac{\sup \limits_{\boldsymbol{\theta}^{(t)}, \mathbf{a}^{(t)}: -\rho_{U}\mathbf{1} \leq \mathbf{x}^{(t)} \leq \rho_{U}\mathbf{1}, 
\mathbf{H}\mathbf{x}^{(t)} =\mathbf{0}    } \prod_{t=1}^{j-1} f_{p}\left(\mathbf{y}^{(t)} \mid \boldsymbol{\theta}^{(t)},\mathbf{H} \right) \prod_{t=j}^{J} f_{q}\left(\mathbf{y}^{(t)} \mid \boldsymbol{\theta}^{(t)}, \mathbf{a}^{(t)},\mathbf{H}\right)}
{\sup \limits_{\boldsymbol{\theta}^{(t)}} 
\prod_{t=1}^{J} f_{p}\left(\mathbf{y}^{(t)} \mid \boldsymbol{\theta}^{(t)},\mathbf{H}\right)}.
\label{test statistic}
\end{equation}
\end{figure*}
\begin{equation}
\begin{aligned}
V_{J}   
      \triangleq &  \operatorname*{max}_{1\leq j\leq J} \Lambda_{j}^{(J)}  = \operatorname*{max}_{1 \leq j \leq J} \sum \nolimits_{t=j}^{J} \frac{v_{t}}{2\sigma_n^2}  \\
      = & \max \left\{V_{J-1}, 0\right\}+\frac{v_{t}}{2\sigma_n^2}, \text { with } V_{0} = 0, 
\end{aligned}
\label{V_K}
\end{equation}
where the value of $v_{t}$ can be obtained by solving problem (\ref{v_t}). The detailed derivation can be found in Appendix \ref{Ad_A}.
It can be seen from (\ref{GCUSUM}) and (\ref{V_K}) that the change is declared when $V_J$ surpasses $\zeta$. And $V_J$ can be calculated in a recursive way, with the primary challenge being to obtain the value of $v_t$.
\begin{equation}
\begin{aligned}
v_t\triangleq  & \sup\limits_{\mathbf{H} \in \mathcal{U}}  \sup_{ -\rho_{U}\mathbf{1} \leq \mathbf{x}^{(t)} \leq \rho_{U}\mathbf{1}, 
\mathbf{H}^{T} \mathbf{x}^{(t)} = \mathbf{0}}
\bigg\{ \\ & 
\bigg[2(\mathbf{x}^{(t)})^{T} \mathbf{y}^{(t)}  - \| \mathbf{x}^{(t)}\|_{2}^{2} \bigg]\bigg\}.
\end{aligned}
\label{v_t}
\end{equation} 
Note that the inherent uncertainties associated with $\mathbf{H}$ makes the constraint $\mathbf{H}^{T} \mathbf{x}^{(t)} = \mathbf{0}$  in (\ref{v_t}) unattainable, which greatly exacerbates the complexity of the problem.

\section{Nested Optimization}
In this section, we elaborate on the uncertainty of the system matrix and reformulate the problem (\ref{v_t}) under the distributed setup as a nested optimization problem.

To address the challenges posed by the inherent uncertainty of $\mathbf{H}$, we adopt the constraint-wise uncertainty model, which decouples the uncertainties between different rows in the matrix. This versatile approach is applicable to a wide range of practical problems \cite{yang2014distributed,priceofrobustness}.
We first represent $\mathbf{h}_{i}$ as the $i$-th column of $\mathbf{H}$, each $\mathbf{h}_{i}$ lies in the uncertainty set $\mathcal{U}_i$.  Denote the $i$-th nominal column as $\mathbf{\bar{h}}_{i}$ .
Subsequently, we relax the constraint $\mathbf{H}^{T} \mathbf{x}^{(t)} = \mathbf{0}$ with $ \mathbf{H}^{T} \in \mathcal{U}$ as follows,
\begin{equation}
\label{4-1}
\begin{aligned}
&  \mathbf{\bar{h}}_{i}^{T}\mathbf{x}^{(t)} + p_i(\mathbf{x}^{(t)}) \leq \delta_{i}, ~1 \leq i \leq 2N,   \\
\end{aligned}
\end{equation}
where 
\begin{equation}
\label{4-2}
\begin{aligned}
& p_i(\mathbf{x}^{(t)}) =\max_{\mathbf{h}_{i}\in \mathcal{U}_i}(\mathbf{h}_i-\mathbf{\bar{h}}_i)^T\mathbf{x}^{(t)}, 1 \leq i \leq N  \\
& p_i(\mathbf{x}^{(t)}) =\max_{-\mathbf{h}_{i}\in \mathcal{U}_i}(\mathbf{h}_i-{\mathbf{\bar{h}}}_i)^T\mathbf{x}^{(t)}, N+1 \leq i \leq 2N
\end{aligned}
\end{equation}
are the protection functions.  
For $i = N+1,\cdots,2N$, 
$\bar{\mathbf{h}}_{i}=\bar{\mathbf{h}}_{i-N}$, $\mathcal{U}_i = \mathcal{U}_{i-N}$, and $\delta_{i} = \delta_{i-N}$.

The choice of different forms for $\mathcal{U}_i$  will yield distinct protection functions $p_i(\mathbf{x}^{(t)})$, consequently affecting the trade-off between robustness and detection performance.
In this paper, we examine a highly versatile uncertain set. Specifically, the uncertainty set $\mathcal{U}_i$ corresponding to each $\mathbf{h}_{i}$ is assumed to be characterized by $U_{i}$ differentiable functions, that is, 
\begin{equation}
\label{4-3}
c_{iu}(\mathbf{h}_{i}) \leq 0, 1 \leq u \leq U_{i}.
\end{equation}
Our framework offers the flexibility to customize the uncertainty set chosen in practical applications based on specific requirements, addressing diverse needs related to complexity and performance.

Now we consider the model (\ref{equ-1}) in a distributed setting, where the system comprises numerous sub-regions (workers) geographically dispersed across a wide area. Each sub-region collects and manages local observation data before communicating with the master node to facilitate collaborative detection.
Suppose there are $L$ sub-regions in the system, we rewrite (\ref{equ-1}) for each sub-region as follows,
\begin{equation}
    \mathbf{y}^{(t)}_l =\mathbf{H}_l \boldsymbol{\theta}^{(t)}_l+\mathbf{n}^{(t)}_l,
    \label{subregion-model}
\end{equation}
where we utilize the subscript $l$ to denote the local components within the $l$-th sub-region.
Given that the state vectors of neighboring sub-regions may share certain parameters, the variable $\boldsymbol{\theta}_l$ for different $l$ may therefore partially overlap.
For clarity, we denote the $\mathbf{x}^{(t)}$ within the $l$-th sub-region as $\boldsymbol{\mu}^{(t)}_l$.
Therefore, we have $\boldsymbol{\mu}^{(t)}_l = \mathbf{B}_{l}\mathbf{x}^{(t)}, 1 \leq l \leq L $
, where $\mathbf{B}_{l}$ is the matrix projecting global attacking vector to the local attacking vector on worker $l$.
Below we omit the time superscript $(t)$ for notational simplicity.
On the basis of (\ref{4-1})-(\ref{subregion-model}), the problem (\ref{v_t}) can be formulated as 
\begin{equation}
\begin{aligned}
\min \quad &   \sum \nolimits_{l=1}^{L} \{ ~ \|\boldsymbol{\mu}_{l}\|_{2}^{2} -   2{\mathbf{y}_{l}}^{T} \boldsymbol{\mu}_{l} ~  \}   \\
\text { s.t. } \quad
&-\rho_{U}\mathbf{1} \leq \boldsymbol{\mu}_{l} \leq \rho_{U}\mathbf{1}, ~~~~~~1\leq l \leq L   \\
&  \boldsymbol{\mu}_{l} = \mathbf{B}_{i}\mathbf{x}, ~~~~~~1\leq l \leq L   \\
&\bar{\mathbf{h}}_{i}^{T}\mathbf{x} + p_i(\mathbf{x}) \leq \delta_{i}, ~1 \leq i \leq 2N\\
\text {var.}  \quad &  \{\boldsymbol{\mu}_{l}\}, \{\mathbf{h}_{i}\},\mathbf{x},
\label{distributed_1}
\end{aligned}
\end{equation}

\section{Asynchronous Distributed Method}
In this section, we provide a detailed explanation of the proposed algorithm named Triadic-OCD, capable of solving our problem in an asynchronous distributed manner while ensuring robustness, and optimality.
Triadic-OCD employs a set of cutting planes to approximate the feasible region constrained by (\ref{4-1}), leveraging their flexibility in adaptation to problem complexity and efficiency in exploration \cite{cut_1,cut_2,yang2014distributed}.
Subsequently, variables undergo asynchronous distributed updates. These two steps iteratively alternate in the proposed algorithm. The cutting planes continuously refine the approximation, while variable updates on the basis of the newly updated cutting plane sets.

Given that computing the exact value of $\{ \mathbf{h}_{i} \}$ for updating outer-level variables at each iteration is unnecessary and inefficient in terms of time and memory usage \cite{estimate_h_1,estimate_h_2}, we opt to employ the estimates of $\{ \mathbf{h}_{i} \}$ instead of their precise values during algorithm execution.
Based on existing methods \cite{K_round_1,K_round_2,ADBO}, Triadic-OCD uses $D$ iterations of gradient descent to approximate the optimal value of $\{ \mathbf{h}_{i} \}$ for the inner-level optimization problem (\ref{4-2}).
We first present the augmented Lagrangian function for each protection function $p_i, i = 1,\cdots,N$ as follows. For $p_i,i = N+1,\cdots,2N$, the results are similar.
\begin{equation}
\begin{aligned}
& L_{\sigma}\left(\mathbf{x}, \mathbf{h}_{i}, \{\phi_{iu}\} \right)=  (\mathbf{h}_i-\mathbf{\bar{h}}_i)^T\mathbf{x}   \\
& +  \frac{\sigma}{2} \sum \nolimits_{u = 1}^{U_{i}}\left(\max \left\{\frac{\phi_{iu}}{\sigma}+c_{iu}(\mathbf{h}_{i}), 0\right\}^{2}-\frac{\phi_{iu}^{2}}{\sigma^{2}}\right) 
\label{ALM}
\end{aligned}
\end{equation}
where $\phi_{iu} \in \mathbb{R}$ is the dual variable, and $\sigma > 0 $ is the penalty parameter. In the $(d + 1)^{th}$ iteration, 
the variables are updated as follows,   
\begin{align}
\mathbf{h}_{i,d+1} =  & ~~\mathbf{h}_{i,d} -\eta_{\mathbf{h}}\nabla_{\mathbf{h}_{i}}L_{\sigma}(\mathbf{x}, \mathbf{h}_{i,d}, \{\phi_{iu,d}\} ),
\nonumber \\
\phi_{iu,d+1} = &~~ ( \phi_{iu,d} + \eta_{\phi}c_{iu}(\mathbf{h}_{i,d+1}) )^{+}, 
\end{align}
where $ (\cdot)^{+} = \max\{0, \cdot \}$, and $\eta_{\mathbf{h}}$, $\eta_{\phi}$ are step-sizes. 
We use the results after $D$ iterations of gradient descent to obtain the estimate of $\mathbf{h}_{i}$, i.e., 
\begin{equation}
\mathbf{h}_{i,D} =  \mathbf{h}_{i,0} - \sum\limits_{d = 0}^{D -1}\eta_{\mathbf{h}}\nabla_{\mathbf{h}_{i}}L_{\sigma}(\mathbf{x}, \mathbf{h}_{i,d}, \{\phi_{iu,d}\} ) .
\label{K_round}
\end{equation}

Based on the estimated solution, we define,
\begin{equation}
{g_i(\mathbf{x})} = 
(\mathbf{h}_{i,0} - \sum_{d = 0}^{D -1}\eta_{\mathbf{h}}\nabla_{\mathbf{h}_{i}}L_{\sigma}(\mathbf{x}, \mathbf{h}_{i,d}, \{\phi_{iu,d}\}) )^{T} 
\mathbf{x}.
\label{function_g}
\end{equation}
As such, the problem (\ref{distributed_1}) can be written as  
\begin{equation}
\begin{aligned}
\min \quad &   \sum \nolimits_{l=1}^{L} \{  ~\|\boldsymbol{\mu}_{l}\|_{2}^{2} -   2{\mathbf{y}_{l}}^{T} \boldsymbol{\mu}_{l}  ~ \} \\
\text { s.t. } \quad
&-\rho_{U}\mathbf{1} \leq \boldsymbol{\mu}_{l} 
  \leq  \rho_{U}\mathbf{1}, ~~~~~~1 \leq  l \leq L \\
&  \boldsymbol{\mu}_{l} = \mathbf{B}_{i}\mathbf{x}, ~~~~~~1\leq l \leq L \\
&   {g_i(\mathbf{x})}  \leq  {\delta}_i, ~~~~~ 1 \leq i \leq 2N \\
\text {var.}  \quad &  \{\boldsymbol{\mu}_{l}\}, \mathbf{x}.
\end{aligned}
\label{distributed_2}
\end{equation}
Since the protection functions are convex \cite{yang2014distributed},
we  employ a set of cutting planes to approximate the feasible region defined by the constraint $g_i(\mathbf{x}) \leq  {\delta}_i, 1 \leq i \leq 2N $. 
In the $(k+1)^{th}$ iteration, let $\mathcal{P}^{k}$ denote the feasible region respect to the set of cutting planes, i.e, 
\begin{equation}
\mathcal{P}^{k}=\{\mathbf{b}_{s}^{T}\mathbf{x}+\kappa_{s} \leq 0,s=1,\cdot\cdot\cdot,|\mathcal{P}^{k}|\},
\label{polytope}
\end{equation}
where $|\mathcal{P}^{k}|$ denotes the number of cutting planes in the $(k+1)^{th}$ iteration.  And $\mathbf{b}_{s} \in \mathbb{R}^{M}$ and $\kappa_{s} \in \mathbb{R}$ represents the parameters in the $s^{th}$ cutting plane. 
Therefore, in the $(k+1)^{th}$ iteration, the approximation problem can be formulated as follows  :
\begin{equation}
\begin{aligned}
\min \quad &   \sum \nolimits_{l=1}^{L} \{  ~ \|\boldsymbol{\mu}_{l}\|_{2}^{2} -   2{\mathbf{y}_{l}}^{T} \boldsymbol{\mu}_{l}   ~\} \\
\text { s.t. } \quad
& -\boldsymbol{\mu}_{l} - \rho_{U}\mathbf{1} + \mathbf{r}_{l} \circ \mathbf{r}_{l} =  \mathbf{0},~~~~1\leq l \leq L  \\ 
& \boldsymbol{\mu}_{l} - \rho_{U}\mathbf{1} + \mathbf{p}_{l} \circ \mathbf{p}_{l} =  \mathbf{0}, ~~~~1\leq l \leq L \\
&  \boldsymbol{\mu}_{l} = \mathbf{B}_{i}\mathbf{x}, ~~~~~~1\leq l \leq L \\
&\mathbf{b}_{s}^{T}\mathbf{x}+\kappa_{s} + q_{s}^2 =  0, ~ s=1,\cdot\cdot\cdot,|\mathcal{P}^{k}| \\
\text {var.}  \quad &  \{\boldsymbol{\mu}_{l}\},\mathbf{x}, \{\mathbf{r}_l\}, \{\mathbf{p}_l\},  \{q_{s}\},
\end{aligned}
\label{distributed_3}
\end{equation}
where $\mathbf{r}_{l}$,$ \mathbf{p}_{l}$,$q_{s}$ are introduced slack variables. $q_{s}\in \mathbb{R}$ and  $\mathbf{r}_{l}$, $ \mathbf{p}_{l}$ have the same dimensions as $\boldsymbol{\mu}_{l}$.
Notation $\circ$ represents the Hadamard product.
As the algorithm iterates, the set of cutting planes will be continuously updated to better approximate the original feasible region.
In Triadic-OCD, we update the set of cutting planes every $w$ iterations to improve the approximation of the original feasible region when $k < K_{1}$. $K_{1}$ and $w$ are the pre-defined positive integers that can be adjusted. 
If $(k+1)\bmod w = 0$, we compute the value of $g_i(\mathbf{x}^{k+1})$ for each $i$ according to (\ref{function_g}). And subsequently, we check if  $g_i(\mathbf{x}^{k+1} )  \leq  {\delta}_i$. 
If $g_i(\mathbf{x}^{k+1} )  >  {\delta}_i$, a new cutting plane will be generated to separate the point from the feasible region defined by $g_i(\mathbf{x})  \leq  {\delta}_i$.  
Following \cite{cut_1}, the generated cutting plane $cp^{k+1}$ can be written as,
\begin{equation}
g_i(\mathbf{x}^{k+1}) + \left( \frac{\partial  g_i(\mathbf{x}^{k+1})}{\partial \mathbf{x}} \right)^{T} ( \mathbf{x} - \mathbf{x}^{k+1}   ) \leq {\delta}_i.
\label{cutting_plane}
\end{equation}

\begin{figure*}[ht]
\begin{align}
\label{mu_update}
& \boldsymbol{\mu}_{l}^{k+1}=
\left\{\begin{array}{l}
{{\boldsymbol{\mu}_{l}^{k}-\eta_{\boldsymbol{\mu}   }\nabla_{\boldsymbol{\mu}_{l}}
\widetilde{L}_{p} (\{\boldsymbol{\mu}^{{\hat{k}}_{l}}_{l}\}, \{ \mathbf{r}_{l}^{{\hat{k}}_{l}}\},\{ \mathbf{p}_{l}^{{\hat{k}}_{l}}\},\{ q_{s}^{{\hat{k}}_{l}}\},\mathbf{x}^{{\hat{k}}_{l}}, \{{\boldsymbol{\lambda}}_{l}^{{\hat{k}}_{l}}\},\{{\boldsymbol{\alpha}}_{l}^{{\hat{k}}_{l}}\},\{{\boldsymbol{\beta}}_{l}^{{\hat{k}}_{l}}\}, \{\gamma_{s}^{{\hat{k}}_{l}}\} ),
 l \in \mathcal{Q}^{k+1}}}\\[1mm]
{{\boldsymbol{\mu}_{l}^{k},
l \notin \mathcal{Q}^{k+1}}}
\end{array}\right.,
\tag{24}
\\
\label{r_update}
& \mathbf{r}_{l}^{k+1}=
\left\{\begin{array}{l}
{{\mathbf{r}_{l}^{k}-\eta_{\mathbf{r}   }\nabla_{\mathbf{r}_{l}}
\widetilde{L}_{p} (\{\boldsymbol{\mu}^{{\hat{k}}_{l}}_{l}\}, \{ \mathbf{r}_{l}^{{\hat{k}}_{l}}\},\{ \mathbf{p}_{l}^{{\hat{k}}_{l}}\},\{ q_{s}^{{\hat{k}}_{l}}\},\mathbf{x}^{{\hat{k}}_{l}}, \{{\boldsymbol{\lambda}}_{l}^{{\hat{k}}_{l}}\},\{{\boldsymbol{\alpha}}_{l}^{{\hat{k}}_{l}}\},\{{\boldsymbol{\beta}}_{l}^{{\hat{k}}_{l}}\}, \{\gamma_{s}^{{\hat{k}}_{l}}\} ),
l \in \mathcal{Q}^{k+1}}}\\[1mm]
{{\mathbf{r}_{l}^{k},
l \notin \mathcal{Q}^{k+1}}}
\end{array}\right.,
\tag{25}
\\
\label{p_update}
& \mathbf{p}_{l}^{k+1}=
\left\{\begin{array}{l}
{{\mathbf{p}_{l}^{k}-\eta_{\mathbf{p}   }\nabla_{\mathbf{p}_{l}}
\widetilde{L}_{p} (\{\boldsymbol{\mu}^{{\hat{k}}_{l}}_{l}\}, \{ \mathbf{r}_{l}^{{\hat{k}}_{l}}\},\{ \mathbf{p}_{l}^{{\hat{k}}_{l}}\}, \{ q_{s}^{{\hat{k}}_{l}}\}, \mathbf{x}^{{\hat{k}}_{l}}, \{{\boldsymbol{\lambda}}_{l}^{{\hat{k}}_{l}}\},\{{\boldsymbol{\alpha}}_{l}^{{\hat{k}}_{l}}\},\{{\boldsymbol{\beta}}_{l}^{{\hat{k}}_{l}}\}, \{\gamma_{s}^{{\hat{k}}_{l}}\} ),
l \in \mathcal{Q}^{k+1}}}\\[1mm]
{{\mathbf{p}_{l}^{k},
l \notin \mathcal{Q}^{k+1}}}
\end{array}\right.,
\tag{26}
\\
& \label{q_update}
q_{s}^{k+1}= {q_{s}^{k}-\eta_{q}\nabla_{q_{s}}
\widetilde{L}_{p} (\{\boldsymbol{\mu}^{k+1}_{l}\}, \{ \mathbf{r}_{l}^{k+1}\},\{ \mathbf{p}_{l}^{k+1}\},\{ q_{s}^{k}\},\mathbf{x}^{k}, \{{\boldsymbol{\lambda}}_{l}^{k}\},\{{\boldsymbol{\alpha}}_{l}^{k}\},\{{\boldsymbol{\beta}}_{l}^{k}\}, \{\gamma_{s}^{k}\} ) },
\tag{27}
\\
& \label{v_update}
\mathbf{x}^{k+1}= {\mathbf{x}^{k}-\eta_{\mathbf{x}   }\nabla_{\mathbf{x}}
\widetilde{L}_{p} (\{\boldsymbol{\mu}^{k+1}_{l}\}, \{ \mathbf{r}_{l}^{k+1}\},\{ \mathbf{p}_{l}^{k+1}\},\{ q_{s}^{k+1}\},\mathbf{x}^{k}, \{{\boldsymbol{\lambda}}_{l}^{k}\},\{{\boldsymbol{\alpha}}_{l}^{k}\},\{{\boldsymbol{\beta}}_{l}^{k}\}, \{\gamma_{s}^{k}\} ) },
\tag{28}
\\ 
& \label{lambda_update}
{\boldsymbol{\lambda}}_{l}^{k+1}=
\left\{\begin{array}{l}
({{{\boldsymbol{\lambda}}_{l}^{k} + \eta_{{\boldsymbol{\lambda}}   }\nabla_{{\boldsymbol{\lambda}}_{l}}
\widetilde{L}_{p} (\{\boldsymbol{\mu}^{k+1}_{l}\}, \{ \mathbf{r}_{l}^{k+1}\},\{ \mathbf{p}_{l}^{k+1}\},\{ q_{s}^{k+1}\},\mathbf{x}^{k+1}, \{{\boldsymbol{\lambda}}_{l}^{k}\},\{{\boldsymbol{\alpha}}_{l}^{k}\},\{{\boldsymbol{\beta}}_{l}^{k}\}, \{\gamma_{s}^{k}\} ))^{+},
l \in \mathcal{Q}^{k+1}}}\\[1mm]
{{{\boldsymbol{\lambda}}_{l}^{k},
l \notin \mathcal{Q}^{k+1}}}
\end{array}\right.,
\tag{29}
\\
& \label{alpha_update}
{\boldsymbol{\alpha}}_{l}^{k+1}=
\left\{\begin{array}{l}
({{{\boldsymbol{\alpha}}_{l}^{k} + \eta_{{\boldsymbol{\alpha}}   }\nabla_{{\boldsymbol{\alpha}}_{l}}
\widetilde{L}_{p} (\{\boldsymbol{\mu}^{k+1}_{l}\}, \{ \mathbf{r}_{l}^{k+1}\},\{ \mathbf{p}_{l}^{k+1}\},\{ q_{s}^{k+1}\},\mathbf{x}^{k+1}, \{{\boldsymbol{\lambda}}_{l}^{k+1}\},\{{\boldsymbol{\alpha}}_{l}^{k}\},\{{\boldsymbol{\beta}}_{l}^{k}\}, \{\gamma_{s}^{k}\} ))^{+},
l \in \mathcal{Q}^{k+1}}}\\[1mm]
{{{\boldsymbol{\alpha}}_{l}^{k},
l \notin \mathcal{Q}^{k+1}}}
\end{array}\right.,
\tag{30}
\\
& \label{beta_update}
{\boldsymbol{\beta}}_{l}^{k+1}=
\left\{\begin{array}{l}
({{{\boldsymbol{\beta}}_{l}^{k} + \eta_{{\boldsymbol{\beta}}   }\nabla_{{\boldsymbol{\beta}}_{l}}
\widetilde{L}_{p} (\{\boldsymbol{\mu}^{k+1}_{l}\}, \{ \mathbf{r}_{l}^{k+1}\},\{ \mathbf{p}_{l}^{k+1}\},\{ q_{s}^{k+1}\},\mathbf{x}^{k+1}, \{{\boldsymbol{\lambda}}_{l}^{k+1}\},\{{\boldsymbol{\alpha}}_{l}^{k+1}\},\{{\boldsymbol{\beta}}_{l}^{k}\}, \{\gamma_{s}^{k}\} ))^{+},
l \in \mathcal{Q}^{k+1}}}\\
{{{\boldsymbol{\beta}}_{l}^{k},
l \notin \mathcal{Q}^{k+1}}}
\end{array}\right.,
\tag{31}
\\
& \label{gamma_update}
\gamma_{s}^{k+1} = (\gamma_{s}^{k} + \eta_{\gamma   }\nabla_{\gamma_{s}}
\widetilde{L}_{p} (\{\boldsymbol{\mu}^{k+1}_{l}\}, \{ \mathbf{r}_{l}^{k+1}\},\{ \mathbf{p}_{l}^{k+1}\},\{ q_{s}^{k+1}\},\mathbf{x}^{k+1}, \{{\boldsymbol{\lambda}}_{l}^{k+1}\},\{{\boldsymbol{\alpha}}_{l}^{k+1}\},\{{\boldsymbol{\beta}}_{l}^{k+1}\}, \{\gamma_{s}^{k}\} ))^{+}.
\tag{32}
\end{align}
\end{figure*}
And the set of cutting planes will be updated as follows,
\begin{equation}
\mathcal{P}^{k+1}=\left\{\begin{array}{l}
\mathcal{P}^{k} \cup \{cp^{k+1}\} , \text { if } g_i(\mathbf{x}^{k+1})>\delta_i \\
\mathcal{P}^{k}, ~~otherwise
\end{array}\right..
\label{cuting_update}
\end{equation}
After the new cutting plane is added, its corresponding dual variable set and slack variable set will be updated accordingly,
\begin{equation}
\left\{q^{k+1}\right\}=\left\{\begin{array}{l}
~ \{q^{k}\} \cup ~ q_{\left|\mathcal{P}^{k}\right|+1}^{k+1}, \text { if }g_i(\mathbf{x}^{k+1})  >\delta_i \\
\left\{q^{k}\right\}, ~~otherwise
\end{array}\right.,
\label{slack_cutting_update}
\end{equation}
\begin{equation}
\left\{\gamma^{k+1}\right\}=\left\{\begin{array}{l}
\left\{\gamma^{k}\right\} \cup ~ \gamma_{\left|\mathcal{P}^{k}\right|+1}^{k+1}, \text { if }g_i(\mathbf{x}^{k+1}) >\delta_i \\
\left\{\gamma^{k}\right\},~~ otherwise
\end{array}\right. .
\label{dual_cutting_update}
\end{equation}
Based on the newly refined cutting plane sets,
Triadic-OCD solves the nested optimization problem in an asynchronous distributed way. 
We first provide the Lagrangian function of (\ref{distributed_3}) as follows, 
\begin{align}
& L_{p}(\{\boldsymbol{\mu}_{l}\}, \{ \mathbf{r}_{l}\},\{ \mathbf{p}_{l}\},\{ q_{s}\}, \mathbf{x}, \{{\boldsymbol{\lambda}}_{l}\},\{{\boldsymbol{\alpha}}_{l}\},\{{\boldsymbol{\beta}}_{l}\}, \{\gamma_{s}\} ) \nonumber \\[-1.5mm]
& =  \sum \limits_{l=1}^{L} \{ \|\boldsymbol{\mu}_{l}\|_{2}^{2} -   2{\mathbf{y}_{l}}^{T} \boldsymbol{\mu}_{l} \} +  \sum_{l=1}^{L} {\boldsymbol{\lambda}}_{l}^{T} (    -\boldsymbol{\mu}_{l} - \rho_{U}\mathbf{1} + \mathbf{r}_{l} \circ \mathbf{r}_{l}    ) \nonumber \\[-1.5mm]
& +\sum_{l=1}^{L} {\boldsymbol{\alpha}}_{l}^{T} (\boldsymbol{\mu}_{l} - \rho_{U}\mathbf{1} + \mathbf{p}_{l} \circ \mathbf{p}_{l} )  + \sum_{l=1}^{L} {\boldsymbol{\beta}}_{l}^{T}(\boldsymbol{\mu}_{l} - \mathbf{B}_{i}\mathbf{x}) \nonumber \\[-1.5mm]
& + \sum_{s=1}^{|\mathcal{P}^{k}|}\gamma_{s}(\mathbf{b}_{s}^{\top}\mathbf{x} + \kappa_{s} + q_{s}^2),
\label{ALM_2}
\end{align}
where ${\boldsymbol{\lambda}}_{l}$, ${\boldsymbol{\alpha}}_{l}$, ${\boldsymbol{\beta}}_{l}$, $\mathbf{\gamma}_{s}$ are dual variables. $\mathbf{\gamma}_{s} \in \mathbb{R}$ and ${\boldsymbol{\lambda}}_{l}$, ${\boldsymbol{\alpha}}_{l}$, ${\boldsymbol{\beta}}_{l}$ have the same dimensions as $\boldsymbol{\mu}_{l}$.
Simplify the Lagrangian function of (\ref{distributed_3}) as $L_p$, we next give the regularized version \cite{regularized} of $L_p$ as follows,
\begin{equation}
\resizebox{\linewidth}{!}{$
\begin{aligned}
& \widetilde{L}_{p} (\{\boldsymbol{\mu}_{l}\}, \{ \mathbf{r}_{l}\},\{ \mathbf{p}_{l}\},\{ q_{s}\}, \mathbf{x}, \{{\boldsymbol{\lambda}}_{l}\},\{{\boldsymbol{\alpha}}_{l}\},\{{\boldsymbol{\beta}}_{l}\}, \{\gamma_{s}\} )  = L_{p}  - \\[-2pt] 
& \sum_{l=1}^{L}\frac{c_{{\boldsymbol{\lambda}}}^{k}}{2}\| {\boldsymbol{\lambda}}_{l} \|^{2}
-\sum_{l=1}^{L}\frac{c_{{\boldsymbol{\alpha}}}^{k}}{2}\|{\boldsymbol{\alpha}}_{l}\|^{2} - \sum_{l=1}^{L}\frac{c_{{\boldsymbol{\beta}}}^{k}}{2}\|{\boldsymbol{\beta}}_{l}\|^{2} 
- \sum_{s=1}^{|\mathcal{P}^{k}|}\frac{c_{\gamma}^{k}}{2}\|{\gamma}_{s}\|^{2},
\end{aligned}
$}
\label{ALM_3}
\end{equation}
where $c_{\lambda}^{k}$, $c_{{\boldsymbol{\alpha}}}^{k}$, $c_{\beta}^{k}$, $c_{\gamma}^{k}$ denote the regularization terms in the $(k + 1)^{th}$ iteration.
We represent the upper bound of $|\mathcal{P}^{k}|$ as $P$. Following the settings in \cite{asynchronous_setting}, 
in each iteration, we update the variables in the master once it receives the local variables from $S$ active workers. 
And we require every worker to communicate with the master at least once every $\tau$ iterations.
Let $\mathcal{Q}^{k+1}$ denote the set of indexes of active workers in the $(k + 1)^{th}$ iteration, the variables in Triadic-OCD are updated as follows,

(1) Local variables in workers are updated according to (\ref{mu_update}), (\ref{r_update}), (\ref{p_update}).  Notations $\eta_{\boldsymbol{\mu}}$, $\eta_{\mathbf{r}}$ and $\eta_{\mathbf{p}}$ are step-sizes. 
And ${\hat{k}}_{l}$ denotes the last iteration in which worker $l$ was active.

(2) After receiving the updates from active workers, the master updates the variables according to (\ref{q_update}), 
(\ref{v_update}), (\ref{lambda_update}), (\ref{alpha_update}), (\ref{beta_update}), (\ref{gamma_update}).  Notations $\eta_{q}$, $\eta_{\mathbf{x}}$, $\eta_{{\boldsymbol{\lambda}}}$, $\eta_{{\boldsymbol{\alpha}}}$, $\eta_{{\boldsymbol{\beta}}}$, $\eta_{\gamma}$ are step-sizes. 

The whole process of Triadic-OCD is summarized in \cref{algorithm_1}.
\begin{figure*}[ht]
\begin{equation}
\tag{34}
\label{ic}
K(\epsilon) \sim \mathcal{O}\left(\max ~\left\{~(16(\frac{Mw_{{\boldsymbol{\lambda}}}^2}{\eta_\lambda^2} + \frac{Mw_{{\boldsymbol{\alpha}}}^2}{\eta_{{\boldsymbol{\alpha}}}^2} +
\frac{Mw_{{\boldsymbol{\beta}}}^2}{\eta_{\boldsymbol{\beta}}^2} + 
\frac{Pw_{\gamma}^2}{\eta_\gamma^2})^2 \frac{1}{\epsilon^2},
(\frac{  d_9(d_7 + k_d\tau)(\tau-1)d_8}{\epsilon}+(K_1+2)^{\frac{1}{2}})^2
\right\}~ 
\right).
\end{equation}
\end{figure*}

\section{Theoretical Analysis}
\begin{theorem}
As the set of cutting planes is continuously updated, the optimal objective value of the approximation problem (\ref{distributed_3}) converges monotonically.
\label{theorem_1}
\end{theorem}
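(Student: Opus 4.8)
The plan is to recast the approximation problem (\ref{distributed_3}) as the minimization of a single, \emph{fixed} objective over a \emph{shrinking} chain of feasible sets, and then invoke the monotone convergence theorem. First I would eliminate the slack variables: the equality $-\boldsymbol{\mu}_l-\rho_U\mathbf{1}+\mathbf{r}_l\circ\mathbf{r}_l=\mathbf{0}$ admits a solution $\mathbf{r}_l$ iff $-\rho_U\mathbf{1}\le\boldsymbol{\mu}_l$, and likewise for the other squared-slack equalities, so (\ref{distributed_3}) in the $(k+1)^{\text{th}}$ iteration is equivalent to computing
\begin{equation*}
\Phi^k \;=\; \min_{\mathbf{x}}\; F(\mathbf{x}) \quad\text{s.t.}\quad \|\mathbf{B}_l\mathbf{x}\|_\infty\le\rho_U\ (1\le l\le L),\ \ \mathbf{b}_s^{T}\mathbf{x}+\kappa_s\le 0\ (1\le s\le|\mathcal{P}^k|),
\end{equation*}
where $F(\mathbf{x}):=\sum_{l=1}^{L}\big(\|\mathbf{B}_l\mathbf{x}\|_2^2-2\mathbf{y}_l^{T}\mathbf{B}_l\mathbf{x}\big)$ does not depend on $k$. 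Write $\mathcal{X}^k$ for the feasible set above; $\Phi^k$ is then exactly the optimal objective value of (\ref{distributed_3}) at that iteration.

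Second, I would establish that the feasible sets are nested, $\mathcal{X}^{k+1}\subseteq\mathcal{X}^{k}$. By the update rule (\ref{cuting_update}), $\mathcal{P}^{k+1}$ equals either $\mathcal{P}^{k}$ or $\mathcal{P}^{k}\cup\{cp^{k+1}\}$, so in both cases the list of linear constraints defining $\mathcal{X}^{k+1}$ contains that defining $\mathcal{X}^{k}$, giving $\mathcal{X}^{k+1}\subseteq\mathcal{X}^{k}$. Minimizing the \emph{same} function $F$ over a decreasing chain of sets produces a non-decreasing sequence of optimal values, hence $\Phi^{k}\le\Phi^{k+1}$ for every $k$ — between cutting-plane updates $\Phi^k$ is constant, and at an update it can only weakly increase. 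This yields the monotonicity.

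Third, I would show $\{\Phi^k\}$ is bounded above, which together with monotonicity gives convergence. The crucial point is that each added cut is \emph{valid}: it never excludes a point of the true feasible region $\mathcal{F}$ of (\ref{distributed_2}) (assumed non-empty, as the detection problem is well posed). Indeed, since the protection functions — and hence each surrogate $g_i$ in (\ref{function_g}) — are convex, the first-order expansion underestimates $g_i$, i.e. $g_i(\mathbf{x}^{k+1})+\big(\tfrac{\partial g_i(\mathbf{x}^{k+1})}{\partial\mathbf{x}}\big)^{T}(\mathbf{x}-\mathbf{x}^{k+1})\le g_i(\mathbf{x})$ for all $\mathbf{x}$, so any $\mathbf{x}$ with $g_i(\mathbf{x})\le\delta_i$ automatically satisfies the cut (\ref{cutting_plane}). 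Taking $\mathcal{P}^0$ to be a valid outer approximation (or empty), an easy induction gives $\mathcal{F}\subseteq\mathcal{X}^k$ for all $k$; in particular $\mathcal{X}^k\neq\emptyset$, and setting $\Phi^\star:=\min_{\mathbf{x}\in\mathcal{F}}F(\mathbf{x})$ — finite because $\mathcal{F}\neq\emptyset$ and, completing the square, $F(\mathbf{x})=\sum_l(\|\mathbf{B}_l\mathbf{x}-\mathbf{y}_l\|_2^2-\|\mathbf{y}_l\|_2^2)\ge-\sum_l\|\mathbf{y}_l\|_2^2$ — monotonicity of $\min$ under inclusion gives
\begin{equation*}
\Phi^k=\min_{\mathbf{x}\in\mathcal{X}^k}F(\mathbf{x})\;\le\;\min_{\mathbf{x}\in\mathcal{F}}F(\mathbf{x})=\Phi^\star<\infty .
\end{equation*}
By the monotone convergence theorem, the non-decreasing sequence $\{\Phi^k\}$, being bounded above by $\Phi^\star$, converges; since it is monotone, the optimal objective value of (\ref{distributed_3}) converges monotonically, which is the assertion.

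The main obstacle is the validity argument in the third step: it requires the surrogate $g_i$ built from the $D$-step gradient recursion (\ref{function_g}) to be convex, so that its linearization lower-bounds it. One must either argue that this finite-step surrogate inherits convexity from the protection function $p_i$, or replace it in the cut construction by a genuine convex minorant, so that $\mathcal{F}\subseteq\mathcal{X}^k$ and hence the uniform upper bound $\Phi^\star$ survive. The remaining ingredients — slack elimination, the nesting of $\{\mathcal{X}^k\}$, the complete-the-square bound, and monotone convergence — are routine.
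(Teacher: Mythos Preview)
Your proposal is correct and follows essentially the same approach as the paper: both argue that adding cutting planes yields a nested (shrinking) sequence of feasible regions, so the optimal values form a monotone non-decreasing sequence bounded above by the optimum of (\ref{distributed_2}), and therefore converge. Your version is in fact more careful than the paper's, which simply asserts $\mathcal{R}^{nw}\supseteq\mathcal{R}^{*}$ without justifying cut validity; the convexity concern you flag about the $D$-step surrogate $g_i$ is a genuine technical point that the paper's proof does not address either.
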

The complete proof is provided in Appendix \ref{AD_C}.

Let $\nabla G^{k}$ denote the gradient of $L_p$ in the  $k$-th iteration.
According to \cite{convex_optimization}, we know that $(\{\boldsymbol{\mu}^{k}_{l}\}$, $\{ \mathbf{r}_{l}^{k}\}$, $\{ \mathbf{p}_{l}^{k}\}$, $\{ q_{s}^{k}\}$, $\mathbf{x}^{k}$, $\{{\boldsymbol{\lambda}}_{l}^{k}\}$, $\{{\boldsymbol{\alpha}}_{l}^{k}\}$, $\{{\boldsymbol{\beta}}_{l}^{k}\}$, $\{\gamma_{s}^{k}\})$ is the optimal solution of problem (\ref{distributed_3}) if and only if $\| \nabla G^{k}\|^{2} = 0$.

\begin{definition}
$(\{\boldsymbol{\mu}^{k}_{l}\}$, $\{ \mathbf{r}_{l}^{k}\}$, $\{ \mathbf{p}_{l}^{k}\}$, $\{ q_{s}^{k}\}$, $\mathbf{x}^{k}$, $\{{\boldsymbol{\lambda}}_{l}^{k}\}$, $\{{\boldsymbol{\alpha}}_{l}^{k}\}$, $\{{\boldsymbol{\beta}}_{l}^{k}\}$, $\{\gamma_{s}^{k}\})$ is an $\epsilon$-optimal point of (\ref{distributed_3}) if  $\| \nabla G^{k}\|^{2} \leq \epsilon$.
Define $K(\epsilon)$ as the first iteration index such that $\| \nabla G^{k}\|^{2} \leq \epsilon$, i.e., $K(\epsilon)=\operatorname*{min}\{k \mid| \| \nabla G^{k}\|^{2}\leq\epsilon\}$.
\label{definition}
\end{definition}

\begin{assumption}
Following \cite{ADBO,boundness}, we assume that variables are
bounded, i.e., $\|\boldsymbol{\mu}_{l}\|_{\infty} \leq w_{\boldsymbol{\mu}}$, 
$\| \mathbf{r}_{l} \|_{\infty} \leq w_{\mathbf{r}}$,
$\|\mathbf{p}_{l}\|_{\infty} \leq w_{\mathbf{p}}$,
$\|q_{s}\|_{\infty} \leq w_{q}$, 
$\|{\boldsymbol{\lambda}}_{l}\|_{\infty} \leq w_{{\boldsymbol{\lambda}}}$,
$\|{\boldsymbol{\alpha}}_{l}\|_{\infty} \leq w_{{\boldsymbol{\alpha}}}$,
$\|{\boldsymbol{\beta}}_{l}\|_{\infty} \leq w_{{\boldsymbol{\beta}}}$, 
$\|\gamma_{s}\|_{\infty} \leq w_{\gamma}$. 
Before obtaining the $\epsilon$-optimal point, we assume variable $\mathbf{x}$ satisfies that $
\| \mathbf{x}^{k+1} - \mathbf{x}^{k} \|^{2}\geq \vartheta, $
where $\vartheta > 0$ is a small constant. And the change of the $\mathbf{x}$ is upper bounded within $\tau$ iterations, i.e., 
$\| \mathbf{x}^{k} - \mathbf{x}^{k-\tau}\|^{2}
\leq \tau k_{1}\vartheta $,where $k_{1} > 0$ is a constant.
\label{assumption_1}
\end{assumption}

\begin{theorem}(Iteration complexity)
Suppose Assumption \ref{assumption_1}  holds, set 
\setcounter{equation}{32}
\begin{equation}
\begin{aligned}
& c_{{\boldsymbol{\lambda}}}^{k} = \frac{1}{  \eta_{{\boldsymbol{\lambda}}}(k+1)^{\frac{1}{4}}}\geq \underline{c_{{\boldsymbol{\lambda}}}} , \quad c_{{\boldsymbol{\alpha}}}^{k} = \frac{1}{  \eta_{{\boldsymbol{\alpha}}}(k+1)^{\frac{1}{4}}}\geq \underline{c_{{\boldsymbol{\alpha}}}}, \\
& c_{{\boldsymbol{\beta}}}^{k} = \frac{1}{  \eta_{{\boldsymbol{\beta}}}(k+1)^{\frac{1}{4}}}\geq \underline{c_{{\boldsymbol{\beta}}}}   ,\quad c_{\gamma}^{k} = \frac{1}{  \eta_{\gamma}(k+1)^{\frac{1}{4}}} \geq \underline{c_{\gamma}}, \\
& \eta_{\boldsymbol{\mu}} = \frac{1}{ \frac{8\xi}{\eta_{{\boldsymbol{\lambda}}}(  \underline{ c_{{\boldsymbol{\lambda}}} }  )^2}   +  \frac{8\xi}{\eta_{{\boldsymbol{\alpha}}}(  \underline{ c_{{\boldsymbol{\alpha}}}  } )^2}  +  \frac{8\xi}{  \eta_{{\boldsymbol{\beta}}}(  \underline{  c_{{\boldsymbol{\beta}}}  })^2}} , \\
&  \eta_{\mathbf{x}} = \frac{1}{ \frac{8\xi}{\eta_{{\boldsymbol{\beta}}}(  \underline{c_{{\boldsymbol{\beta}}}} )^2}    + \frac{8\xi\nu}{\eta_{{\gamma}}(  \underline{c_{{\gamma}}} )^2}   },   \\
& \eta_{\mathbf{r}} = \frac{\eta_{{\boldsymbol{\lambda}}}(\underline{ c_{{\boldsymbol{\lambda}}}  })^2}{  32w^2_{\mathbf{r}}\xi},   \quad  \eta_{\mathbf{p}} = \frac{\eta_{{\boldsymbol{\alpha}}}(\underline{ c_{{\boldsymbol{\alpha}}}  })^2}{  32w^2_{\mathbf{p}}\xi},  \quad\eta_{q} = \frac{\eta_{\gamma}(\underline{ c_{\gamma}  })^2}{  32w^2_{q}\xi} .\\
\end{aligned}
\end{equation}
$\underline{c_{{\boldsymbol{\lambda}}}} \ge 0 $, $\underline{c_{\mathbb\alpha}}\ge 0$, $\underline{c_{{\boldsymbol{\beta}}}}\ge 0$, $\underline{c_{\gamma}}\ge 0$ represent the lower bound of $c_{{\boldsymbol{\lambda}}}^{k}$, $c_{{\boldsymbol{\alpha}}}^{k}$, $c_{{\boldsymbol{\beta}}}^{k}$, $c_{\gamma}^{k}$, respectively.
For any prescribed $\epsilon$, 
the iteration complexity of Triadic-OCD to converge to the $\epsilon$-optimal point is shown in (\ref{ic}).
Notation $\xi$, $\nu$, $d_{7}$, $d_{8}$, $d_{9}$ and $k_d$ are all constants. 
\label{theorem_2}
\end{theorem}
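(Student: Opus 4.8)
The plan is to use the regularized Lagrangian $\widetilde{L}_p$ of (\ref{ALM_3}) — augmented by a few delay-correction terms — as a Lyapunov function and to prove a one-step ``sufficient-decrease'' inequality valid up to controllable error. Fix an iteration $k$ in a window in which the cutting-plane set is unchanged. From the primal updates (\ref{mu_update})--(\ref{v_update}): since $\widetilde{L}_p$ coincides with $L_p$ in the primal blocks and has Lipschitz gradient on the bounded domain of Assumption~\ref{assumption_1}, the descent lemma yields a decrease of order $-\tfrac12\big(\tfrac{1}{\eta_{\boldsymbol\mu}}\sum_l\|\boldsymbol\mu_l^{k+1}-\boldsymbol\mu_l^k\|^2+\cdots+\tfrac{1}{\eta_{\mathbf x}}\|\mathbf x^{k+1}-\mathbf x^k\|^2\big)$ plus cross terms arising because the worker blocks use the stale iterate $\mathbf x^{\hat k_l}$ (and stale $\boldsymbol\mu_l,\mathbf r_l,\mathbf p_l$) rather than those at time $k$. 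From the dual updates (\ref{lambda_update})--(\ref{gamma_update}): the regularizers $-\tfrac{c^k_{\boldsymbol\lambda}}{2}\|\boldsymbol\lambda_l\|^2,\dots,-\tfrac{c^k_\gamma}{2}\gamma_s^2$ make $\widetilde{L}_p$ strongly concave in each dual block, so projected gradient ascent raises $\widetilde{L}_p$ by a controlled amount $O(\eta_{\boldsymbol\lambda}\|\nabla_{\boldsymbol\lambda}\widetilde L_p^k\|^2+\cdots)$. Finally, the bounded-delay condition ($\hat k_l\ge k-\tau$, every worker syncing within $\tau$ iterations) bounds the cross terms by $\tau\sum_{j=k-\tau}^{k-1}\big(\|\mathbf x^{j+1}-\mathbf x^j\|^2+\sum_l\|\boldsymbol\mu_l^{j+1}-\boldsymbol\mu_l^j\|^2+\cdots\big)$, which through the movement bounds of Assumption~\ref{assumption_1} is $O(\tau k_1\vartheta)$.

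Two further effects break monotonicity of $\widetilde{L}_p$ across iterations. The coefficients $c^k_\bullet=\tfrac{1}{\eta_\bullet(k+1)^{1/4}}$ decrease, so switching them up to $c^{k+1}_\bullet$ raises the value by at most $\tfrac{c^k_\bullet-c^{k+1}_\bullet}{2}Mw_\bullet^2$ (by boundedness); these telescope to a finite constant. Whenever $(k+1)\bmod w=0$ and $k<K_1$, appending a cutting plane adds $\gamma_s(\mathbf b_s^\top\mathbf x+\kappa_s+q_s^2)-\tfrac{c^k_\gamma}{2}\gamma_s^2$, an $O(1)$ jump by boundedness, and at most $O(K_1/w)$ such jumps occur, summing to a constant depending on $K_1$; by Theorem~\ref{theorem_1}, for $k\ge K_1$ the problem (\ref{distributed_3}) is fixed and $\widetilde{L}_p$ becomes a genuine potential.

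Summing the one-step inequality from $k=0$ to $K-1$: the telescoped term $\widetilde{L}_p^K-\widetilde{L}_p^0$ is bounded by a constant (all variables bounded, $|\mathcal P^k|\le P$); the regularization and cutting-plane increments sum to constants (the latter $K_1$-dependent); and — crucially — the prescribed choices of $\eta_{\boldsymbol\mu},\eta_{\mathbf r},\eta_{\mathbf p},\eta_{\mathbf x},\eta_q$ (scaled by $1/(\underline{c_\bullet})^2$ and $1/w_\bullet^2$) make the staleness cross terms absorbable into the primal-decrease terms. Hence $\sum_{k<K}\|\nabla\widetilde L_p^k\|^2\le C_0+C_1(K_1)+C_2(\tau)+C_3\,\tau k_1\vartheta\,K$ with $C_3$ arbitrarily small after absorption. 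Passing back to $\nabla G^k=\nabla L_p^k$: since $L_p$ and $\widetilde L_p$ differ only in dual blocks, $\|\nabla G^k\|^2\le 2\|\nabla\widetilde L_p^k\|^2+2\big((c^k_{\boldsymbol\lambda})^2Mw_{\boldsymbol\lambda}^2+(c^k_{\boldsymbol\alpha})^2Mw_{\boldsymbol\alpha}^2+(c^k_{\boldsymbol\beta})^2Mw_{\boldsymbol\beta}^2+(c^k_\gamma)^2Pw_\gamma^2\big)$, and $(c^k_\bullet)^2\propto(k+1)^{-1/2}$ drops below $\epsilon/4$ once $k+1\gtrsim\big(\tfrac{Mw_{\boldsymbol\lambda}^2}{\eta_{\boldsymbol\lambda}^2}+\tfrac{Mw_{\boldsymbol\alpha}^2}{\eta_{\boldsymbol\alpha}^2}+\tfrac{Mw_{\boldsymbol\beta}^2}{\eta_{\boldsymbol\beta}^2}+\tfrac{Pw_\gamma^2}{\eta_\gamma^2}\big)^2/\epsilon^2$. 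Choosing $\vartheta$ so small that $C_3\tau k_1\vartheta\le\epsilon/8$ and using Definition~\ref{definition} (so $\|\nabla G^k\|^2>\epsilon$ for $k<K(\epsilon)$), the averaged bound forces $K(\epsilon)$ below that first threshold plus a term of order $\big(C_1(K_1)+C_2(\tau)\big)/\epsilon+\sqrt{K_1}$; matching $C_2(\tau)$ with the $(\tau-1)$, $d_7+k_d\tau$, $d_8$, $d_9$ factors and $C_1(K_1)$ with $K_1+2$ reproduces the maximum in (\ref{ic}).

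The step I expect to be the main obstacle is establishing the one-step inequality. Primal descent and dual ascent move $\widetilde L_p$ in opposite directions, so a net progress term survives only through a careful cancellation in which the strong-concavity gains of the regularizers, the Lipschitz constants of the quadratic/Hadamard-square constraints, and the exact step-size formulas must all be balanced; simultaneously the asynchronous cross terms $\|\mathbf x^k-\mathbf x^{\hat k_l}\|^2$, $\|\boldsymbol\mu_l^k-\boldsymbol\mu_l^{\hat k_l}\|^2$ and their dual analogues have to be reabsorbed into that same progress term — which is exactly why $\eta_{\boldsymbol\mu},\eta_{\mathbf r},\eta_{\mathbf p},\eta_{\mathbf x},\eta_q$ are tied to $1/(\underline{c_\bullet})^2$ and $1/w_\bullet^2$ — and the bookkeeping of cutting-plane entry (and the growing dimensions of $q$ and $\gamma$) must stay consistent through the telescoping. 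Once that inequality is in hand, the telescoping and the inversion for $K(\epsilon)$ are routine.
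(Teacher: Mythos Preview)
Your high-level strategy --- descent-lemma-plus-telescoping, then convert $\nabla\widetilde G$ to $\nabla G$ via the regularizer bound --- matches the paper, and your final paragraph correctly identifies the $(c_\bullet^k)^2\propto(k+1)^{-1/2}$ mechanism that produces the first branch of the $\max$ in~(\ref{ic}). But two steps in the body are genuine gaps.

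First, $\widetilde L_p$ by itself cannot serve as the Lyapunov function. Dual ascent \emph{increases} $\widetilde L_p$ by roughly $\eta_{\boldsymbol\lambda}\|\nabla_{\boldsymbol\lambda}\widetilde L_p\|^2+\cdots$, and this is not dominated by the primal decrease: the latter is of order $-\tfrac{\eta_{\boldsymbol\mu}}{2}\|\nabla_{\boldsymbol\mu}\widetilde L_p\|^2-\cdots$, so the net one-step change carries the wrong sign on the dual blocks of $\|\nabla\widetilde L_p\|^2$. The paper instead builds a potential $F^k$ equal to $L_p$ (not $\widetilde L_p$) plus weighted two-step dual-difference terms $S_i^{k+1}\propto\tfrac{1}{\eta_\bullet^2 c_\bullet^{k+1}}\|\bullet^{k+1}-\bullet^k\|^2$ and subtracted terms $-\tfrac{7}{2\eta_\bullet}\|\bullet^{k+1}-\bullet^k\|^2-\tfrac{c_\bullet^k}{2}\|\bullet^{k+1}\|^2$; these extra pieces are exactly what convert the dual ascent into a \emph{negative} contribution $-\tfrac{1}{10\eta_\bullet}\|\bullet^{k+1}-\bullet^k\|^2$ in $F^{k+1}-F^k$ (Lemma~\ref{lemma_3}). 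Without this augmentation your claimed bound $\sum_k\|\nabla\widetilde L_p^k\|^2\le\text{const}$ does not follow.

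Second, you may not ``choose $\vartheta$ so small that $C_3\tau k_1\vartheta\le\epsilon/8$'': $\vartheta$ is fixed in Assumption~\ref{assumption_1} as a \emph{lower} bound $\|\mathbf x^{k+1}-\mathbf x^k\|^2\ge\vartheta$, not a tunable parameter. The paper's use of it is the opposite of yours: from $\|\mathbf x^{\bar k_l}-\mathbf x^k\|^2\le\tau k_1\vartheta\le\tau k_1\|\mathbf x^{k+1}-\mathbf x^k\|^2$ the stale-$\mathbf x$ cross term is replaced by a multiple of the \emph{current} step and then absorbed into the primal descent (this is where the $6L\tau k_1$ coefficient on $\|\mathbf x^{k+1}-\mathbf x^k\|^2$ in~(\ref{th_3}) comes from). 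Your accounting instead leaves an $O(\tau k_1\vartheta\,K)$ residual that grows linearly with $K$ and cannot be discharged.
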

Due to lack of space, we relegate the complete proof to the \cref{complete-proof}.

\begin{algorithm}[h] 
    \SetAlgoLined
    \KwIn{ Iteration variable $k = 0$. Initialize $\{\boldsymbol{\mu}^{0}_{l}\}$, $\{ \mathbf{r}_{l}^{0}\}$, $\{ \mathbf{p}_{l}^{0}\}$, $\{ q_{s}^{0}\}$, $\mathbf{x}^{0}$,  $\{{\boldsymbol{\lambda}}_{l}^{0}\}$, $\{{\boldsymbol{\alpha}}_{l}^{0}\}$, $\{{\boldsymbol{\beta}}_{l}^{0}\}$, $\{\gamma_{s}^{0}\}$.}
    \Repeat{termination}
    {
        \For {workers} {
        updates local variables $\boldsymbol{\mu}_{l}^{k+1}$,$\mathbf{r}_{l}^{k+1}$,$\mathbf{p}_{l}^{k+1}$ according to (\ref{mu_update}), (\ref{r_update}) and (\ref{p_update}); }
        $\boldsymbol{\mu}_{l}^{k+1}$, $\mathbf{r}_{l}^{k+1}$, $\mathbf{p}_{l}^{ k+1}$ in active workers are transmitted to master;
        
        \For {master} {
        updates variables $\{ q_{s}^{k+1}\}$, $\mathbf{x}^{k+1}$, $\{{\boldsymbol{\lambda}}_{l}^{k+1}\}$, $\{{\boldsymbol{\alpha}}_{l}^{k+1}\}$, $\{{\boldsymbol{\beta}}_{l}^{k+1}\}$, $\{\gamma_{s}^{k+1}\}$  according to (\ref{q_update}), (\ref{v_update}),  (\ref{lambda_update}), (\ref{alpha_update}), (\ref{beta_update}) and (\ref{gamma_update}); }
        master broadcasts $\{q_{s}^{k+1}\}$, $\mathbf{x}^{k+1}$,  $\{{\boldsymbol{\lambda}}_{l}^{k+1}\}$, $\{{\boldsymbol{\alpha}}_{l}^{k+1}\}$, $\{{\boldsymbol{\beta}}_{l}^{k+1}\}$, $\{ \gamma_{s}^{k+1} \}$ to active workers; 
    
        \If{ $(k+1) \bmod w = 0$ and $k \leq K_{1}$ }    {
        master computes $g_i(\mathbf{x}^{k+1})$ according to (\ref{function_g});
    
        master updates $\mathcal{P}^{k+1}$, $\{q^{k+1}\}$ and $ \{ {\gamma}^{k+1} \} $ according to (\ref{cuting_update}), (\ref{slack_cutting_update}), (\ref{dual_cutting_update}) and broadcasts them to all workers; }
        $k = k + 1$;
    }    
  \KwOut {$\{\boldsymbol{\mu}^{k}_{l}\}$, $\{ \mathbf{r}_{l}^{k}\}$, $\{ \mathbf{p}_{l}^{k}\}$, $\{ q_{s}^{k}\}$, $\mathbf{x}^{k}$, $\{{\boldsymbol{\lambda}}_{l}^{k}\}$, $\{{\boldsymbol{\alpha}}_{l}^{k}\}$, $\{{\boldsymbol{\beta}}_{l}^{k}\}$, $\{\gamma_{s}^{k}\}$}
  \caption{Triadic-OCD}
  \label{algorithm_1}
\end{algorithm}

\section{Experiment}    
As discussed before, the problem considered in this paper subsumes the detection of FDIA in smart grids.
Therefore, we conduct extensive experiments to show the effectiveness of Triadic-OCD for the FDIA detection task.
In this section, we first briefly introduce the background of the smart grid, followed by a presentation of the comprehensive experimental results.
\subsection{Dateset} 
A smart grid is an advanced electrical system that integrates information and communication technology to optimize energy supply, demand, and distribution. Given the potential for cyber-attacks to cause catastrophic consequences, the detection of false data injection attacks (FDIA) in smart grids is an important and extensively studied problem \cite{kosut2011malicious,cui2012coordinated}. 
In the context of a smart grid system comprising $(N+1)$ buses and $M$ meters, the dynamic DC power flow model of the system can be exactly expressed as equation (\ref{equ-1}), where $\boldsymbol{\theta}^{(t)} \in \mathbb{R}^{N}$ represents the phase angles (one reference angle) and  $\mathbf{y}^{(t)} \in \mathbb{R}^{M}$ represents the meter readings. $\mathbf{H} \in \mathbb{R}^{M \times N}$ is the measurement matrix that is determined by the topology of the system and the admittance of each transmission line. 
In wide-area monitoring, the power grid is divided into multiple sub-regions, with each having limited access to its own meter measurements and communicating with other sub-regions and the control center through the wireless medium.
The centralized setup may not be feasible due to power and bandwidth constraints, leading to a growing interest in distributed detection to minimize communication overhead.

In our experiments, we utilized the measurement data generated from the IEEE-14 bus power system, which can be regarded as a benchmark dataset in current research studies.
The IEEE 14-bus system  as well as the
system matrix can be divided into four sub-regions \cite{methodin14}.
In our experiment, we assume that the system matrix can be accurately determined in all areas except for the $4^{th}$ area.
Suppose attackers utilize a sequence of randomly generated fabricated vectors to compromise the meter reading $\mathbf{y}^{(t)}$ from the time instant $t_a$.
The generation process of the injected attacks is as follows,
\setcounter{equation}{34}
\begin{equation}
\mathbf{a}^{(t)} = \mathbf{P}_{\mathbf{H}}^{\perp}\mathbf{u}, {u}_i \sim \mathbf{U}(0.1,1),
\label{generate-attacks}
\end{equation}
where
$
\mathbf{P}_{\mathbf{H}}^{\perp} \triangleq \mathbf{I}-\mathbf{H}\left(\mathbf{H}^{T} \mathbf{H}\right)^{-1} \mathbf{H}^{T}
$.
Given the fluctuations in admittance caused by factors like environmental disturbances, the detection algorithms employed in the experiment utilize imprecise estimates of $\mathbf{H}$ to identify anomaly vectors injected into smart grid systems.
We provide the inaccurate estimates of $\mathbf{H}$ in Appendix \ref{Ap_D}, as well as the corresponding detailed settings of various uncertainty sets.

\begin{figure*}[ht]
  \centering
  \begin{minipage}[t]{0.48\textwidth}
    \centering
    \begin{minipage}[t]{\textwidth}
      \centering
%
%
\definecolor{mycolor1}{rgb}{0.00000,0.44700,0.74100}%
\definecolor{mycolor2}{rgb}{0.85000,0.32500,0.09800}%
\definecolor{mycolor3}{rgb}{0.92900,0.69400,0.12500}%
\definecolor{mycolor4}{rgb}{0.49400,0.18400,0.55600}%
\begin{tikzpicture}

\begin{axis}[%
width=2.286in,
height=1.8in,
at={(0.758in,0.481in)},
scale only axis,
xmin=0,
xmax=5,
tick label style={font=\tiny},
xtick={0,1,2,3,4,5},
xticklabels={{0},{1000},{2000},{4000},{8000},{16000}},
xlabel style={font=\color{white!15!black}},
xlabel style={font=\small},  
xlabel={False Alarm Period},
ymin=0,
ymax=16000,
ytick={   0, 4000, 8000, 12000, 16000},
ylabel style={font=\color{white!15!black}},
ylabel style={font=\small,yshift=-1ex},
ylabel={Average Detection Delay},
axis background/.style={fill=white},
xmajorgrids,
ymajorgrids,
grid style={dotted, black, opacity=1},
legend style={at={(0.03,0.97)}, anchor=north west, legend cell align=left, align=left, draw=white!15!black}
]
\addplot [color=mycolor1, line width=1.0pt, mark=o, mark options={solid, mycolor1}]
  table[row sep=crcr]{%
0	0\\
1	829.4\\
2	1651.3\\
3	3314.5\\
4	6627.6\\
5	13259.7\\
};
\addlegendentry{\small RGCUSUM}

\addplot [color=mycolor2, line width=1.0pt, mark=x, mark options={solid, mycolor2}]
  table[row sep=crcr]{%
0	0\\
1	891.3\\
2	1785.1\\
3	3560.1\\
4	7129.2\\
5	14256.5\\
};
\addlegendentry{\small WCD}

\addplot [color=mycolor3, line width=1.0pt, mark=+, mark options={solid, mycolor3}]
  table[row sep=crcr]{%
0	0\\
1	976.2\\
2	1955.4\\
3	3914.1\\
4	7802.7\\
5	15619.2\\
};
\addlegendentry{\small Adaptive CUSUM}

\addplot [color=mycolor4, line width=1.0pt, mark=triangle, mark options={solid, mycolor4}]
  table[row sep=crcr]{%
0	0\\
1	672.2\\
2	1346.3\\
3	2681.4\\
4	5375.2\\
5	10745.6\\
};
\addlegendentry{\small Triadic-OCD}

\end{axis}
\end{tikzpicture}%
    \end{minipage}
    \caption{Performance comparison of different detection algorithms.}
    \label{pic-1}
  \end{minipage}\hfill
  \begin{minipage}[t]{0.48\textwidth}
    \centering
    \begin{minipage}[t]{\textwidth}
      \centering
%
%
\definecolor{mycolor1}{rgb}{0.00000,0.44700,0.74100}%
\definecolor{mycolor2}{rgb}{0.85000,0.32500,0.09800}%
\definecolor{mycolor3}{rgb}{0.92900,0.69400,0.12500}%
\begin{tikzpicture}

\begin{axis}[%
width=2.286in,
height=1.8in,
at={(0.758in,0.481in)},
scale only axis,
xmin=0,
xmax=5,
tick label style={font=\tiny},
xtick={0,1,2,3,4,5},
xticklabels={{0},{1000},{2000},{4000},{8000},{16000}},
xlabel style={font=\color{white!15!black}},
xlabel style={font=\small},  
xlabel={False Alarm Period},
ymin=0,
ymax=12000,
ytick={   0, 2500, 5000, 7500, 10000},
ylabel style={font=\color{white!15!black}},
ylabel style={font=\small,yshift=-1ex},
ylabel={Average Detection Delay},
axis background/.style={fill=white},
xmajorgrids,
ymajorgrids,
grid style={dotted, black, opacity=1},
legend style={at={(0.03,0.97)}, anchor=north west, legend cell align=left, align=left, draw=white!15!black}
]
\addplot [color=mycolor1, line width=1.0pt, mark=o, mark options={solid, mycolor1}]
  table[row sep=crcr]{%
0	0\\
1	672.2\\
2	1346.3\\
3	2681.4\\
4	5375.2\\
5	10745.6\\
};
\addlegendentry{Polytope}

\addplot [color=mycolor2, line width=1.0pt, mark=x, mark options={solid, mycolor2}]
  table[row sep=crcr]{%
0	0\\
1	696.2\\
2	1392.4\\
3	2784.8\\
4	5569.6\\
5   11139.2\\
};
\addlegendentry{D-norm}

\addplot [color=mycolor3, line width=1.0pt, mark=triangle, mark options={solid, mycolor3}]
  table[row sep=crcr]{%
0	0\\
1	712.2\\
2	1424.4\\
3	2848.8\\
4	5697.6\\
5	11395.2\\
};
\addlegendentry{Ellipsoid}

\end{axis}
\end{tikzpicture}%
    \end{minipage}
    \caption{Performance of Triadic-OCD under various uncertainty Sets.}
    \label{pic-2}
  \end{minipage}
\end{figure*}
\begin{figure*}[ht]
  \centering
  \begin{minipage}[t]{0.48\textwidth}
    \centering
    \begin{minipage}[t]{\textwidth}
      \centering
%
%
\definecolor{mycolor1}{rgb}{0.06270,0.50190,0.73330}%
\definecolor{mycolor2}{rgb}{0.91760,0.50196,0.10980}%
\definecolor{mycolor3}{rgb}{0.72160,0.72160,0.72160}%
\begin{tikzpicture}

\begin{axis}[%
width=2.286in,
height=1.8in,
at={(0.758in,0.481in)},
scale only axis,
bar shift auto,
xmin=0.516666666666667,
xmax=10.4833333333333,
tick label style={font=\tiny},
xtick={1,2,3,4,5,6,7,8,9,10},
xticklabels={{310},{320},{330},{340},{350},{360},{370},{380},{390},{400}},
xlabel style={font=\color{white!15!black}},
xlabel={Upper Bound on the Detection Delay},
ymin=0,
ymax=1.5,
ylabel style={font=\color{white!15!black}},
ylabel={Successful Detection Rate},
axis background/.style={fill=white},
xmajorgrids,
ymajorgrids,
legend style={legend cell align=left, align=left, draw=white!15!black}
]
\addplot[ybar, bar width=0.167, fill=mycolor1, draw=black, area legend] table[row sep=crcr] {%
1	0.439\\
2	0.549\\
3	0.644\\
4	0.736\\
5	0.812\\
6	0.87\\
7	0.914\\
8	0.936\\
9	0.961\\
10	0.974\\
};
\addplot[forget plot, color=white!15!black] table[row sep=crcr] {%
0.516666666666667	0\\
10.4833333333333	0\\
};
\addlegendentry{ \small RGCUSUM}

\addplot[ybar, bar width=0.167, fill=mycolor2, draw=black, area legend] table[row sep=crcr] {%
1	0.852\\
2	0.921\\
3	0.964\\
4	0.979\\
5	0.99\\
6	0.993\\
7	0.997\\
8	1\\
9	1\\
10	1\\
};
\addplot[forget plot, color=white!15!black] table[row sep=crcr] {%
0.516666666666667	0\\
10.4833333333333	0\\
};
\addlegendentry{\small Triadic-OCD}

\addplot[ybar, bar width=0.167, fill=mycolor3, draw=black, area legend] table[row sep=crcr] {%
1	0.439\\
2	0.549\\
3	0.644\\
4	0.736\\
5	0.812\\
6	0.87\\
7	0.914\\
8	0.936\\
9	0.961\\
10	0.974\\
};
\addplot[forget plot, color=white!15!black] table[row sep=crcr] {%
0.516666666666667	0\\
10.4833333333333	0\\
};
\addlegendentry{\small WCD}

\end{axis}
\end{tikzpicture}%
    \end{minipage}
    \caption{Successful detection rate versus the corresponding upper bound on detection delay.}
    \label{pic-3}
  \end{minipage}\hfill
  \begin{minipage}[t]{0.48\textwidth}
    \centering
    \begin{minipage}[t]{\textwidth}
      \centering
      \input{Figures/asy_sy_3}
    \end{minipage}
    \caption{Convergence speed comparison between Triadic-OCD and its synchronous counterpart.}
    \label{asy_sy}
  \end{minipage}
\end{figure*}

\subsection{Numerical Results} 
As discussed in \cref{Related-work}, we emphasized that the conventional OCD methods are built upon various assumptions, such as the assumption of constant distributions before and after the change, as well as the requirement for the system state to adhere to specific conditions. As a result, these methods are unsuitable for the problem considered in this paper and we implemented the Adaptive CUSUM algorithm \cite{adaptive_cusum} to exemplify this fact. Additionally, we conducted a performance comparison between our algorithm and two others: the method proposed in \cite{methodin14} and the RGCUSUM algorithm \cite{zhangjiangfan}, which is claimed to be the state-of-the-art detector for the FDIA detection in smart grids. For simplicity, we name the 
wide-area cyber-attack detection method proposed in \cite{methodin14} as WCD.

The experimental results after 500 runs are shown in \cref{pic-1}. 
For a fair comparison, we adhere to the conventional metric for OCD task, that is, we compare the average detection delay of different algorithms under the same False Alarm Periods (FAP). 
FAP refers to the detector's stopping time when no change occurs, serving as a measure to assess the risk of false alarms.
From \cref{pic-1}, it is evident that our method consistently exhibit a smaller average detection delay for any given FAP. This observation underscores the superior performance achieved by Triadic-OCD.
The adaptive CUCUM exhibits the worst performance, 
primarily due to the inconsistency of its assumptions with the problem of interest. 
Adaptive CUCUM presupposes the Gaussian distribution of $\boldsymbol{\theta}^{(t)}$, which does not hold in our problem. 
In our experiment, both RGUCUSM and WCD demonstrate inferior performance compared to Triadic-OCD.
This is in line with our expectations since they can achieve commendable performance only when the system matrix $\mathbf H$ can be precisely determined. The imperfect knowledge about $\mathbf H$ leads to a rapid degradation of their detection performance. In contrast, Triadic-OCD effectively handles uncertainties within the system matrix.
We also conduct experiments to show the performance of Triadic-OCD when the system matrix $\mathbf{H}$ is assumed to belong to different uncertainty sets. The uncertainty sets we consider are commonly used in practical applications, including ellipsoid uncertainty set, D-norm uncertainty set, and polyhedron uncertainty set \cite{yang2014distributed}. The number of Monte Carlo runs is 500. 
As depicted in \cref{pic-2}, Triadic-OCD consistently demonstrates superior performance across diverse uncertainty sets, highlighting its generalization ability.

Furthermore, to demonstrate the robustness of Triadic-OCD against various attacking vectors, we generated $10^3$ instances of attacking vectors randomly to encompass a wide array of attack behaviors according to (\ref{generate-attacks}).
We calculate the success rate of different detectors under varying upper bounds on the detection delay. 
Specifically, the success rate refers to the proportion of attack vectors that can be successfully detected within the specified upper bound on the detection delay. 
We then plot the success rate corresponding to different upper bounds on detection delay in \cref{pic-3}. It can be seen from \cref{pic-3} that as the upper bound increases, the successful detection rates of all detectors increase as expected.
The proposed detector consistently outperforms RGCUSUM and WCD in successful detection rate, thus underscoring the robustness of Triadic-OCD against diverse attacking vectors.

Additionally, we also compare the convergence rate of Triadic-OCD under asynchronous and synchronous conditions to highlight the benefits of asynchronous variable updates in problem-solving. Following \cite{cohen2021asynchronous}, the delay of each worker is assumed to obey log-normal distribution LN(1, 0.5).
Recall that $S$ denotes the number of active workers required for updating the master node, and $\tau$ represents the maximum iteration interval for the communication between each worker and the master. 
For the asynchronous setting, we set $S = 10$ and $\tau = 10$. In the synchronous setup, the master can update its variables only after receiving updates from all workers. As shown in \cref{asy_sy}, Triadic-OCD exhibits significantly faster convergence in the asynchronous mode compared to its synchronous counterpart. This is attributed to asynchronous variable updates, which prevent the algorithm from being hindered by the workers with high delays during each iteration, thus demonstrating the efficiency of our approach.

\section{Conclusion}
Existing studies often conduct online change detection with perfect knowledge of system parameters—a presumption that proves unfeasible in practical scenarios. Moreover, these studies typically focus on either centralized or synchronously distributed settings, which can lead to privacy breaches, straggler issues, and high communication costs.
As a remedy, we develop an asynchronous framework for OCD with provable robustness, optimality, and convergence. 
To our best knowledge, this work represents the first step that tackles OCD with parameter uncertainties in an asynchronous setting. We also provide 
non-asymptotic theoretical analysis for the convergence property of triadic-OCD. Extensive experiments have been further conducted to elucidate the efficiency and effectiveness of the proposed algorithm.

\section*{Impact Statement}
This paper presents work whose goal is to advance the field of Machine Learning. There are many potential societal consequences of our work, none which we feel must be specifically highlighted here.

\newpage
\bibliography{example_paper}

\begin{thebibliography}{38}
\providecommand{\natexlab}[1]{#1}
\providecommand{\url}[1]{\texttt{#1}}
\expandafter\ifx\csname urlstyle\endcsname\relax
  \providecommand{\doi}[1]{doi: #1}\else
  \providecommand{\doi}{doi: \begingroup \urlstyle{rm}\Url}\fi

\bibitem[Andersson et~al.(2006)Andersson, Bock, and Fris{\'e}n]{econometrics}
Andersson, E., Bock, D., and Fris{\'e}n, M.
\newblock Some statistical aspects of methods for detection of turning points
  in business cycles.
\newblock \emph{Journal of Applied Statistics}, 33\penalty0 (3):\penalty0
  257--278, 2006.

\bibitem[Barnett et~al.(2001)Barnett, Pierce, and Schnur]{climate}
Barnett, T.~P., Pierce, D.~W., and Schnur, R.
\newblock Detection of anthropogenic climate change in the world's oceans.
\newblock \emph{Science}, 292\penalty0 (5515):\penalty0 270--274, 2001.

\bibitem[Boyd \& Vandenberghe(2007)Boyd and Vandenberghe]{cut_1}
Boyd, S. and Vandenberghe, L.
\newblock Localization and cutting-plane methods.
\newblock \emph{From Stanford EE 364b lecture notes}, 2007.

\bibitem[Boyd \& Vandenberghe(2004)Boyd and Vandenberghe]{convex_optimization}
Boyd, S.~P. and Vandenberghe, L.
\newblock \emph{Convex optimization}.
\newblock Cambridge university press, 2004.

\bibitem[Cohen et~al.(2021)Cohen, Daniely, Drori, Koren, and
  Schain]{cohen2021asynchronous}
Cohen, A., Daniely, A., Drori, Y., Koren, T., and Schain, M.
\newblock Asynchronous stochastic optimization robust to arbitrary delays.
\newblock \emph{Advances in Neural Information Processing Systems},
  34:\penalty0 9024--9035, 2021.

\bibitem[Cui et~al.(2012)Cui, Han, Kar, Kim, Poor, and
  Tajer]{cui2012coordinated}
Cui, S., Han, Z., Kar, S., Kim, T.~T., Poor, H.~V., and Tajer, A.
\newblock Coordinated data-injection attack and detection in the smart grid: A
  detailed look at enriching detection solutions.
\newblock \emph{IEEE Signal Processing Magazine}, 29\penalty0 (5):\penalty0
  106--115, 2012.

\bibitem[Dou et~al.(2019)Dou, Yang, and Poor]{dou2019pc}
Dou, S., Yang, K., and Poor, H.~V.
\newblock Pc2a: Predicting collective contextual anomalies via lstm with deep
  generative model.
\newblock \emph{IEEE Internet of Things Journal}, 6\penalty0 (6):\penalty0
  9645--9655, 2019.

\bibitem[Franc et~al.(2011)Franc, Sonnenburg, and Werner]{cut_2}
Franc, V., Sonnenburg, S., and Werner, T.
\newblock Cutting plane methods in machine learning.
\newblock \emph{Optimization for Machine Learning}, pp.\  185--218, 2011.

\bibitem[Gould et~al.(2016)Gould, Fernando, Cherian, Anderson, Cruz, and
  Guo]{estimate_h_1}
Gould, S., Fernando, B., Cherian, A., Anderson, P., Cruz, R.~S., and Guo, E.
\newblock On differentiating parameterized argmin and argmax problems with
  application to bi-level optimization.
\newblock \emph{arXiv preprint arXiv:1607.05447}, 2016.

\bibitem[Hare \& Kaplan(2022)Hare and Kaplan]{WULR}
Hare, J.~Z. and Kaplan, L.
\newblock Uncertainty-aware quickest change detection: An experimental study.
\newblock In \emph{2022 25th International Conference on Information Fusion
  (FUSION)}, pp.\  1--8. IEEE, 2022.

\bibitem[Hare et~al.(2021)Hare, Kaplan, and Veeravalli]{ULR}
Hare, J.~Z., Kaplan, L., and Veeravalli, V.~V.
\newblock Toward uncertainty aware quickest change detection.
\newblock In \emph{2021 IEEE 24th International Conference on Information
  Fusion (FUSION)}, pp.\  1--8. IEEE, 2021.

\bibitem[Huang et~al.(2011)Huang, Li, Campbell, and Han]{adaptive_cusum}
Huang, Y., Li, H., Campbell, K.~A., and Han, Z.
\newblock Defending false data injection attack on smart grid network using
  adaptive cusum test.
\newblock In \emph{2011 45th Annual Conference on Information Sciences and
  Systems}, pp.\  1--6. IEEE, 2011.

\bibitem[Huang et~al.(2014)Huang, Tang, Cheng, Li, Campbell, and
  Han]{QCD_in_cyber_attack}
Huang, Y., Tang, J., Cheng, Y., Li, H., Campbell, K.~A., and Han, Z.
\newblock Real-time detection of false data injection in smart grid networks:
  An adaptive cusum method and analysis.
\newblock \emph{IEEE Systems Journal}, 10\penalty0 (2):\penalty0 532--543,
  2014.

\bibitem[Ji et~al.(2021)Ji, Yang, and Liang]{K_round_1}
Ji, K., Yang, J., and Liang, Y.
\newblock Bilevel optimization: Convergence analysis and enhanced design.
\newblock In \emph{International conference on machine learning}, pp.\
  4882--4892. PMLR, 2021.

\bibitem[Jiao et~al.(2022)Jiao, Yang, Wu, Song, and Jian]{ADBO}
Jiao, Y., Yang, K., Wu, T., Song, D., and Jian, C.
\newblock Asynchronous distributed bilevel optimization.
\newblock In \emph{The Eleventh International Conference on Learning
  Representations}, 2022.

\bibitem[Kosut et~al.(2011)Kosut, Jia, Thomas, and Tong]{kosut2011malicious}
Kosut, O., Jia, L., Thomas, R.~J., and Tong, L.
\newblock Malicious data attacks on the smart grid.
\newblock \emph{IEEE Transactions on Smart Grid}, 2\penalty0 (4):\penalty0
  645--658, 2011.

\bibitem[Kurt et~al.(2018{\natexlab{a}})Kurt, Y{\i}lmaz, and Wang]{GLR}
Kurt, M.~N., Y{\i}lmaz, Y., and Wang, X.
\newblock Distributed quickest detection of cyber-attacks in smart grid.
\newblock \emph{IEEE Transactions on Information Forensics and Security},
  13\penalty0 (8):\penalty0 2015--2030, 2018{\natexlab{a}}.

\bibitem[Kurt et~al.(2018{\natexlab{b}})Kurt, Y{\i}lmaz, and
  Wang]{QCD_in_cyber_attack_2}
Kurt, M.~N., Y{\i}lmaz, Y., and Wang, X.
\newblock Real-time detection of hybrid and stealthy cyber-attacks in smart
  grid.
\newblock \emph{IEEE Transactions on Information Forensics and Security},
  14\penalty0 (2):\penalty0 498--513, 2018{\natexlab{b}}.

\bibitem[Li et~al.(2014)Li, Y{\i}lmaz, and Wang]{methodin14}
Li, S., Y{\i}lmaz, Y., and Wang, X.
\newblock Quickest detection of false data injection attack in wide-area smart
  grids.
\newblock \emph{IEEE Transactions on Smart Grid}, 6\penalty0 (6):\penalty0
  2725--2735, 2014.

\bibitem[Liu et~al.(2021{\natexlab{a}})Liu, Liu, Zeng, and Zhang]{K_round_2}
Liu, R., Liu, Y., Zeng, S., and Zhang, J.
\newblock Towards gradient-based bilevel optimization with non-convex followers
  and beyond.
\newblock \emph{Advances in Neural Information Processing Systems},
  34:\penalty0 8662--8675, 2021{\natexlab{a}}.

\bibitem[Liu et~al.(2021{\natexlab{b}})Liu, Liu, Zeng, and
  Zhang]{priceofrobustness}
Liu, R., Liu, Y., Zeng, S., and Zhang, J.
\newblock Towards gradient-based bilevel optimization with non-convex followers
  and beyond.
\newblock \emph{Advances in Neural Information Processing Systems},
  34:\penalty0 8662--8675, 2021{\natexlab{b}}.

\bibitem[Molloy \& Ford(2017)Molloy and Ford]{Misspecified}
Molloy, T.~L. and Ford, J.~J.
\newblock Misspecified and asymptotically minimax robust quickest change
  detection.
\newblock \emph{IEEE Transactions on Signal Processing}, 65\penalty0
  (21):\penalty0 5730--5742, 2017.

\bibitem[Nishimori et~al.(2011)Nishimori, Koide, Kuwahara, Honmay, Yamada, and
  Hideo]{MIMO-blockage}
Nishimori, K., Koide, Y., Kuwahara, D., Honmay, N., Yamada, H., and Hideo, M.
\newblock Mimo sensor-evaluation on antenna arrangement.
\newblock In \emph{Proceedings of the 5th European Conference on Antennas and
  Propagation (EUCAP)}, pp.\  2771--2775. IEEE, 2011.

\bibitem[Qian et~al.(2019)Qian, Zhu, Tang, Jin, Sun, and Li]{boundness}
Qian, Q., Zhu, S., Tang, J., Jin, R., Sun, B., and Li, H.
\newblock Robust optimization over multiple domains.
\newblock In \emph{Proceedings of the AAAI Conference on Artificial
  Intelligence}, volume~33, pp.\  4739--4746, 2019.

\bibitem[Subramanya \& Riggio(2021)Subramanya and Riggio]{privacy}
Subramanya, T. and Riggio, R.
\newblock Centralized and federated learning for predictive vnf autoscaling in
  multi-domain 5g networks and beyond.
\newblock \emph{IEEE Transactions on Network and Service Management},
  18\penalty0 (1):\penalty0 63--78, 2021.

\bibitem[Tartakovsky et~al.(2014)Tartakovsky, Nikiforov, and
  Basseville]{GCUSUM}
Tartakovsky, A., Nikiforov, I., and Basseville, M.
\newblock \emph{Sequential analysis: Hypothesis testing and changepoint
  detection}.
\newblock CRC Press, 2014.

\bibitem[Tartakovsky et~al.(2006)Tartakovsky, Rozovskii, Blazek, and
  Kim]{network_security}
Tartakovsky, A.~G., Rozovskii, B.~L., Blazek, R.~B., and Kim, H.
\newblock A novel approach to detection of intrusions in computer networks via
  adaptive sequential and batch-sequential change-point detection methods.
\newblock \emph{IEEE transactions on signal processing}, 54\penalty0
  (9):\penalty0 3372--3382, 2006.

\bibitem[Unnikrishnan et~al.(2011)Unnikrishnan, Veeravalli, and Meyn]{Minimax}
Unnikrishnan, J., Veeravalli, V.~V., and Meyn, S.~P.
\newblock Minimax robust quickest change detection.
\newblock \emph{IEEE Transactions on Information Theory}, 57\penalty0
  (3):\penalty0 1604--1614, 2011.

\bibitem[Weber et~al.(2006)Weber, Sklavos, and Meurer]{error_1}
Weber, T., Sklavos, A., and Meurer, M.
\newblock Imperfect channel-state information in mimo transmission.
\newblock \emph{IEEE Transactions on Communications}, 54\penalty0 (3):\penalty0
  543--552, 2006.

\bibitem[Xie(2022)]{wasserstein}
Xie, L.
\newblock Minimax robust quickest change detection using wasserstein ambiguity
  sets.
\newblock In \emph{2022 IEEE International Symposium on Information Theory
  (ISIT)}, pp.\  1909--1914. IEEE, 2022.

\bibitem[Xu et~al.(2023{\natexlab{a}})Xu, Zhang, Xu, and Lan]{regularized}
Xu, Z., Zhang, H., Xu, Y., and Lan, G.
\newblock A unified single-loop alternating gradient projection algorithm for
  nonconvex--concave and convex--nonconcave minimax problems.
\newblock \emph{Mathematical Programming}, pp.\  1--72, 2023{\natexlab{a}}.

\bibitem[Xu et~al.(2023{\natexlab{b}})Xu, Zhang, Xu, and Lan]{xu}
Xu, Z., Zhang, H., Xu, Y., and Lan, G.
\newblock A unified single-loop alternating gradient projection algorithm for
  nonconvex--concave and convex--nonconcave minimax problems.
\newblock \emph{Mathematical Programming}, pp.\  1--72, 2023{\natexlab{b}}.

\bibitem[Yang et~al.(2021)Yang, Ji, and Liang]{estimate_h_2}
Yang, J., Ji, K., and Liang, Y.
\newblock Provably faster algorithms for bilevel optimization.
\newblock \emph{Advances in Neural Information Processing Systems},
  34:\penalty0 13670--13682, 2021.

\bibitem[Yang et~al.(2014)Yang, Huang, Wu, Wang, and
  Chiang]{yang2014distributed}
Yang, K., Huang, J., Wu, Y., Wang, X., and Chiang, M.
\newblock Distributed robust optimization ({DRO}), part {I}: {Framework} and
  example.
\newblock \emph{Optimization and Engineering}, 15\penalty0 (1):\penalty0
  35--67, 2014.

\bibitem[Yang et~al.(2016)Yang, Liu, Sun, Yang, and Chen]{yang2016deep}
Yang, K., Liu, R., Sun, Y., Yang, J., and Chen, X.
\newblock Deep network analyzer (dna): A big data analytics platform for
  cellular networks.
\newblock \emph{IEEE Internet of Things Journal}, 4\penalty0 (6):\penalty0
  2019--2027, 2016.

\bibitem[Zhang(2019)]{ICASSP}
Zhang, J.
\newblock Quickest detection of time-varying false data injection attacks in
  dynamic smart grids.
\newblock In \emph{ICASSP 2019-2019 IEEE International Conference on Acoustics,
  Speech and Signal Processing (ICASSP)}, pp.\  2432--2436. IEEE, 2019.

\bibitem[Zhang \& Wang(2021)Zhang and Wang]{zhangjiangfan}
Zhang, J. and Wang, X.
\newblock Low-complexity quickest change detection in linear systems with
  unknown time-varying pre-and post-change distributions.
\newblock \emph{IEEE Transactions on Information Theory}, 67\penalty0
  (3):\penalty0 1804--1824, 2021.

\bibitem[Zhang \& Kwok(2014)Zhang and Kwok]{asynchronous_setting}
Zhang, R. and Kwok, J.
\newblock Asynchronous distributed admm for consensus optimization.
\newblock In \emph{International conference on machine learning}, pp.\
  1701--1709. PMLR, 2014.

\end{thebibliography}
\bibliographystyle{icml2024}

\newpage
\appendix
\onecolumn

\section{Experimental Settings}
\label{Ap_D}
The IEEE 14-bus system  as well as the
system matrix can be divided into four sub-regions .
The true system matrix for the $4^{th}$ area is as follows \cite{methodin14},
\begin{equation}
\mathbf{H}_4 = \begin{bmatrix}
-1 & 3 & -1 \\
0 & -1 & 0 \\
0 & 0 & -1 \\
0 & -1 & 1 \\
0 & -1 & 2 
\end{bmatrix}.
\end{equation}
In our experiment, we assume that the system matrix can be accurately determined in all areas except for the $4^{th}$ area.

The polyhedron uncertainty of $\mathbf{H}_4$ is set to be $ \mathbf{D}_{i} \mathbf{h}_{i}  \leq  \mathbf{c}_{i},~\forall i, $
where $\mathbf{h}_{i}$ represents the $i$-th column of $\mathbf{H}_4$. 
$\mathbf{D}_{1}$, $\mathbf{D}_{2}$ and $\mathbf{D}_{3}$ are set to be 
$$ \mathbf{D}_{1} = \mathbf{D}_{2} = \mathbf{D}_{3} = \begin{bmatrix}~  \mathbf{I} ~ \\ ~ \mathbf{-I} ~~  \end{bmatrix},$$ 
where  $\mathbf{I}$  represents the $5 \times 5$ identity matrix.  $\mathbf{c}_{1} $, $\mathbf{c}_{2}$ and $\mathbf{c}_{3}$ are set to be
\begin{equation}
\begin{aligned}
& \mathbf{c}_{1} = [ -0.5,0.5,0.5,0.5,0.5,1.5,0.5,0.5,0.5,0.5]^{T}, \\
& \mathbf{c}_{2} = [ 3.5,-0.5,0.5,-0.5,-0.5,-2.5,1.5,0.5,1.5,1.5]^{T}, \\
& \mathbf{c}_{3} = [ -0.5,0.5,-0.5,1.5,2.5,1.5,0.5,1.5,-0.5,-1.5]^{T}. 
\nonumber
\end{aligned}
\end{equation}
As for the ellipsoid uncertainty set, the uncertainty of each $\mathbf{h}_i$ is described as follows,
\begin{equation}
\nonumber
\mathbf{h}_i \in \{\overline{ \mathbf{h}}_i +\mathbf{u}\mid  \|\mathbf{u}\|_2\leq 0.36\}, \quad \forall i,
\end{equation}
where $\overline{ \mathbf{h}}_i $ is estimate of the $i$-th column of $\mathbf{H}_t$. In our experiment, $\overline{ \mathbf{h}}_i $ are set to be
\begin{equation}
\begin{aligned}
& \overline{ \mathbf{h}}_1  = [ -1.0,0.1,0.3,-0.2,0.0]^{T}, \\
& \overline{ \mathbf{h}}_2  = [ 3.0,-0.7,0.2,-1.3,-0.9]^{T}, \\
& \overline{ \mathbf{h}}_3  = [ -1.1,0.2,-0.6,0.7,2.0]^{T}. 
\nonumber
\end{aligned}
\end{equation}

For any vector belonging to the D-norm uncertain set, it is assumed that the vector has at most $\Gamma$ uncertain components, and each component falls within the error interval determined by $\hat{u}$ \cite{yang2014distributed}. For all the $\mathbf{h}_{i}$ in our experiment, the parameter $\Gamma$ and $\hat{u}$ are set to be $4$ and $0.5$, respectively.


\clearpage
\section{ Derivation of (\ref{v_t}) }
\label{Ad_A}
Since $\mathbf{n}^{(t)} \stackrel{i . i . d .}{\sim} \mathcal{N}\left(\mathbf{0}, \sigma_n^{2} \mathbf{I}_{M}\right)$ , according to (\ref{equ-1}) we have that,
\begin{equation}
\label{pre_pdf}
\begin{aligned}
f_p\left(\mathbf{y}^{(t)}\middle|\boldsymbol{\theta}^{(t)},\mathbf{H}\right) & = \frac{1}{\left(2\pi\sigma_n^2\right)^{\frac{M}{2}}}\exp\bigg[-\frac{1}{2\sigma_n^2}  \left(\mathbf{y}^{(t)}-\mathbf{H}\boldsymbol{\theta}^{(t)}\right)^T\left(\mathbf{y}^{(t)}-\mathbf{H}\boldsymbol{\theta}^{(t)}\right)\Bigg],\\
\end{aligned}
\end{equation}
\begin{equation}
\label{post_pdf}
\begin{aligned}
& f_q\left(\mathbf{y}^{(t)}\middle|\boldsymbol{\theta}^{(t)},\mathbf{a}^{(t)},\mathbf{H}\right) =\frac{1}{\left(2\pi\sigma_n^2\right)^{\frac{M}{2}}}\exp\bigg[-\frac{1}{2\sigma_n^2}   \left(\mathbf{y}^{(t)}-\mathbf{H}\boldsymbol{\theta}^{(t)}-\mathbf{a}^{(t)}\right)^T\left(\mathbf{y}^{(t)}-\mathbf{H}\boldsymbol{\theta}^{(t)}-\mathbf{a}^{(t)}\right)\Bigg].
\end{aligned}
\end{equation}

According to the definition of $\Lambda_{j}^{(J)}$ in (\ref{test statistic}),  we can obtain that,
\begin{align}
\Lambda_{j}^{(J)}  
& =   
\sup\limits_{\mathbf{H} \in \mathcal{U}} \sum_{t=j}^{J}  \bigg\{   \sup \limits_{\boldsymbol{\theta}^{(t)}, \mathbf{a}^{(t)}: -\rho_{U}\mathbf{1} \leq \mathbf{x}^{(t)} \leq \rho_{U}\mathbf{1}, 
\mathbf{H}\mathbf{x}^{(t)} =\mathbf{0}  } 
  \ln f_{q}\left(\mathbf{y}^{(t)} \mid \boldsymbol{\theta}^{(t)}, \mathbf{a}^{(t)}, \mathbf{H}\right)  
- \sup \limits_{\boldsymbol{\theta}^{(t)}}  \ln 
f_{p}\left(\mathbf{y}^{(t)} \mid \boldsymbol{\theta}^{(t)},\mathbf{H}\right)\bigg\} 
\label{small_lambda}
\end{align}
Let $\tilde{\mathbf{y}}^{(t)}$ represents the component of $\mathbf{y}^{(t)}$ orthogonal to the column space of $\mathbf{H}$, i.e., 
\begin{equation}
\tilde{\mathbf{y}}^{(t)}=\mathbf{P}_{\mathbf{H}}^{\perp} \mathbf{y}^{(t)}.
\label{comple_space_H}
\end{equation}
Recall that 
$
\mathbf{P}_{\mathbf{H}}^{\perp} \triangleq \mathbf{I}-\mathbf{H}\left(\mathbf{H}^{T} \mathbf{H}\right)^{-1} \mathbf{H}^{T}.
$
Plugging (\ref{pre_pdf}) and (\ref{post_pdf}) into (\ref{small_lambda}), we have that, 
\begin{align}
\Lambda_{j}^{(J)}  
& =   \sup\limits_{\mathbf{H}\in \mathcal{U}} 
\sum_{t=j}^{J}  \Big\{ \sup\limits_{\mathbf{a}^{(t)}: -\rho_{U}\mathbf{1} \leq \mathbf{x}^{(t)} \leq \rho_{U}\mathbf{1}, 
\mathbf{H}\mathbf{x}^{(t)} =\mathbf{0}  } \Bigg[ -\frac{1}{2\sigma_n^2} 
\times\left(\tilde{\mathbf{y}}^{(t)} - \mathbf{x}^{(t)}\right)^T\left(\tilde{\mathbf{y}}^{(t)}-\mathbf{x}^{(t)}\right)\Biggr]
+  \frac{1}{2\sigma_n^{2}}\|\tilde{\mathbf{y}}^{(t)}\|^{2}  \Big\}\\
& = \sup\limits_{\mathbf{H}\in \mathcal{U}} 
\sum_{t=j}^{J} \sup\limits_{-\rho_{U}\mathbf{1} \leq \mathbf{x}^{(t)} \leq \rho_{U}\mathbf{1},
\mathbf{H}\mathbf{x}^{(t)} =\mathbf{0}  } \Bigg[  \frac{1}{2\sigma_n^2} \left\{
2(\mathbf{x}^{(t)})^{T}\tilde{\mathbf{y}}^{(t)} - \| \mathbf{x}^{(t)}\|_{2}^{2} 
\right\} \Biggr]
\\
& = \sum_{t=j}^{J}  \frac{v_t}{2\sigma^2},
\end{align}
where 
\begin{equation}
v_t \triangleq \sup_{\mathbf{H} \in \mathcal{U}}  \sup\limits_{-\rho_{U}\mathbf{1} \leq \mathbf{x}^{(t)} \leq \rho_{U}\mathbf{1},
\mathbf{H}\mathbf{x}^{(t)} =\mathbf{0}  } \Bigg[  \frac{1}{2\sigma_n^2} \left\{
2(\mathbf{x}^{(t)})^{T}\tilde{\mathbf{y}}^{(t)} - \| \mathbf{x}^{(t)}\|_{2}^{2} 
\right\} \Biggr].
\label{2024-1}
\end{equation}
Since $\mathbf{H}\mathbf{x}^{(t)} =\mathbf{0}$ and $\tilde{\mathbf{y}}^{(t)}$ represents the component of $\mathbf{y}^{(t)}$ orthogonal to the column space of $\mathbf{H}$, (\ref{2024-1}) is equivalent to (\ref{v_t}).

\clearpage
\section{ Complete Proof of Theorem \ref{theorem_2}  }
\label{complete-proof}
In this section,  we provide the comprehensive proof of Theorem \ref{theorem_2}. We will start by introducing several definitions. Following that,  three lemmas are presented, which are crucial components in the proof of Theorem \ref{theorem_2}.
Finally, we provide the complete proof of Theorem \ref{theorem_2}. 
\begin{definition}
Based on the definition of $\nabla G^k$, we further define :
\begin{equation}
\begin{aligned}
&(\nabla G^k)_{\boldsymbol{\mu}_l} = \nabla_{\boldsymbol{\mu}_{l}}
{L}_{p} (\{\boldsymbol{\mu}^{k}_{l}\}, \{ \mathbf{r}_{l}^{k}\},\{ \mathbf{p}_{l}^{k}\},\{ q_{s}^{k}\},\mathbf{x}^{k}, \{{\boldsymbol{\lambda}}_{l}^{k}\},\{{\boldsymbol{\alpha}}_{l}^{k}\},\{{\boldsymbol{\beta}}_{l}^{k}\}, \{\gamma_{s}^{k}\} ), \\
&(\nabla G^k)_{\mathbf{r}_l} = \nabla_{\mathbf{r}_{l}}
{L}_{p} (\{\boldsymbol{\mu}^{k}_{l}\}, \{ \mathbf{r}_{l}^{k}\},\{ \mathbf{p}_{l}^{k}\},\{ q_{s}^{k}\},\mathbf{x}^{k}, \{{\boldsymbol{\lambda}}_{l}^{k}\},\{{\boldsymbol{\alpha}}_{l}^{k}\},\{{\boldsymbol{\beta}}_{l}^{k}\}, \{\gamma_{s}^{k}\} ), \\
&(\nabla G^k)_{\mathbf{p}_l} = \nabla_{\mathbf{p}_{l}}
{L}_{p} (\{\boldsymbol{\mu}^{k}_{l}\}, \{ \mathbf{r}_{l}^{k}\},\{ \mathbf{p}_{l}^{k}\},\{ q_{s}^{k}\}, \mathbf{x}^{k},\{{\boldsymbol{\lambda}}_{l}^{k}\},\{{\boldsymbol{\alpha}}_{l}^{k}\},\{{\boldsymbol{\beta}}_{l}^{k}\}, \{\gamma_{s}^{k}\} ), \\
&(\nabla G^k)_{q_s} = \nabla_{ q_s }
{L}_{p} (\{\boldsymbol{\mu}^{k}_{l}\}, \{ \mathbf{r}_{l}^{k}\},\{ \mathbf{p}_{l}^{k}\},\{ q_{s}^{k}\}, \mathbf{x}^{k},\{{\boldsymbol{\lambda}}_{l}^{k}\},\{{\boldsymbol{\alpha}}_{l}^{k}\},\{{\boldsymbol{\beta}}_{l}^{k}\}, \{\gamma_{s}^{k}\} ), \\
&(\nabla G^k)_{\mathbf{x}} = \nabla_{ \mathbf{x}}
{L}_{p} (\{\boldsymbol{\mu}^{k}_{l}\}, \{ \mathbf{r}_{l}^{k}\},\{ \mathbf{p}_{l}^{k}\},\{ q_{s}^{k}\}, \mathbf{x}^{k},\{{\boldsymbol{\lambda}}_{l}^{k}\},\{{\boldsymbol{\alpha}}_{l}^{k}\},\{{\boldsymbol{\beta}}_{l}^{k}\}, \{\gamma_{s}^{k}\} ), \\
&(\nabla G^k)_{{\boldsymbol{\lambda}}_{l}} = \nabla_{{\lambda}_{l}}
{L}_{p} (\{\boldsymbol{\mu}^{k}_{l}\}, \{ \mathbf{r}_{l}^{k}\},\{ \mathbf{p}_{l}^{k}\},\{ q_{s}^{k}\}, \mathbf{x}^{k},\{{\boldsymbol{\lambda}}_{l}^{k}\},\{{\boldsymbol{\alpha}}_{l}^{k}\},\{{\boldsymbol{\beta}}_{l}^{k}\}, \{\gamma_{s}^{k}\} ), \\
&(\nabla G^k)_{{\boldsymbol{\alpha}}_{l}} = \nabla_{{\alpha}_{l}}
{L}_{p} (\{\boldsymbol{\mu}^{k}_{l}\}, \{ \mathbf{r}_{l}^{k}\},\{ \mathbf{p}_{l}^{k}\},\{ q_{s}^{k}\}, \mathbf{x}^{k},\{{\boldsymbol{\lambda}}_{l}^{k}\},\{{\boldsymbol{\alpha}}_{l}^{k}\},\{{\boldsymbol{\beta}}_{l}^{k}\}, \{\gamma_{s}^{k}\} ), \\
&(\nabla G^k)_{{\boldsymbol{\beta}}_{l}} = \nabla_{{\beta}_{l}}
{L}_{p} (\{\boldsymbol{\mu}^{k}_{l}\}, \{ \mathbf{r}_{l}^{k}\},\{ \mathbf{p}_{l}^{k}\},\{ q_{s}^{k}\}, \mathbf{x}^{k},\{{\boldsymbol{\lambda}}_{l}^{k}\},\{{\boldsymbol{\alpha}}_{l}^{k}\},\{{\boldsymbol{\beta}}_{l}^{k}\}, \{\gamma_{s}^{k}\} ), \\
&(\nabla G^k)_{{\gamma}_{s}} = \nabla_{{\gamma}_{s}}
{L}_{p} (\{\boldsymbol{\mu}^{k}_{l}\}, \{ \mathbf{r}_{l}^{k}\},\{ \mathbf{p}_{l}^{k}\},\{ q_{s}^{k}\}, \mathbf{x}^{k},\{{\boldsymbol{\lambda}}_{l}^{k}\},\{{\boldsymbol{\alpha}}_{l}^{k}\},\{{\boldsymbol{\beta}}_{l}^{k}\}, \{\gamma_{s}^{k}\} ). \\
\end{aligned}
\end{equation}
\label{definition_2}
\end{definition}

\begin{definition}
\label{definition_3}
Similar to the definition of $\nabla {G}^{k}$, 
Denote $\nabla \widetilde{G}^{k}$ as the gradient of $\widetilde{L}_{p}$ in the $k$-th iteration, i.e., 
\begin{equation}
\nabla \widetilde{G}^{k}= \begin{bmatrix}
\{ \nabla_{\boldsymbol{\mu}_{l}}
\widetilde{L}_{p} (\{\boldsymbol{\mu}^{k}_{l}\}, \{ \mathbf{r}_{l}^{k}\},\{ \mathbf{p}_{l}^{k}\},\mathbf{x}^{k},\{ q_{s}^{k}\}, \{{\boldsymbol{\lambda}}_{l}^{k}\},\{{\boldsymbol{\alpha}}_{l}^{k}\},\{{\boldsymbol{\beta}}_{l}^{k}\}, \{\gamma_{s}^{k}\} )\} \\
\{ \nabla_{\mathbf{r}_{l}}
\widetilde{L}_{p} (\{\boldsymbol{\mu}^{k}_{l}\}, \{ \mathbf{r}_{l}^{k}\},\{ \mathbf{p}_{l}^{k}\},\mathbf{x}^{k},\{ q_{s}^{k}\}, \{{\boldsymbol{\lambda}}_{l}^{k}\},\{{\boldsymbol{\alpha}}_{l}^{k}\},\{{\boldsymbol{\beta}}_{l}^{k}\}, \{\gamma_{s}^{k}\} )\} \\
\{ \nabla_{\mathbf{p}_{l}}
\widetilde{L}_{p} (\{\boldsymbol{\mu}^{k}_{l}\}, \{ \mathbf{r}_{l}^{k}\},\{ \mathbf{p}_{l}^{k}\},\mathbf{x}^{k},\{ q_{s}^{k}\}, \{{\boldsymbol{\lambda}}_{l}^{k}\},\{{\boldsymbol{\alpha}}_{l}^{k}\},\{{\boldsymbol{\beta}}_{l}^{k}\}, \{\gamma_{s}^{k}\} )\} \\
\{ \nabla_{{q}_{s}}
\widetilde{L}_{p} (\{\boldsymbol{\mu}^{k}_{l}\}, \{ \mathbf{r}_{l}^{k}\},\{ \mathbf{p}_{l}^{k}\},\mathbf{x}^{k},\{ q_{s}^{k}\}, \{{\boldsymbol{\lambda}}_{l}^{k}\},\{{\boldsymbol{\alpha}}_{l}^{k}\},\{{\boldsymbol{\beta}}_{l}^{k}\}, \{\gamma_{s}^{k}\} )\} \\
 \nabla_{\mathbf{x}}
\widetilde{L}_{p} (\{\boldsymbol{\mu}^{k}_{l}\}, \{ \mathbf{r}_{l}^{k}\},\{ \mathbf{p}_{l}^{k}\},\mathbf{x}^{k},\{ q_{s}^{k}\}, \{{\boldsymbol{\lambda}}_{l}^{k}\},\{{\boldsymbol{\alpha}}_{l}^{k}\},\{{\boldsymbol{\beta}}_{l}^{k}\}, \{\gamma_{s}^{k}\} )\\
\{ \nabla_{{\boldsymbol{\lambda}}_{l}}
\widetilde{L}_{p} (\{\boldsymbol{\mu}^{k}_{l}\}, \{ \mathbf{r}_{l}^{k}\},\{ \mathbf{p}_{l}^{k}\},\mathbf{x}^{k},\{ q_{s}^{k}\}, \{{\boldsymbol{\lambda}}_{l}^{k}\},\{{\boldsymbol{\alpha}}_{l}^{k}\},\{{\boldsymbol{\beta}}_{l}^{k}\}, \{\gamma_{s}^{k}\} )\} \\
\{ \nabla_{{\boldsymbol{\alpha}}_{l}}
\widetilde{L}_{p} (\{\boldsymbol{\mu}^{k}_{l}\}, \{ \mathbf{r}_{l}^{k}\},\{ \mathbf{p}_{l}^{k}\},\mathbf{x}^{k},\{ q_{s}^{k}\}, \{{\boldsymbol{\lambda}}_{l}^{k}\},\{{\boldsymbol{\alpha}}_{l}^{k}\},\{{\boldsymbol{\beta}}_{l}^{k}\}, \{\gamma_{s}^{k}\} )\} \\
\{ \nabla_{{\boldsymbol{\beta}}_{l}}
\widetilde{L}_{p} (\{\boldsymbol{\mu}^{k}_{l}\}, \{ \mathbf{r}_{l}^{k}\},\{ \mathbf{p}_{l}^{k}\},\mathbf{x}^{k},\{ q_{s}^{k}\}, \{{\boldsymbol{\lambda}}_{l}^{k}\},\{{\boldsymbol{\alpha}}_{l}^{k}\},\{{\boldsymbol{\beta}}_{l}^{k}\}, \{\gamma_{s}^{k}\} )\} \\
\{ \nabla_{\gamma_{s}}
\widetilde{L}_{p} (\{\boldsymbol{\mu}^{k}_{l}\}, \{ \mathbf{r}_{l}^{k}\},\{ \mathbf{p}_{l}^{k}\},\mathbf{x}^{k},\{ q_{s}^{k}\}, \{{\boldsymbol{\lambda}}_{l}^{k}\},\{{\boldsymbol{\alpha}}_{l}^{k}\},\{{\boldsymbol{\beta}}_{l}^{k}\}, \{\gamma_{s}^{k}\} )\} \\
\end{bmatrix},
\end{equation}
with
\begin{equation}
\begin{aligned}
&(\nabla \widetilde{G}^k)_{\boldsymbol{\mu}_l} = \nabla_{\boldsymbol{\mu}_{l}}
\widetilde{L}_{p} (\{\boldsymbol{\mu}^{k}_{l}\}, \{ \mathbf{r}_{l}^{k}\},\{ \mathbf{p}_{l}^{k}\},\{ q_{s}^{k}\},\mathbf{x}^{k}, \{{\boldsymbol{\lambda}}_{l}^{k}\},\{{\boldsymbol{\alpha}}_{l}^{k}\},\{{\boldsymbol{\beta}}_{l}^{k}\}, \{\gamma_{s}^{k}\} ), \\
&(\nabla \widetilde{G}^k)_{\mathbf{r}_l} = \nabla_{\mathbf{r}_{l}}
\widetilde{L}_{p} (\{\boldsymbol{\mu}^{k}_{l}\}, \{ \mathbf{r}_{l}^{k}\},\{ \mathbf{p}_{l}^{k}\},\{ q_{s}^{k}\},\mathbf{x}^{k}, \{{\boldsymbol{\lambda}}_{l}^{k}\},\{{\boldsymbol{\alpha}}_{l}^{k}\},\{{\boldsymbol{\beta}}_{l}^{k}\}, \{\gamma_{s}^{k}\} ), \\
&(\nabla \widetilde{G}^k)_{\mathbf{p}_l} = \nabla_{\mathbf{p}_{l}}
\widetilde{L}_{p} (\{\boldsymbol{\mu}^{k}_{l}\}, \{ \mathbf{r}_{l}^{k}\},\{ \mathbf{p}_{l}^{k}\},\{ q_{s}^{k}\}, \mathbf{x}^{k},\{{\boldsymbol{\lambda}}_{l}^{k}\},\{{\boldsymbol{\alpha}}_{l}^{k}\},\{{\boldsymbol{\beta}}_{l}^{k}\}, \{\gamma_{s}^{k}\} ), \\
&(\nabla \widetilde{G}^k)_{q_s} = \nabla_{ q_s }
\widetilde{L}_{p} (\{\boldsymbol{\mu}^{k}_{l}\}, \{ \mathbf{r}_{l}^{k}\},\{ \mathbf{p}_{l}^{k}\},\{ q_{s}^{k}\}, \mathbf{x}^{k},\{{\boldsymbol{\lambda}}_{l}^{k}\},\{{\boldsymbol{\alpha}}_{l}^{k}\},\{{\boldsymbol{\beta}}_{l}^{k}\}, \{\gamma_{s}^{k}\} ), \\
&(\nabla \widetilde{G}^k)_{\mathbf{x}} = \nabla_{ \mathbf{x}}
\widetilde{L}_{p} (\{\boldsymbol{\mu}^{k}_{l}\}, \{ \mathbf{r}_{l}^{k}\},\{ \mathbf{p}_{l}^{k}\},\{ q_{s}^{k}\}, \mathbf{x}^{k},\{{\boldsymbol{\lambda}}_{l}^{k}\},\{{\boldsymbol{\alpha}}_{l}^{k}\},\{{\boldsymbol{\beta}}_{l}^{k}\}, \{\gamma_{s}^{k}\} ), \\
&(\nabla \widetilde{G}^k)_{{\boldsymbol{\lambda}}_{l}} = \nabla_{{\lambda}_{l}}
\widetilde{L}_{p} (\{\boldsymbol{\mu}^{k}_{l}\}, \{ \mathbf{r}_{l}^{k}\},\{ \mathbf{p}_{l}^{k}\},\{ q_{s}^{k}\}, \mathbf{x}^{k},\{{\boldsymbol{\lambda}}_{l}^{k}\},\{{\boldsymbol{\alpha}}_{l}^{k}\},\{{\boldsymbol{\beta}}_{l}^{k}\}, \{\gamma_{s}^{k}\} ), \\
&(\nabla \widetilde{G}^k)_{{\boldsymbol{\alpha}}_{l}} = \nabla_{{\alpha}_{l}}
\widetilde{L}_{p} (\{\boldsymbol{\mu}^{k}_{l}\}, \{ \mathbf{r}_{l}^{k}\},\{ \mathbf{p}_{l}^{k}\},\{ q_{s}^{k}\}, \mathbf{x}^{k},\{{\boldsymbol{\lambda}}_{l}^{k}\},\{{\boldsymbol{\alpha}}_{l}^{k}\},\{{\boldsymbol{\beta}}_{l}^{k}\}, \{\gamma_{s}^{k}\} ), \\
&(\nabla \widetilde{G}^k)_{{\boldsymbol{\beta}}_{l}} = \nabla_{{\beta}_{l}}
\widetilde{L}_{p} (\{\boldsymbol{\mu}^{k}_{l}\}, \{ \mathbf{r}_{l}^{k}\},\{ \mathbf{p}_{l}^{k}\},\{ q_{s}^{k}\}, \mathbf{x}^{k},\{{\boldsymbol{\lambda}}_{l}^{k}\},\{{\boldsymbol{\alpha}}_{l}^{k}\},\{{\boldsymbol{\beta}}_{l}^{k}\}, \{\gamma_{s}^{k}\} ), \\
&(\nabla \widetilde{G}^k)_{{\gamma}_{s}} = \nabla_{{\gamma}_{s}}
\widetilde{L}_{p} (\{\boldsymbol{\mu}^{k}_{l}\}, \{ \mathbf{r}_{l}^{k}\},\{ \mathbf{p}_{l}^{k}\},\{ q_{s}^{k}\}, \mathbf{x}^{k},\{{\boldsymbol{\lambda}}_{l}^{k}\},\{{\boldsymbol{\alpha}}_{l}^{k}\},\{{\boldsymbol{\beta}}_{l}^{k}\}, \{\gamma_{s}^{k}\} ).\\
\end{aligned}
\end{equation}
\label{defnition_3}
\end{definition}
\begin{definition}
In the $k^{th}$ iteration of our algorithm, we define the last iteration in which the $l^{th}$ worker was active as $\hat{k}_{l}$, and the next iteration in which the $l^{th}$ worker will be active as $\overline{k}_{l}$.
Furthermore, we represent the set of iteration indices in which the $l^{th}$ worker is active during the $K_{1} + K + \tau$ iteration as $\mathcal{V}_{l}(K)$. And the $j^{th}$ element in $\mathcal{V}_{l}(K)$ is represented as $\hat{v}_{l}(j)$.
\label{definition_4}
\end{definition}
Based on the above definitions, we next provide the proof of Lemma \ref{lemma_1}, Lemma \ref{lemma_2} and Lemma \ref{lemma_3}.
\newpage
\begin{lemma}
\label{lemma_1}
According to Eq. (\ref{ALM_2}), function $L_p$
has Lipschitz continuous Hessian and let $L_w$ denote the Lipschitz constant.
Based on definition of $\eta_{\boldsymbol{\mu}}$, $\eta_{\mathbf{x}}$, $ \eta_{\mathbf{r}}$, $\eta_{\mathbf{p}}$ and  $\eta_{q}$, we set  $\eta_{\boldsymbol{\mu}}^k$, $\eta_{\mathbf{x}}^k$ , $ \eta_{\mathbf{r}}^k$, $\eta_{\mathbf{p}}^k$ and  $\eta_{q}^k$ to be: 
\begin{equation}
\begin{aligned}
& \eta_{\boldsymbol{\mu}}^k = \frac{1}{ \frac{8\xi}{\eta_{{\boldsymbol{\lambda}}}( c_{{\boldsymbol{\lambda}}}^k)^2}+\frac{8\xi}{\eta_{{\boldsymbol{\alpha}}}(c_{{\boldsymbol{\alpha}}}^k )^2}+\frac{8\xi}{\eta_{{\boldsymbol{\beta}}}( c_{{\boldsymbol{\beta}}}^k)^2  }   }, \quad
\eta_{\mathbf{x}}^k = \frac{1}{ \frac{8\xi}{\eta_{{\boldsymbol{\beta}}}( c_{{\boldsymbol{\beta}}}^k )^2}  + 
\frac{8\xi}{\eta_{{\gamma}}( c_{{\gamma}}^k )^2}
 \sum_{s=1}^{|\mathcal{P}^{k}|}\|b_{s}\|^2 },\\
& \eta_{\mathbf{r}}^k = \frac{\eta_{{\boldsymbol{\lambda}}}(  c_{{\boldsymbol{\lambda}}}^k)^2}{  32w^2_{\mathbf{r}}\xi},\quad \eta_{\mathbf{p}}^k = \frac{\eta_{{\boldsymbol{\alpha}}}( c_{{\boldsymbol{\alpha}}}^k)^2}{  32w^2_{\mathbf{p}}\xi}, \quad\eta_{q}^k = \frac{\eta_{\gamma}(  c_{\gamma}^k)^2}{  32w^2_{q}\xi} .\\
\end{aligned}
\end{equation}
We can obtain that, 
\begin{align}
&{L}_{p} (\{\boldsymbol{\mu}^{k+1}_{l}\}, \{ \mathbf{r}_{l}^{k+1}\},\{ \mathbf{p}_{l}^{k+1}\},\{ q_{s}^{k+1}\},\mathbf{x}^{k+1}, \{{\boldsymbol{\lambda}}_{l}^{k}\},\{{\boldsymbol{\alpha}}_{l}^{k}\},\{{\boldsymbol{\beta}}_{l}^{k}\}, \{\gamma_{s}^{k}\} ) \nonumber \\
& -{L}_{p} (\{\boldsymbol{\mu}^{k}_{l}\}, \{ \mathbf{r}_{l}^{k}\},\{ \mathbf{p}_{l}^{k}\},\{ q_{s}^{k}\},\mathbf{x}^{k}, \{{\boldsymbol{\lambda}}_{l}^{k}\},\{{\boldsymbol{\alpha}}_{l}^{k}\},\{{\boldsymbol{\beta}}_{l}^{k}\}, \{\gamma_{s}^{k}\} ) 
 \\
& \leq
 ( \frac{ w_{\boldsymbol{\mu}} L_{w}M^{\frac{1}{2}}  }{3} + 1 - \frac{1}{\eta_{\boldsymbol{\mu}}^k}  )\sum_{l=1}^{L}\|\boldsymbol{\mu}_{l}^{k+1} -\boldsymbol{\mu}_{l}^{k}  \|^2  
+ (  - \frac{1}{\eta_{\mathbf{r}}^k} + \frac{ w_{\mathbf{r}} L_{w}M^{\frac{1}{2}}  }{3}  + w_{{\boldsymbol{\lambda}}} )\sum_{l=1}^{L} \|\mathbf{r}_{l}^{k+1} -\mathbf{r}^{k}_{l}  \|^2
\nonumber \\
& + (-\frac{1}{\eta_{\mathbf{p}}^{k}} +\frac{w_{\mathbf{p}}L_{w}M^{\frac{1}{2}}}{3} + w_{{\boldsymbol{\alpha}}} )\sum_{l=1}^L||\mathbf{p}_{l}^{k+1}-\mathbf{p}_{l}^{k}||^{2} + (-\frac{1}{\eta_{\mathbf{q}}^{k}} +\frac{w_{\mathbf{q}}L_{w}}{3} + w_{\gamma} )\sum_{s=1}^{|\mathcal{P}^{k}|}||{q}_{s}^{k+1}-q_{s}^{k}||^{2} \nonumber \\
& + ( \frac{L_{w}}{6} - \frac{1}{\eta^k_{\mathbf{x}}}) \|\mathbf{x}^{k+1} -\mathbf{x}^{k} \|^2 \nonumber .
\end{align}
\end{lemma}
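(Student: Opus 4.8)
The plan is to derive the inequality from a block-coordinate (Gauss--Seidel) form of the descent lemma under a Lipschitz Hessian. Since $L_p$ has $L_w$-Lipschitz Hessian on the bounded region guaranteed by Assumption~\ref{assumption_1}, for any $z,z'$ in that region,
\[
L_p(z') \le L_p(z) + \langle \nabla L_p(z),\, z'-z\rangle + \tfrac12 \langle z'-z,\, \nabla^2 L_p(z)(z'-z)\rangle + \tfrac{L_w}{6}\|z'-z\|^3 .
\]
First I would telescope the left-hand side of the lemma through the intermediate points produced by the algorithm in its update order: update $\{\boldsymbol{\mu}_l\}$, then $\{\mathbf{r}_l\}$, then $\{\mathbf{p}_l\}$, then $\{q_s\}$, then $\mathbf{x}$ (with the dual variables frozen at their iteration-$k$ values), splitting the worker blocks further one worker at a time. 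Because $L_p$ in (\ref{ALM_2}) is separable across workers in $\{\boldsymbol{\mu}_l\},\{\mathbf{r}_l\},\{\mathbf{p}_l\}$ and because each sub-step perturbs a single block, only the corresponding diagonal Hessian block ever enters, which is exactly why no cross terms appear in the stated bound.

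Next I would evaluate the three terms of the descent inequality per block. For the \emph{linear term} I would use that (\ref{mu_update})--(\ref{v_update}) move each primal block along the negative gradient of $\widetilde L_p$ with step $\eta^k$, and that the regularizers in (\ref{ALM_3}) involve only dual variables, so $\nabla_{\text{primal}}\widetilde L_p = \nabla_{\text{primal}}L_p$; hence $\langle \nabla_{z_i}L_p,\, z_i^{k+1}-z_i^k\rangle = -\tfrac1{\eta_i^k}\|z_i^{k+1}-z_i^k\|^2$. For the \emph{curvature term} I would read off from (\ref{ALM_2}) that $\nabla^2_{\boldsymbol{\mu}_l}L_p=2\mathbf{I}$, $\nabla^2_{\mathbf{r}_l}L_p=2\,\mathrm{diag}(\boldsymbol{\lambda}_l)$, $\nabla^2_{\mathbf{p}_l}L_p=2\,\mathrm{diag}(\boldsymbol{\alpha}_l)$, $\partial^2_{q_s}L_p=2\gamma_s$ and $\nabla^2_{\mathbf{x}}L_p=\mathbf{0}$ (the Lagrangian is affine in $\mathbf{x}$), so the half-Hessian contributions collapse to $\sum_l\|\boldsymbol{\mu}_l^{k+1}-\boldsymbol{\mu}_l^k\|^2$, $\le w_{\boldsymbol{\lambda}}\sum_l\|\mathbf{r}_l^{k+1}-\mathbf{r}_l^k\|^2$, $\le w_{\boldsymbol{\alpha}}\sum_l\|\mathbf{p}_l^{k+1}-\mathbf{p}_l^k\|^2$, $\le w_{\gamma}\sum_s\|q_s^{k+1}-q_s^k\|^2$, and $0$, using the $\ell_\infty$-bounds of Assumption~\ref{assumption_1}. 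For the \emph{cubic remainder} I would factor $\|z_i^{k+1}-z_i^k\|^3 = \|z_i^{k+1}-z_i^k\|\cdot\|z_i^{k+1}-z_i^k\|^2$ and bound the leading factor by twice the $\ell_\infty$-bound of the block $z_i$ from Assumption~\ref{assumption_1} (with an extra $\sqrt M$ for the vector blocks, and for $\mathbf{x}$ using that $\mathbf{x}$ stays bounded through $\boldsymbol{\mu}_l=\mathbf{B}_l\mathbf{x}$ together with a small $\eta_{\mathbf{x}}^k$), which produces the coefficients $\tfrac{w_{\boldsymbol{\mu}}L_wM^{1/2}}{3}$, $\tfrac{w_{\mathbf{r}}L_wM^{1/2}}{3}$, $\tfrac{w_{\mathbf{p}}L_wM^{1/2}}{3}$, $\tfrac{w_qL_w}{3}$, $\tfrac{L_w}{6}$. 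Adding the five per-block inequalities then gives precisely the claimed bound, and the step-size formulas in the statement are exactly those that, in the subsequent lemmas and in the main proof, render the five coefficients non-positive.

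The main obstacle is the bookkeeping around \emph{where the worker gradients are evaluated}: in (\ref{mu_update})--(\ref{p_update}) worker $l$ steps along the gradient taken at its last active iterate $\hat k_l$, not at the current Gauss--Seidel sub-iterate, so the clean identity $\langle \nabla_{z_i}L_p,\, z_i^{k+1}-z_i^k\rangle = -\tfrac1{\eta_i^k}\|z_i^{k+1}-z_i^k\|^2$ holds verbatim only for the master-updated blocks $\{q_s\}$ and $\mathbf{x}$, whose gradients in (\ref{q_update})--(\ref{v_update}) already use the freshly updated primal blocks. For the worker blocks one must either carry the argument in the accounting where $\hat k_l$ is identified with the current sub-iterate, deferring the delay/staleness to the later lemmas that invoke the bounded-drift part of Assumption~\ref{assumption_1}, or explicitly split off a mismatch inner product and absorb it there. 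Handling this consistently, while tracking which iterate each term in (\ref{ALM_2}) is evaluated at through the telescoping, is the only genuinely delicate part; the remainder is the routine term-by-term estimation sketched above.
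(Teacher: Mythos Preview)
Your plan coincides with the paper's proof: a block-by-block telescope, the second-order descent inequality from the $L_w$-Lipschitz Hessian, explicit evaluation of the diagonal Hessian blocks from (\ref{ALM_2}), and the $\ell_\infty$ bounds of Assumption~\ref{assumption_1} for the cubic remainder. The identification of terms is correct.

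The ``main obstacle'' you flag actually dissolves for free, and this is the one point the paper handles that you leave open. From (\ref{ALM_2}),
\[
\nabla_{\boldsymbol{\mu}_l}L_p = 2\boldsymbol{\mu}_l-2\mathbf{y}_l-\boldsymbol{\lambda}_l+\boldsymbol{\alpha}_l+\boldsymbol{\beta}_l,\qquad
\nabla_{\mathbf{r}_l}L_p = 2\boldsymbol{\lambda}_l\circ\mathbf{r}_l,\qquad
\nabla_{\mathbf{p}_l}L_p = 2\boldsymbol{\alpha}_l\circ\mathbf{p}_l,
\]
so each worker-block gradient depends only on variables indexed by $l$. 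Since both the worker (by (\ref{mu_update})--(\ref{p_update})) and the master (by (\ref{lambda_update})--(\ref{beta_update})) update the $l$-indexed quantities only when $l\in\mathcal{Q}^{k+1}$, we have $\boldsymbol{\mu}_l^k=\boldsymbol{\mu}_l^{\hat k_l}$, $\boldsymbol{\lambda}_l^k=\boldsymbol{\lambda}_l^{\hat k_l}$, $\boldsymbol{\alpha}_l^k=\boldsymbol{\alpha}_l^{\hat k_l}$, $\boldsymbol{\beta}_l^k=\boldsymbol{\beta}_l^{\hat k_l}$, $\mathbf{r}_l^k=\mathbf{r}_l^{\hat k_l}$, $\mathbf{p}_l^k=\mathbf{p}_l^{\hat k_l}$. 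Hence the stale gradient in (\ref{mu_update})--(\ref{p_update}) is \emph{exactly} the current one, and the clean identity you wanted holds without any mismatch term or deferral to later lemmas.

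One small correction: the updates (\ref{mu_update})--(\ref{v_update}) use the \emph{fixed} step sizes $\eta_{\boldsymbol{\mu}},\eta_{\mathbf{r}},\eta_{\mathbf{p}},\eta_q,\eta_{\mathbf{x}}$ of Theorem~\ref{theorem_2}, not the $k$-dependent $\eta^k$'s. The linear term therefore gives $-\tfrac{1}{\eta_i}\|z_i^{k+1}-z_i^k\|^2$, and one then uses $\eta_i\le\eta_i^k$ (which follows from $c^k_{\cdot}\ge \underline{c_{\cdot}}$ and, for $\eta_{\mathbf{x}}^k$, from $\sum_s\|\mathbf{b}_s\|^2\le\nu$) to weaken this to $-\tfrac{1}{\eta_i^k}\|z_i^{k+1}-z_i^k\|^2$, which is what the statement records.
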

\begin{proof}
According to the Lipschitz property of $L_{p}$,
we can obtain that,
\begin{align}
\label{lm_0}
& {L}_{p} (\{\boldsymbol{\mu}^{k+1}_{l}\}, \{ \mathbf{r}_{l}^{k}\},\{ \mathbf{p}_{l}^{k}\},\{ q_{s}^{k}\},\mathbf{x}^{k}, \{{\boldsymbol{\lambda}}_{l}^{k}\},\{{\boldsymbol{\alpha}}_{l}^{k}\},\{{\boldsymbol{\beta}}_{l}^{k}\}, \{\gamma_{s}^{k}\} ) 
\nonumber \\
& - {L}_{p} (\{\boldsymbol{\mu}^{k}_{l}\}, \{ \mathbf{r}_{l}^{k}\},\{ \mathbf{p}_{l}^{k}\},\{ q_{s}^{k}\},\mathbf{x}^{k}, \{{\boldsymbol{\lambda}}_{l}^{k}\},\{{\boldsymbol{\alpha}}_{l}^{k}\},\{{\boldsymbol{\beta}}_{l}^{k}\}, \{\gamma_{s}^{k}\} ) \nonumber \\
& \leq\sum\limits_{l=1}^{L}\left(\left\langle\nabla_{\boldsymbol{\mu}_{l}}{L}_{p} (\{\boldsymbol{\mu}^{k}_{l}\}, \{ \mathbf{r}_{l}^{k}\},\{ \mathbf{p}_{l}^{k}\},\{ q_{s}^{k}\},\mathbf{x}^{k}, \{{\boldsymbol{\lambda}}_{l}^{k}\},\{{\boldsymbol{\alpha}}_{l}^{k}\},\{{\boldsymbol{\beta}}_{l}^{k}\}, \{\gamma_{s}^{k}\} ),\boldsymbol{\mu}_{l}^{k+1}-\boldsymbol{\mu}_{l}^{k}\right\rangle+\frac{L_{w}}{6}||\boldsymbol{\mu}_{l}^{k+1}-\boldsymbol{\mu}_{l}^{k}||^{3}\right) \\
&+\frac{1}{2}\sum\limits_{l=1}^{L}\left\langle\nabla^2_{\boldsymbol{\mu}_{l}}{L}_{p} (\{\boldsymbol{\mu}^{k}_{l}\}, \{ \mathbf{r}_{l}^{k}\},\{ \mathbf{p}_{l}^{k}\},\{ q_{s}^{k}\},\mathbf{x}^{k}, \{{\boldsymbol{\lambda}}_{l}^{k}\},\{{\boldsymbol{\alpha}}_{l}^{k}\},\{{\boldsymbol{\beta}}_{l}^{k}\}, \{\gamma_{s}^{k}\} )(\boldsymbol{\mu}_{l}^{k+1}-\boldsymbol{\mu}_{l}^{k}),\boldsymbol{\mu}_{l}^{k+1}-\boldsymbol{\mu}_{l}^{k}\right\rangle.  \nonumber 
\end{align}
By employing (\ref{mu_update}),  since $\eta_{\boldsymbol{\mu}} \leq \eta_{\boldsymbol{\mu}}^{k}$ for $\forall k$, we have that,
\begin{align}
\label{lm_1}
& \left\langle\nabla_{\boldsymbol{\mu}_{l}}{L}_{p} (\{\boldsymbol{\mu}^{\hat{k}_{l}}_{l}\}, \{ \mathbf{r}_{l}^{\hat{k}_{l}}\},\{ \mathbf{p}_{l}^{\hat{k}_{l}}\},\{ q_{s}^{\hat{k}_{l}}\},\mathbf{x}^{\hat{k}_{l}}, \{{\boldsymbol{\lambda}}_{l}^{\hat{k}_{l}}\},\{{\boldsymbol{\alpha}}_{l}^{\hat{k}_{l}}\},\{{\boldsymbol{\beta}}_{l}^{\hat{k}_{l}}\}, \{\gamma_{s}^{\hat{k}_{l}}\} ),\boldsymbol{\mu}_{l}^{k+1}-\boldsymbol{\mu}_{l}^{k}\right\rangle 
 \leq -\frac{1}{\eta_{\boldsymbol{\mu}}^{k}}||\boldsymbol{\mu}_{l}^{k+1}-\boldsymbol{\mu}_{l}^{k}||^{2}.
\end{align}
Given that $\boldsymbol{\mu}^{k}_{l} = \boldsymbol{\mu}^{\hat{k}_{l}}_{l}$, ${\boldsymbol{\lambda}}^{k}_{l} = {\boldsymbol{\lambda}}^{\hat{k}_{l}}_{l}$, ${\boldsymbol{\alpha}}^{k}_{l} = {\boldsymbol{\alpha}}^{\hat{k}_{l}}_{l}$, and ${\boldsymbol{\beta}}^{k}_{l} = {\boldsymbol{\beta}}^{\hat{k}_{l}}_{l}$, 
according to the function $L_p$ provided in (\ref{ALM_2}),
we have
\begin{equation}
\left\langle\nabla_{\boldsymbol{\mu}_{l}}{L}_{p} (\{\boldsymbol{\mu}^{k}_{l}\}, \{ \mathbf{r}_{l}^{k}\},\{ \mathbf{p}_{l}^{k}\},\{ q_{s}^{k}\},\mathbf{x}^{k}, \{{\boldsymbol{\lambda}}_{l}^{k}\},\{{\boldsymbol{\alpha}}_{l}^{k}\},\{{\boldsymbol{\beta}}_{l}^{k}\}, \{\gamma_{s}^{k}\} ),\boldsymbol{\mu}_{l}^{k+1}-\boldsymbol{\mu}_{l}^{k}\right\rangle 
\leq-\frac{1}{\eta_{\boldsymbol{\mu}}^{k}}||\boldsymbol{\mu}_{l}^{k+1}-\boldsymbol{\mu}_{l}^{k}||^{2}.
\label{lm_3}
\end{equation}
It can also be seen from (\ref{ALM_2}) that
\begin{equation}
\resizebox{0.90\linewidth}{!}{$  
\frac{1}{2}\left\langle\nabla^2_{\boldsymbol{\mu}_{l}}{L}_{p} (\{\boldsymbol{\mu}^{k}_{l}\}, \{ \mathbf{r}_{l}^{k}\},\{ \mathbf{p}_{l}^{k}\},\{ q_{s}^{k}\},\mathbf{x}^{k}, \{{\boldsymbol{\lambda}}_{l}^{k}\},\{{\boldsymbol{\alpha}}_{l}^{k}\},\{{\boldsymbol{\beta}}_{l}^{k}\}, \{\gamma_{s}^{k}\} )(\boldsymbol{\mu}_{l}^{k+1}-\boldsymbol{\mu}_{l}^{k}),\boldsymbol{\mu}_{l}^{k+1}-\boldsymbol{\mu}_{l}^{k}\right\rangle  
= \|\boldsymbol{\mu}_{l}^{k+1}-\boldsymbol{\mu}_{l}^{k}\|^2.
$}
\label{o_1}
\end{equation}
Based on the upper bound of $\|\boldsymbol{\mu}_{l}\|_{\infty}$ provided in  
\cref{assumption_1},
by employing trigonometric inequality, we have
\begin{equation}
\label{mu_add}
\frac{L_{w}}{6}\sum_{l=1}^L||\boldsymbol{\mu}_{l}^{k+1}-\boldsymbol{\mu}_{l}^{k}||^{3}  \leq
\frac{w_{\boldsymbol{\mu}}L_{w}M^{\frac{1}{2}}}{3}\sum_{l=1}^L||\boldsymbol{\mu}_{l}^{k+1}-\boldsymbol{\mu}_{l}^{k}||^{2}.
\end{equation}
Combining (\ref{lm_0}), (\ref{lm_3}), (\ref{o_1}) with (\ref{mu_add}), we can obtain that, 
\begin{align}
\label{lm_4}
& {L}_{p} (\{\boldsymbol{\mu}^{k+1}_{l}\}, \{ \mathbf{r}_{l}^{k}\},\{ \mathbf{p}_{l}^{k}\},\{ q_{s}^{k}\},\mathbf{x}^{k}, \{{\boldsymbol{\lambda}}_{l}^{k}\},\{{\boldsymbol{\alpha}}_{l}^{k}\},\{{\boldsymbol{\beta}}_{l}^{k}\}, \{\gamma_{s}^{k}\} ) \nonumber \\
& -{L}_{p} (\{\boldsymbol{\mu}^{k}_{l}\}, \{ \mathbf{r}_{l}^{k}\},\{ \mathbf{p}_{l}^{k}\},\{ q_{s}^{k}\},\mathbf{x}^{k}, \{{\boldsymbol{\lambda}}_{l}^{k}\},\{{\boldsymbol{\alpha}}_{l}^{k}\},\{{\boldsymbol{\beta}}_{l}^{k}\}, \{\gamma_{s}^{k}\} ) \nonumber \\
& \leq( -\frac{1}{\eta_{\boldsymbol{\mu}}^{k}}+
\frac{ w_{\boldsymbol{\mu}} L_{w}M^{\frac{1}{2}}  }{3}
+ 1)\sum\limits_{l=1}^{L}\left(  ||\boldsymbol{\mu}_{l}^{k+1}-\boldsymbol{\mu}_{l}^{k}||^{2}  \right). 
\end{align}
According to the Lipschitz property of $L_p$, we can obtain that,
\begin{align}
\label{1_r_1}
& {L}_{p} (\{\boldsymbol{\mu}^{k+1}_{l}\}, \{ \mathbf{r}_{l}^{k+1}\},\{ \mathbf{p}_{l}^{k}\},\{ q_{s}^{k}\},\mathbf{x}^{k}, \{{\boldsymbol{\lambda}}_{l}^{k}\},\{{\boldsymbol{\alpha}}_{l}^{k}\},\{{\boldsymbol{\beta}}_{l}^{k}\}, \{\gamma_{s}^{k}\} )\}
\nonumber \\
& -{L}_{p} (\{\boldsymbol{\mu}^{k+1}_{l}\}, \{ \mathbf{r}_{l}^{k}\},\{ \mathbf{p}_{l}^{k}\},\{ q_{s}^{k}\},\mathbf{x}^{k}, \{{\boldsymbol{\lambda}}_{l}^{k}\},\{{\boldsymbol{\alpha}}_{l}^{k}\},\{{\boldsymbol{\beta}}_{l}^{k}\}, \{\gamma_{s}^{k}\} )\} \nonumber \\
& \leq\sum\limits_{l=1}^{L}\left\langle\nabla_{\mathbf{r}_{l}}{L}_{p} (\{\boldsymbol{\mu}^{k+1}_{l}\}, \{ \mathbf{r}_{l}^{k}\},\{ \mathbf{p}_{l}^{k}\},\{ q_{s}^{k}\},\mathbf{x}^{k}, \{{\boldsymbol{\lambda}}_{l}^{k}\},\{{\boldsymbol{\alpha}}_{l}^{k}\},\{{\boldsymbol{\beta}}_{l}^{k}\}, \{\gamma_{s}^{k}\} ),\mathbf{r}_{l}^{k+1}-\mathbf{r}_{l}^{k}\right\rangle   +\frac{L_{w}}{6}\sum_{l=1}^L||\mathbf{r}_{l}^{k+1}-\mathbf{r}_{l}^{k}||^{3}  \\
&+\frac{1}{2}\sum\limits_{l=1}^{L}\left\langle\nabla^2_{\mathbf{r}_{l}}{L}_{p} (\{\boldsymbol{\mu}^{k+1}_{l}\}, \{ \mathbf{r}_{l}^{k}\},\{ \mathbf{p}_{l}^{k}\},\{ q_{s}^{k}\},\mathbf{x}^{k}, \{{\boldsymbol{\lambda}}_{l}^{k}\},\{{\boldsymbol{\alpha}}_{l}^{k}\},\{{\boldsymbol{\beta}}_{l}^{k}\}, \{\gamma_{s}^{k}\} )(\mathbf{r}_{l}^{k+1}-\mathbf{r}_{l}^{k}),\mathbf{r}_{l}^{k+1}-\mathbf{r}_{l}^{k}\right\rangle, \nonumber
\end{align}
Similar to (\ref{lm_1})-(\ref{lm_3}), we have
\begin{equation}
\label{1_r_2}
\left\langle\nabla_{\mathbf{r}_{l}}{L}_{p} (\{\boldsymbol{\mu}^{k+1}_{l}\}, \{ \mathbf{r}_{l}^{k}\},\{ \mathbf{p}_{l}^{k}\},\{ q_{s}^{k}\},\mathbf{x}^{k}, \{{\boldsymbol{\lambda}}_{l}^{k}\},\{{\boldsymbol{\alpha}}_{l}^{k}\},\{{\boldsymbol{\beta}}_{l}^{k}\}, \{\gamma_{s}^{k}\} ),\mathbf{r}_{l}^{k+1}-\mathbf{r}_{l}^{k}\right\rangle 
\leq-\frac{1}{\eta_{\mathbf{r}}^{k}}||\mathbf{r}_{l}^{k+1}-\mathbf{r}_{l}^{k}||^{2}.
\end{equation}
Based on \cref{assumption_1}, we can obtain that,
\begin{equation}
\resizebox{0.9\linewidth}{!}{$ 
\frac{1}{2}\left(\left\langle\nabla^2_{\mathbf{r}_{l}}{L}_{p} (\{\boldsymbol{\mu}^{k+1}_{l}\}, \{ \mathbf{r}_{l}^{k}\},\{ \mathbf{p}_{l}^{k}\},\{ q_{s}^{k}\},\mathbf{x}^{k}, \{{\boldsymbol{\lambda}}_{l}^{k}\},\{{\boldsymbol{\alpha}}_{l}^{k}\},\{{\boldsymbol{\beta}}_{l}^{k}\}, \{\gamma_{s}^{k}\} )(\mathbf{r}_{l}^{k+1}-\mathbf{r}_{l}^{k}),\mathbf{r}_{l}^{k+1}-\mathbf{r}_{l}^{k}\right\rangle \right) 
\\
\leq w_{{\boldsymbol{\lambda}}}||\mathbf{r}_{l}^{k+1}-\mathbf{r}_{l}^{k}||^{2}.
$}
\end{equation}
By employing trigonometric inequality, we have that,
\begin{equation}
\label{1_r_3}
\frac{L_{w}}{6}\sum_{l=1}^L||\mathbf{r}_{l}^{k+1}-\mathbf{r}_{l}^{k}||^{3}  \leq
\frac{w_{\mathbf{r}}L_{w}M^{\frac{1}{2}}}{3}\sum_{l=1}^L||\mathbf{r}_{l}^{k+1}-\mathbf{r}_{l}^{k}||^{2}.
\end{equation}
Following (\ref{1_r_1})-(\ref{1_r_3}), we can obtain that, 
\begin{align}
\label{1_r_4}
& {L}_{p} (\{\boldsymbol{\mu}^{k+1}_{l}\}, \{ \mathbf{r}_{l}^{k+1}\},\{ \mathbf{p}_{l}^{k}\},\{ q_{s}^{k}\},\mathbf{x}^{k}, \{{\boldsymbol{\lambda}}_{l}^{k}\},\{{\boldsymbol{\alpha}}_{l}^{k}\},\{{\boldsymbol{\beta}}_{l}^{k}\}, \{\gamma_{s}^{k}\} )
\nonumber \\
& -{L}_{p} (\{\boldsymbol{\mu}^{k+1}_{l}\}, \{ \mathbf{r}_{l}^{k}\},\{ \mathbf{p}_{l}^{k}\},\{ q_{s}^{k}\},\mathbf{x}^{k}, \{{\boldsymbol{\lambda}}_{l}^{k}\},\{{\boldsymbol{\alpha}}_{l}^{k}\},\{{\boldsymbol{\beta}}_{l}^{k}\}, \{\gamma_{s}^{k}\} ) \nonumber \\
& \leq (-\frac{1}{\eta_{\mathbf{r}}^{k}} +\frac{w_{\mathbf{r}}L_{w}M^{\frac{1}{2}}}{3} + w_{{\boldsymbol{\lambda}}} )\sum_{l=1}^L||\mathbf{r}_{l}^{k+1}-\mathbf{r}_{l}^{k}||^{2}. 
\end{align}
Likewise, similar results can be obtained for the variable   $\mathbf{p}_{l}$ and $q_{s}$.
\begin{align}
\label{1_p_1}
& 
{L}_{p} (\{\boldsymbol{\mu}^{k+1}_{l}\}, \{ \mathbf{r}_{l}^{k+1}\},\{ \mathbf{p}_{l}^{k+1}\},\{ q_{s}^{k}\},\mathbf{x}^{k}, \{{\boldsymbol{\lambda}}_{l}^{k}\},\{{\boldsymbol{\alpha}}_{l}^{k}\},\{{\boldsymbol{\beta}}_{l}^{k}\}, \{\gamma_{s}^{k}\} )
\nonumber \\
& -{L}_{p} (\{\boldsymbol{\mu}^{k+1}_{l}\}, \{ \mathbf{r}_{l}^{k+1}\},\{ \mathbf{p}_{l}^{k}\},\{ q_{s}^{k}\},\mathbf{x}^{k}, \{{\boldsymbol{\lambda}}_{l}^{k}\},\{{\boldsymbol{\alpha}}_{l}^{k}\},\{{\boldsymbol{\beta}}_{l}^{k}\}, \{\gamma_{s}^{k}\} ) 
\nonumber \\
& \leq (-\frac{1}{\eta_{\mathbf{p}}^{k}} +\frac{w_{\mathbf{p}}L_{w}M^{\frac{1}{2}}}{3} + w_{{\boldsymbol{\alpha}}} )\sum_{l=1}^L||\mathbf{p}_{l}^{k+1}-\mathbf{p}_{l}^{k}||^{2}. 
\end{align}
\begin{align}
\label{1_q_1}
& 
{L}_{p} (\{\boldsymbol{\mu}^{k+1}_{l}\}, \{ \mathbf{r}_{l}^{k+1}\},\{ \mathbf{p}_{l}^{k+1}\},\{ q_{s}^{k+1}\},\mathbf{x}^{k}, \{{\boldsymbol{\lambda}}_{l}^{k}\},\{{\boldsymbol{\alpha}}_{l}^{k}\},\{{\boldsymbol{\beta}}_{l}^{k}\}, \{\gamma_{s}^{k}\} )
\nonumber \\
& -{L}_{p} (\{\boldsymbol{\mu}^{k+1}_{l}\}, \{ \mathbf{r}_{l}^{k+1}\},\{ \mathbf{p}_{l}^{k+1}\},\{ q_{s}^{k}\},\mathbf{x}^{k}, \{{\boldsymbol{\lambda}}_{l}^{k}\},\{{\boldsymbol{\alpha}}_{l}^{k}\},\{{\boldsymbol{\beta}}_{l}^{k}\}, \{\gamma_{s}^{k}\} )
\nonumber \\
& \leq (-\frac{1}{\eta_{\mathbf{q}}^{k}} +\frac{w_{\mathbf{q}}L_{w}}{3} + w_{\gamma} )\sum_{s=1}^{|\mathcal{P}^{k}|}||{q}_{s}^{k+1}-q_{s}^{k}||^{2}. 
\end{align}
For the variable $\mathbf{x}$, define a constant $\nu$ such that 
$\nu \geq \sum_{s=1}^{|\mathcal{P}^k|}\|\mathbf{b}_s\|^2$. Consequently, we have $\eta_{\mathbf{x}} \leq \eta_{\mathbf{x}}^{k}$ for all $k$. Similarly to (\ref{1_r_1})-(\ref{1_r_3}), we can obtain that,
\begin{align}
\label{1_v_1}
& 
{L}_{p} (\{\boldsymbol{\mu}^{k+1}_{l}\}, \{ \mathbf{r}_{l}^{k+1}\},\{ \mathbf{p}_{l}^{k+1}\},\{ q_{s}^{k+1}\},\mathbf{x}^{k+1}, \{{\boldsymbol{\lambda}}_{l}^{k}\},\{{\boldsymbol{\alpha}}_{l}^{k}\},\{{\boldsymbol{\beta}}_{l}^{k}\}, \{\gamma_{s}^{k}\} ) \nonumber \\
& -{L}_{p} (\{\boldsymbol{\mu}^{k+1}_{l}\}, \{ \mathbf{r}_{l}^{k+1}\},\{ \mathbf{p}_{l}^{k+1}\},\{ q_{s}^{k+1}\},\mathbf{x}^{k}, \{{\boldsymbol{\lambda}}_{l}^{k}\},\{{\boldsymbol{\alpha}}_{l}^{k}\},\{{\boldsymbol{\beta}}_{l}^{k}\}, \{\gamma_{s}^{k}\} ) 
 \nonumber
\\
& \leq   ( -\frac{1}{\eta_{\mathbf{x}}^{k}}+\frac{L_{w}}{6})||\mathbf{x}^{k+1}-\mathbf{x}^{k}||^{2} . 
\end{align}
Combining (\ref{lm_4}), (\ref{1_r_4}), (\ref{1_p_1}), (\ref{1_q_1}) and (\ref{1_v_1}), we conclude the proof of Lemma \ref{lemma_1}.
\end{proof}

\begin{lemma}
\label{lemma_2}
$\forall k \geq K_{1}$, we have
\begin{align}
& {L}_{p} (\{\boldsymbol{\mu}^{k+1}_{l}\}, \{ \mathbf{r}_{l}^{k+1}\},\{ \mathbf{p}_{l}^{k+1}\},\{ q_{s}^{k+1}\},\mathbf{x}^{k+1}, \{{\boldsymbol{\lambda}}_{l}^{k+1}\},\{{\boldsymbol{\alpha}}_{l}^{k+1}\},\{{\boldsymbol{\beta}}_{l}^{k+1}\}, \{\gamma_{s}^{k+1}\} ) \nonumber \\
& - {L}_{p} (\{\boldsymbol{\mu}^{k}_{l}\}, \{ \mathbf{r}_{l}^{k}\},\{ \mathbf{p}_{l}^{k}\},\{ q_{s}^{k}\},\mathbf{x}^{k}, \{{\boldsymbol{\lambda}}_{l}^{k}\},\{{\boldsymbol{\alpha}}_{l}^{k}\},\{{\boldsymbol{\beta}}_{l}^{k}\}, \{\gamma_{s}^{k}\} ) 
\nonumber \\
& \leq (\frac{1}{a_1} + \frac{1}{a_2} + \frac{1}{a_3} + \frac{ w_{\boldsymbol{\mu}} L_{w}M^{\frac{1}{2}}  }{3} + 1 -\frac{1}{\eta_{\boldsymbol{\mu}}^{k}})\sum_{l=1}^{L}\|\boldsymbol{\mu}_{l}^{k+1}-\boldsymbol{\mu}_{l}^{k}\|^2
\nonumber \\
&+  (\frac{4w_{\mathbf{r}}^2}{a_1}+\frac{w_{\mathbf{r}}L_{w}M^{\frac{1}{2}}}{3} + w_{{\boldsymbol{\lambda}}}-\frac{1}{\eta_{\mathbf{r}}^{k}} )\sum_{l=1}^{L}\|\mathbf{r}_{l}^{k+1}-\mathbf{r}_{l}^{k}\|^2 \nonumber \\
& + (\frac{4w_{\mathbf{p}}^2}{a_2}+\frac{w_{\mathbf{p}}L_{w}M^{\frac{1}{2}}}{3} + w_{{\boldsymbol{\alpha}}}-\frac{1}{\eta_{\mathbf{p}}^{k}} )\sum_{l=1}^{L}\|\mathbf{p}_{l}^{k+1}-\mathbf{p}_{l}^{k}\|^2 \nonumber \\
& + (\frac{4w_{q}^2}{a_4}+\frac{w_{q}L_{w}}{3} + w_{\gamma}-\frac{1}{\eta_{q}^{k}} )\sum_{s=1}^{|\mathcal{P}^k|}\|q_{s}^{k+1}-q_{s}^{k}\|^2  \nonumber \\
& + ( -\frac{1}{\eta_{\mathbf{x}}^{k}}+\frac{L_{w}}{6})||\mathbf{x}^{k+1}-\mathbf{x}^{k}||^{2} + \frac{1}{a_3}\sum_{l=1}^{L}\|\mathbf{B}_{l}\mathbf{x}^{k+1}-\mathbf{B}_{l}\mathbf{x}^{k}\|^2 + \frac{1}{a_4}\sum_{s=1}^{|\mathcal{P}^k|}\|\mathbf{b}_s\|^2\|\mathbf{x}^{k+1}-\mathbf{x}^{k}\|^2  \nonumber \\
& +  (\frac{a_1}{2} -  \frac{c_{{\boldsymbol{\lambda}}}^{k-1}-c_{{\boldsymbol{\lambda}}}^k}{2}+\frac{1}{2\eta_{{\boldsymbol{\lambda}}}})\sum_{l=1}^{L}\|{\boldsymbol{\lambda}}_{l}^{k+1}-{\boldsymbol{\lambda}}_{l}^{k}\|^2
+ \frac{c_{{\boldsymbol{\lambda}}}^{k-1}}{2}\sum_{l=1}^{L}( \|{\boldsymbol{\lambda}}_{l}^{k+1}\|^2 - \|{\boldsymbol{\lambda}}_{l}^{k}\|^2  )  \nonumber \\
& + (\frac{a_2}{2} -  \frac{c_{{\boldsymbol{\alpha}}}^{k-1}-c_{{\boldsymbol{\alpha}}}^k}{2}+\frac{1}{2\eta_{{\boldsymbol{\alpha}}}})\sum_{l=1}^{L}\|{\boldsymbol{\alpha}}_{l}^{k+1}-{\boldsymbol{\alpha}}_{l}^{k}\|^2
 + \frac{c_{{\boldsymbol{\alpha}}}^{k-1}}{2}\sum_{l=1}^{L}( \|{\boldsymbol{\alpha}}_{l}^{k+1}\|^2 - \|{\boldsymbol{\alpha}}_{l}^{k}\|^2  )   \nonumber \\
 & + (\frac{a_3}{2} -  \frac{c_{{\boldsymbol{\beta}}}^{k-1}-c_{{\boldsymbol{\beta}}}^k}{2}+\frac{1}{2\eta_{{\boldsymbol{\beta}}}})\sum_{l=1}^{L}\|{\boldsymbol{\beta}}_{l}^{k+1}-{\boldsymbol{\beta}}_{l}^{k}\|^2
 + \frac{c_{{\boldsymbol{\beta}}}^{k-1}}{2}\sum_{l=1}^{L}( \|{\boldsymbol{\beta}}_{l}^{k+1}\|^2 - \|{\boldsymbol{\beta}}_{l}^{k}\|^2  )   \nonumber \\
 & + (\frac{a_4}{2} -  \frac{c_{\gamma}^{k-1}-c_{\gamma}^k}{2}+\frac{1}{2\eta_{\gamma}})\sum_{s=1}^{|\mathcal{P}^k|}\|\gamma_{s}^{k+1}-\gamma_{s}^{k}\|^2
+ \frac{c_{\gamma}^{k-1}}{2}\sum_{s=1}^{|\mathcal{P}^k|}( \|\gamma_{s}^{k+1}\|^2 - \|\gamma_{s}^{k}\|^2  ) 
\nonumber \\
& + \frac{1}{2\eta_{{\boldsymbol{\lambda}}}}\sum_{l=1}^L\|{\boldsymbol{\lambda}}_{l}^{k}-{\boldsymbol{\lambda}}_{l}^{k-1}\|^2 
+ \frac{1}{2\eta_{{\boldsymbol{\alpha}}}}\sum_{l=1}^L\|{\boldsymbol{\alpha}}_{l}^{k}-{\boldsymbol{\alpha}}_{l}^{k-1}\|^2
\nonumber \\
& + \frac{1}{2\eta_{{\boldsymbol{\beta}}}}\sum_{l=1}^L\|{\boldsymbol{\beta}}_{l}^{k}-{\boldsymbol{\beta}}_{l}^{k-1}\|^2 + \frac{1}{2\eta_{\gamma}}\sum_{s=1}^{|\mathcal{P}^k|}\|\gamma_{s}^{k}-\gamma_{s}^{k-1}\|^2,
\end{align}
where $a_1$, $a_2$, $a_3$ and $a_4$ are positive constants.
\end{lemma}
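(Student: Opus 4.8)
The plan is to reduce the bound to \cref{lemma_1} together with an estimate of the change in $L_{p}$ produced by the dual ascent steps. Writing $\mathrm{primal}$ for $(\{\boldsymbol{\mu}_{l}\},\{\mathbf{r}_{l}\},\{\mathbf{p}_{l}\},\{q_{s}\},\mathbf{x})$ and $\mathrm{dual}$ for $(\{\boldsymbol{\lambda}_{l}\},\{\boldsymbol{\alpha}_{l}\},\{\boldsymbol{\beta}_{l}\},\{\gamma_{s}\})$, I would begin from
\[
L_{p}^{k+1}-L_{p}^{k}=\big(L_{p}(\mathrm{primal}^{k+1},\mathrm{dual}^{k})-L_{p}(\mathrm{primal}^{k},\mathrm{dual}^{k})\big)+\big(L_{p}(\mathrm{primal}^{k+1},\mathrm{dual}^{k+1})-L_{p}(\mathrm{primal}^{k+1},\mathrm{dual}^{k})\big).
\]
The first bracket is precisely the quantity bounded in \cref{lemma_1}; it already supplies the $\eta_{\boldsymbol{\mu}}^{k},\eta_{\mathbf{r}}^{k},\eta_{\mathbf{p}}^{k},\eta_{q}^{k},\eta_{\mathbf{x}}^{k}$ coefficients together with the $\tfrac{L_{w}}{6}$, $w_{\boldsymbol{\lambda}}$, $w_{\boldsymbol{\alpha}}$, $w_{\gamma}$ terms that reappear in \cref{lemma_2}. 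Hence only the second bracket, i.e.\ the effect of (\ref{lambda_update})--(\ref{gamma_update}), has to be estimated.

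Since $L_{p}$ in (\ref{ALM_2}) is affine in every dual variable, the second bracket equals a sum of inner products $\langle\boldsymbol{\lambda}_{l}^{k+1}-\boldsymbol{\lambda}_{l}^{k},\mathbf{d}_{\boldsymbol{\lambda},l}^{k+1}\rangle$, $\langle\boldsymbol{\alpha}_{l}^{k+1}-\boldsymbol{\alpha}_{l}^{k},\mathbf{d}_{\boldsymbol{\alpha},l}^{k+1}\rangle$, $\langle\boldsymbol{\beta}_{l}^{k+1}-\boldsymbol{\beta}_{l}^{k},\mathbf{d}_{\boldsymbol{\beta},l}^{k+1}\rangle$ and $(\gamma_{s}^{k+1}-\gamma_{s}^{k})d_{\gamma,s}^{k+1}$, where the $\mathbf{d}^{k+1}$ are the constraint residuals at the freshly updated primal iterate ($\mathbf{d}_{\boldsymbol{\lambda},l}^{k+1}=-\boldsymbol{\mu}_{l}^{k+1}-\rho_{U}\mathbf{1}+\mathbf{r}_{l}^{k+1}\circ\mathbf{r}_{l}^{k+1}$, $\mathbf{d}_{\boldsymbol{\alpha},l}^{k+1}=\boldsymbol{\mu}_{l}^{k+1}-\rho_{U}\mathbf{1}+\mathbf{p}_{l}^{k+1}\circ\mathbf{p}_{l}^{k+1}$, $\mathbf{d}_{\boldsymbol{\beta},l}^{k+1}=\boldsymbol{\mu}_{l}^{k+1}-\mathbf{B}_{l}\mathbf{x}^{k+1}$, $d_{\gamma,s}^{k+1}=\mathbf{b}_{s}^{\top}\mathbf{x}^{k+1}+\kappa_{s}+(q_{s}^{k+1})^{2}$), and workers $l\notin\mathcal{Q}^{k+1}$ drop out because their primal and dual blocks are frozen. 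For each inner product I would split $\mathbf{d}^{k+1}=\mathbf{d}^{k}+(\mathbf{d}^{k+1}-\mathbf{d}^{k})$. The residual increment is bounded by primal increments using the boundedness in \cref{assumption_1} (so that, e.g.\ $\|\mathbf{r}_{l}^{k+1}\circ\mathbf{r}_{l}^{k+1}-\mathbf{r}_{l}^{k}\circ\mathbf{r}_{l}^{k}\|\le 2w_{\mathbf{r}}\|\mathbf{r}_{l}^{k+1}-\mathbf{r}_{l}^{k}\|$), and pairing it with the dual increment by Young's inequality with the free constants $a_{1},a_{2},a_{3},a_{4}$ yields exactly the $\tfrac{a_{i}}{2}\|\cdot\|^{2}$ terms on the dual increments and the coefficients $\tfrac{1}{a_{1}},\tfrac{4w_{\mathbf{r}}^{2}}{a_{1}},\tfrac{1}{a_{2}},\tfrac{4w_{\mathbf{p}}^{2}}{a_{2}},\tfrac{1}{a_{3}},\tfrac{1}{a_{4}}$ on $\|\boldsymbol{\mu}_{l}^{k+1}-\boldsymbol{\mu}_{l}^{k}\|^{2}$, $\|\mathbf{r}_{l}^{k+1}-\mathbf{r}_{l}^{k}\|^{2}$, $\|\mathbf{p}_{l}^{k+1}-\mathbf{p}_{l}^{k}\|^{2}$, $\|q_{s}^{k+1}-q_{s}^{k}\|^{2}$, $\|\mathbf{B}_{l}\mathbf{x}^{k+1}-\mathbf{B}_{l}\mathbf{x}^{k}\|^{2}$ and $\|\mathbf{b}_{s}\|^{2}\|\mathbf{x}^{k+1}-\mathbf{x}^{k}\|^{2}$ appearing in \cref{lemma_2}.

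For the leading term $\langle\boldsymbol{\lambda}_{l}^{k+1}-\boldsymbol{\lambda}_{l}^{k},\mathbf{d}_{\boldsymbol{\lambda},l}^{k}\rangle$ (and its analogues), I would use that, by the definition (\ref{ALM_3}) of the regularized Lagrangian, $\mathbf{d}_{\boldsymbol{\lambda},l}^{k}=\nabla_{\boldsymbol{\lambda}_{l}}\widetilde{L}_{p}^{k-1}+c_{\boldsymbol{\lambda}}^{k-1}\boldsymbol{\lambda}_{l}^{k-1}$, where $\nabla_{\boldsymbol{\lambda}_{l}}\widetilde{L}_{p}^{k-1}$ is the gradient that drove the previous ascent step on this block; substituting the projected-ascent identity $\eta_{\boldsymbol{\lambda}}\nabla_{\boldsymbol{\lambda}_{l}}\widetilde{L}_{p}^{k-1}=(\boldsymbol{\lambda}_{l}^{k}-\boldsymbol{\lambda}_{l}^{k-1})-\mathbf{e}_{l}^{k}$, with the projection correction $\mathbf{e}_{l}^{k}\ge 0$ satisfying $\langle\mathbf{e}_{l}^{k},\boldsymbol{\lambda}_{l}^{k+1}-\boldsymbol{\lambda}_{l}^{k}\rangle\ge 0$ (hence discardable in an upper bound), and applying Young's inequality to $\tfrac{1}{\eta_{\boldsymbol{\lambda}}}\langle\boldsymbol{\lambda}_{l}^{k}-\boldsymbol{\lambda}_{l}^{k-1},\boldsymbol{\lambda}_{l}^{k+1}-\boldsymbol{\lambda}_{l}^{k}\rangle$, gives the $\tfrac{1}{2\eta_{\boldsymbol{\lambda}}}\|\boldsymbol{\lambda}_{l}^{k+1}-\boldsymbol{\lambda}_{l}^{k}\|^{2}$ and $\tfrac{1}{2\eta_{\boldsymbol{\lambda}}}\|\boldsymbol{\lambda}_{l}^{k}-\boldsymbol{\lambda}_{l}^{k-1}\|^{2}$ terms. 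The regularization piece $c_{\boldsymbol{\lambda}}^{k-1}\langle\boldsymbol{\lambda}_{l}^{k-1},\boldsymbol{\lambda}_{l}^{k+1}-\boldsymbol{\lambda}_{l}^{k}\rangle$ is then handled with a polarization identity together with the monotonicity $c_{\boldsymbol{\lambda}}^{k}\le c_{\boldsymbol{\lambda}}^{k-1}$, giving the telescoping $\tfrac{c_{\boldsymbol{\lambda}}^{k-1}}{2}(\|\boldsymbol{\lambda}_{l}^{k+1}\|^{2}-\|\boldsymbol{\lambda}_{l}^{k}\|^{2})$ and the coefficient $-\tfrac{c_{\boldsymbol{\lambda}}^{k-1}-c_{\boldsymbol{\lambda}}^{k}}{2}$ on $\|\boldsymbol{\lambda}_{l}^{k+1}-\boldsymbol{\lambda}_{l}^{k}\|^{2}$; repeating verbatim for $\{\boldsymbol{\alpha}_{l}\},\{\boldsymbol{\beta}_{l}\},\{\gamma_{s}\}$ and adding the \cref{lemma_1} bound completes the proof. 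The step I expect to be the main obstacle is precisely this last one under asynchrony: the "previous ascent step" on worker $l$ actually occurred at its last active iteration $\hat{k}_{l}\le k-1$, not necessarily at $k-1$, so one must carry those staleness gaps through and verify that the sign of the projection corrections still lets every residual cross-term be absorbed without generating stray constant terms; everything else is Cauchy--Schwarz/Young bookkeeping and the boundedness of \cref{assumption_1}.
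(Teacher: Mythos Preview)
Your overall architecture---split into the primal descent bound of Lemma~\ref{lemma_1} plus the dual ascent increment, exploit the affinity of $L_p$ in the duals, and handle the residual increments $\mathbf d^{k+1}-\mathbf d^{k}$ by Young with the free constants $a_1,\dots,a_4$---is correct and is exactly what the paper does. The gap is in your treatment of the ``regularization piece''. From
\[
c_{\boldsymbol\lambda}^{k-1}\langle\boldsymbol\lambda_l^{k-1},\boldsymbol\lambda_l^{k+1}-\boldsymbol\lambda_l^{k}\rangle
=\tfrac{c_{\boldsymbol\lambda}^{k-1}}{2}\bigl(\|\boldsymbol\lambda_l^{k+1}\|^2-\|\boldsymbol\lambda_l^{k}\|^2\bigr)
-\tfrac{c_{\boldsymbol\lambda}^{k-1}}{2}\|\boldsymbol\lambda_l^{k+1}-\boldsymbol\lambda_l^{k}\|^2
-c_{\boldsymbol\lambda}^{k-1}\langle\boldsymbol\lambda_l^{k}-\boldsymbol\lambda_l^{k-1},\boldsymbol\lambda_l^{k+1}-\boldsymbol\lambda_l^{k}\rangle,
\]
no polarization/monotonicity argument removes the last cross term with the required sign; whatever Young bound you put on it produces an extra \emph{positive} multiple of $\|\boldsymbol\lambda_l^{k}-\boldsymbol\lambda_l^{k-1}\|^2$ on top of the $\tfrac{1}{2\eta_{\boldsymbol\lambda}}$ you have already generated, so you cannot simultaneously hit the stated coefficient $\tfrac{1}{2\eta_{\boldsymbol\lambda}}$ on $\|\boldsymbol\lambda_l^{k}-\boldsymbol\lambda_l^{k-1}\|^2$ and the stated $-\tfrac{c_{\boldsymbol\lambda}^{k-1}-c_{\boldsymbol\lambda}^{k}}{2}$ on $\|\boldsymbol\lambda_l^{k+1}-\boldsymbol\lambda_l^{k}\|^2$.

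The paper supplies precisely the missing negative term. Instead of your two-point split $\mathbf d^{k}=\nabla_{\boldsymbol\lambda_l}\widetilde L_p^{\,k-1}+c_{\boldsymbol\lambda}^{k-1}\boldsymbol\lambda_l^{k-1}$, it first bounds the $\widetilde L_p$ change by concavity, then adds the previous projection inequality (\ref{2_lambda_1}) and decomposes the resulting gradient difference into \emph{three} pieces (\ref{n_1}): a primal-only change at fixed $\boldsymbol\lambda_l^{k}$ (which yields the $\tfrac{c^{k-1}-c^{k}}{2}$ terms via (\ref{n_2})), a two-increment comparison handled by Cauchy--Schwarz (\ref{n_3}), and a same-primal comparison at $\boldsymbol\lambda_l^{k}$ versus $\boldsymbol\lambda_l^{k-1}$. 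The last piece is where \emph{strong} concavity of $\widetilde L_p$ with modulus $c_{\boldsymbol\lambda}^{k-1}$ is invoked (\ref{n_4}), producing an extra $-\tfrac{c_{\boldsymbol\lambda}^{k-1}}{2}\|\boldsymbol\lambda_l^{k}-\boldsymbol\lambda_l^{k-1}\|^2$; together with the condition $\eta_{\boldsymbol\lambda}\le 1/c_{\boldsymbol\lambda}^{k-1}$ this exactly absorbs the cross terms your route leaves dangling, and the surviving negative remainder is then dropped to obtain (\ref{n_7}). Your concern about asynchrony is secondary: since $\boldsymbol\lambda_l^{k+1}-\boldsymbol\lambda_l^{k}=\mathbf 0$ for $l\notin\mathcal Q^{k+1}$, those blocks contribute nothing and the inequalities need only hold on active workers.
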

\begin{proof}
According to (\ref{lambda_update}), in the $(k+1)^{th}$ iteration, for $\forall \boldsymbol{\lambda
}$ and $\forall l \in \mathcal{Q}^{k+1}$ it follows that, 
\begin{equation}
\left\langle {\boldsymbol{\lambda}}_l^{k+1}-{\boldsymbol{\lambda}}_l^k-\eta_{\boldsymbol{\lambda}} \nabla_{{\boldsymbol{\lambda}}_l}\widetilde{L}_{p} (\{\boldsymbol{\mu}^{k+1}_{l}\}, \{ \mathbf{r}_{l}^{k+1}\},\{ \mathbf{p}_{l}^{k+1}\},\{ q_{s}^{k+1}\},\mathbf{x}^{k+1}, \{{\boldsymbol{\lambda}}_{l}^{k}\},\{{\boldsymbol{\alpha}}_{l}^{k}\},\{{\boldsymbol{\beta}}_{l}^{k}\}, \{\gamma_{s}^{k}\} ),{\boldsymbol{\lambda}}-{\boldsymbol{\lambda}}_l^{k+1}\right\rangle  \geq    0.
\end{equation}
Let ${\boldsymbol{\lambda}} = {\boldsymbol{\lambda}}_{l}^{k}$
, we can obtain,
\vspace{-5pt}
\begin{equation}
\left\langle  \nabla_{{\boldsymbol{\lambda}}_l}\widetilde{L}_{p} (\{\boldsymbol{\mu}^{k+1}_{l}\}, \{ \mathbf{r}_{l}^{k+1}\},\{ \mathbf{p}_{l}^{k+1}\},\{ q_{s}^{k+1}\},\mathbf{x}^{k+1}, \{{\boldsymbol{\lambda}}_{l}^{k}\},\{{\boldsymbol{\alpha}}_{l}^{k}\},\{{\boldsymbol{\beta}}_{l}^{k}\}, \{\gamma_{s}^{k}\} ) -\frac{1}{
\eta_{\boldsymbol{\lambda}}}(
{\boldsymbol{\lambda}}_l^{k+1}-{\boldsymbol{\lambda}}_l^k),{\boldsymbol{\lambda}}_{l}^{k}-{\boldsymbol{\lambda}}_l^{k+1}\right\rangle \leq 0.
\label{2_lambda_0}
\end{equation}
Likewise, in the $k^{th}$ iteration, we have that, 
\vspace{-5pt}
\begin{equation}
\left\langle  \nabla_{{\boldsymbol{\lambda}}_l}\widetilde{L}_{p} (\{\boldsymbol{\mu}^{k}_{l}\}, \{ \mathbf{r}_{l}^{k}\},\{ \mathbf{p}_{l}^{k}\},\{ q_{s}^{k}\},\mathbf{x}^{k}, \{{\boldsymbol{\lambda}}_{l}^{k-1}\},\{{\boldsymbol{\alpha}}_{l}^{k-1}\},\{{\boldsymbol{\beta}}_{l}^{k-1}\}, \{\gamma_{s}^{k-1}\} ) -\frac{1}{
\eta_{\boldsymbol{\lambda}}}(
{\boldsymbol{\lambda}}_l^{k}-{\boldsymbol{\lambda}}_l^{k-1}),{\boldsymbol{\lambda}}_{l}^{k+1}-{\boldsymbol{\lambda}}_l^{k}\right\rangle \leq 0.
\label{2_lambda_1}
\end{equation}
Since ${\boldsymbol{\lambda}}_l^{k+1} - {\boldsymbol{\lambda}}_l^{k}  = \mathbf 0, l \notin  \mathcal{Q}^{k+1}$, inequality (\ref{2_lambda_1}) holds for $l$.
It can be seen from  (\ref{ALM_3}) that $\widetilde{L}_{p}$ is concave with respect to ${\boldsymbol{\lambda}}_{l}$. Therefore, we can obtain that,
\begin{align}
& \widetilde{L}_{p} (\{\boldsymbol{\mu}^{k+1}_{l}\}, \{ \mathbf{r}_{l}^{k+1}\},\{ \mathbf{p}_{l}^{k+1}\},\{ q_{s}^{k+1}\},\mathbf{x}^{k+1}, \{{\boldsymbol{\lambda}}_{l}^{k+1}\},\{{\boldsymbol{\alpha}}_{l}^{k}\},\{{\boldsymbol{\beta}}_{l}^{k}\}, \{\gamma_{s}^{k}\} ) \nonumber\\
& -\widetilde{L}_{p} (\{\boldsymbol{\mu}^{k+1}_{l}\}, \{ \mathbf{r}_{l}^{k+1}\},\{ \mathbf{p}_{l}^{k+1}\},\{ q_{s}^{k+1}\},\mathbf{x}^{k+1}, \{{\boldsymbol{\lambda}}_{l}^{k}\},\{{\boldsymbol{\alpha}}_{l}^{k}\},\{{\boldsymbol{\beta}}_{l}^{k}\}, \{\gamma_{s}^{k}\} ) \nonumber\\
& \leq\sum\limits_{l=1}^{L}\left\langle\nabla_{{\boldsymbol{\lambda}}_{l}}\widetilde{L}_{p} (\{\boldsymbol{\mu}^{k+1}_{l}\}, \{ \mathbf{r}_{l}^{k+1}\},\{ \mathbf{p}_{l}^{k+1}\},\{ q_{s}^{k+1}\},\mathbf{x}^{k+1}, \{{\boldsymbol{\lambda}}_{l}^{k}\},\{{\boldsymbol{\alpha}}_{l}^{k}\},\{{\boldsymbol{\beta}}_{l}^{k}\}, \{\gamma_{s}^{k}\} ),{\boldsymbol{\lambda}}_{l}^{k+1}-{\boldsymbol{\lambda}}_{l}^{k}\right\rangle \nonumber\\
&  \leq  \sum_{l=1}^{L}\big(\left\langle\nabla_{{\boldsymbol{\lambda}}_{l}}\widetilde{L}_{p} (\{\boldsymbol{\mu}^{k+1}_{l}\}, \{ \mathbf{r}_{l}^{k+1}\},\{ \mathbf{p}_{l}^{k+1}\},\{ q_{s}^{k+1}\},\mathbf{x}^{k+1}, \{{\boldsymbol{\lambda}}_{l}^{k}\},\{{\boldsymbol{\alpha}}_{l}^{k}\},\{{\boldsymbol{\beta}}_{l}^{k}\}, \{\gamma_{s}^{k}\} 
,{\boldsymbol{\lambda}}_{l}^{k+1}-{\boldsymbol{\lambda}}_{l}^{k} \right\rangle \nonumber \\
& +  \frac{1}{\eta_{{\boldsymbol{\lambda}}}}\big\langle{\boldsymbol{\lambda}}_l^k-{\boldsymbol{\lambda}}_l^{k-1},{\boldsymbol{\lambda}}_l^{k+1}-{\boldsymbol{\lambda}}_l^k\big\rangle \big) \nonumber \\
& -\sum_{l=1}^{L}\big(\left\langle\nabla_{{\boldsymbol{\lambda}}_{l}}\widetilde{L}_{p} (\{\boldsymbol{\mu}^{k}_{l}\}, \{ \mathbf{r}_{l}^{k}\},\{ \mathbf{p}_{l}^{k}\},\{ q_{s}^{k}\},\mathbf{x}^{k}, \{{\boldsymbol{\lambda}}_{l}^{k-1}\},\{{\boldsymbol{\alpha}}_{l}^{k-1}\},\{{\boldsymbol{\beta}}_{l}^{k-1}\}, \{\gamma_{s}^{k-1}\}
,{\boldsymbol{\lambda}}_{l}^{k+1}-{\boldsymbol{\lambda}}_{l}^{k} \right\rangle.
\label{2_lambda_2}
\end{align}
We have that, 
\begin{equation}
\begin{aligned}
& \left\langle\nabla_{{\boldsymbol{\lambda}}_{l}}\widetilde{L}_{p} (\{\boldsymbol{\mu}^{k+1}_{l}\}, \{ \mathbf{r}_{l}^{k+1}\},\{ \mathbf{p}_{l}^{k+1}\},\{ q_{s}^{k+1}\},\mathbf{x}^{k+1}, \{{\boldsymbol{\lambda}}_{l}^{k}\},\{{\boldsymbol{\alpha}}_{l}^{k}\},\{{\boldsymbol{\beta}}_{l}^{k}\}, \{\gamma_{s}^{k}\} 
,{\boldsymbol{\lambda}}_{l}^{k+1}-{\boldsymbol{\lambda}}_{l}^{k} \right\rangle \\
& -\left\langle \nabla_{{\boldsymbol{\lambda}}_{l}}\widetilde{L}_{p} (\{\boldsymbol{\mu}^{k}_{l}\}, \{ \mathbf{r}_{l}^{k}\},\{ \mathbf{p}_{l}^{k}\},\{ q_{s}^{k}\},\mathbf{x}^{k}, \{{\boldsymbol{\lambda}}_{l}^{k-1}\},\{{\boldsymbol{\alpha}}_{l}^{k-1}\},\{{\boldsymbol{\beta}}_{l}^{k-1}\}, \{\gamma_{s}^{k-1}\}
,{\boldsymbol{\lambda}}_{l}^{k+1}-{\boldsymbol{\lambda}}_{l}^{k} \right\rangle \\
& = \left\langle\nabla_{{\boldsymbol{\lambda}}_{l}}\widetilde{L}_{p} (\{\boldsymbol{\mu}^{k+1}_{l}\}, \{ \mathbf{r}_{l}^{k+1}\},\{ \mathbf{p}_{l}^{k+1}\},\{ q_{s}^{k+1}\},\mathbf{x}^{k+1}, \{{\boldsymbol{\lambda}}_{l}^{k}\},\{{\boldsymbol{\alpha}}_{l}^{k}\},\{{\boldsymbol{\beta}}_{l}^{k}\}, \{\gamma_{s}^{k}\} 
,{\boldsymbol{\lambda}}_{l}^{k+1}-{\boldsymbol{\lambda}}_{l}^{k} \right\rangle  \\ 
& - \left\langle\nabla_{{\boldsymbol{\lambda}}_{l}}\widetilde{L}_{p} (\{\boldsymbol{\mu}^{k}_{l}\}, \{ \mathbf{r}_{l}^{k}\},\{ \mathbf{p}_{l}^{k}\},\{ q_{s}^{k}\},\mathbf{x}^{k}, \{{\boldsymbol{\lambda}}_{l}^{k}\},\{{\boldsymbol{\alpha}}_{l}^{k}\},\{{\boldsymbol{\beta}}_{l}^{k}\}, \{\gamma_{s}^{k}\} 
,{\boldsymbol{\lambda}}_{l}^{k+1}-{\boldsymbol{\lambda}}_{l}^{k} \right\rangle\\
& + \left\langle\nabla_{{\boldsymbol{\lambda}}_{l}}\widetilde{L}_{p} (\{\boldsymbol{\mu}^{k}_{l}\}, \{ \mathbf{r}_{l}^{k}\},\{ \mathbf{p}_{l}^{k}\},\{ q_{s}^{k}\},\mathbf{x}^{k}, \{{\boldsymbol{\lambda}}_{l}^{k}\},\{{\boldsymbol{\alpha}}_{l}^{k}\},\{{\boldsymbol{\beta}}_{l}^{k}\}, \{\gamma_{s}^{k}\} 
,{\boldsymbol{\lambda}}_{l}^{k+1}-{\boldsymbol{\lambda}}_{l}^{k}-({\boldsymbol{\lambda}}_{l}^{k}-{\boldsymbol{\lambda}}_{l}^{k-1}) \right\rangle\\
& - \left\langle\nabla_{{\boldsymbol{\lambda}}_{l}}\widetilde{L}_{p} (\{\boldsymbol{\mu}^{k}_{l}\}, \{ \mathbf{r}_{l}^{k}\},\{ \mathbf{p}_{l}^{k}\},\{ q_{s}^{k}\},\mathbf{x}^{k}, \{{\boldsymbol{\lambda}}_{l}^{k-1}\},\{{\boldsymbol{\alpha}}_{l}^{k-1}\},\{{\boldsymbol{\beta}}_{l}^{k-1}\}, \{\gamma_{s}^{k-1}\} 
,{\boldsymbol{\lambda}}_{l}^{k+1}-{\boldsymbol{\lambda}}_{l}^{k}-({\boldsymbol{\lambda}}_{l}^{k}-{\boldsymbol{\lambda}}_{l}^{k-1}) \right\rangle  \\
& + \left\langle\nabla_{{\boldsymbol{\lambda}}_{l}}\widetilde{L}_{p} (\{\boldsymbol{\mu}^{k}_{l}\}, \{ \mathbf{r}_{l}^{k}\},\{ \mathbf{p}_{l}^{k}\},\{ q_{s}^{k}\},\mathbf{x}^{k}, \{{\boldsymbol{\lambda}}_{l}^{k}\},\{{\boldsymbol{\alpha}}_{l}^{k}\},\{{\boldsymbol{\beta}}_{l}^{k}\}, \{\gamma_{s}^{k}\} 
, {\boldsymbol{\lambda}}_{l}^{k}-{\boldsymbol{\lambda}}_{l}^{k-1} \right\rangle \\
& - \left\langle\nabla_{{\boldsymbol{\lambda}}_{l}}\widetilde{L}_{p} (\{\boldsymbol{\mu}^{k}_{l}\}, \{ \mathbf{r}_{l}^{k}\},\{ \mathbf{p}_{l}^{k}\},\{ q_{s}^{k}\},\mathbf{x}^{k}, \{{\boldsymbol{\lambda}}_{l}^{k-1}\},\{{\boldsymbol{\alpha}}_{l}^{k-1}\},\{{\boldsymbol{\beta}}_{l}^{k-1}\}, \{\gamma_{s}^{k-1}\} 
,{\boldsymbol{\lambda}}_{l}^{k}-{\boldsymbol{\lambda}}_{l}^{k-1} \right\rangle .
\label{n_1}
\end{aligned}
\end{equation}
According to the definition of $\widetilde{L}_{p}$ provided in (\ref{ALM_3}) and Cauchy-Schwarz inequality, we can obtain that,
\begin{align}
&  \left\langle\nabla_{{\boldsymbol{\lambda}}_{l}}\widetilde{L}_{p} (\{\boldsymbol{\mu}^{k+1}_{l}\}, \{ \mathbf{r}_{l}^{k+1}\},\{ \mathbf{p}_{l}^{k+1}\},\{ q_{s}^{k+1}\},\mathbf{x}^{k+1}, \{{\boldsymbol{\lambda}}_{l}^{k}\},\{{\boldsymbol{\alpha}}_{l}^{k}\},\{{\boldsymbol{\beta}}_{l}^{k}\}, \{\gamma_{s}^{k}\} 
,{\boldsymbol{\lambda}}_{l}^{k+1}-{\boldsymbol{\lambda}}_{l}^{k} \right\rangle \nonumber \\ 
& - \left\langle\nabla_{{\boldsymbol{\lambda}}_{l}}\widetilde{L}_{p} (\{\boldsymbol{\mu}^{k}_{l}\}, \{ \mathbf{r}_{l}^{k}\},\{ \mathbf{p}_{l}^{k}\},\{ q_{s}^{k}\},\mathbf{x}^{k}, \{{\boldsymbol{\lambda}}_{l}^{k}\},\{{\boldsymbol{\alpha}}_{l}^{k}\},\{{\boldsymbol{\beta}}_{l}^{k}\}, \{\gamma_{s}^{k}\} 
,{\boldsymbol{\lambda}}_{l}^{k+1}-{\boldsymbol{\lambda}}_{l}^{k} \right\rangle \nonumber \\
& =  \left\langle\nabla_{{\boldsymbol{\lambda}}_{l}}{L}_{p} (\{\boldsymbol{\mu}^{k+1}_{l}\}, \{ \mathbf{r}_{l}^{k+1}\},\{ \mathbf{p}_{l}^{k+1}\},\{ q_{s}^{k+1}\},\mathbf{x}^{k+1}, \{{\boldsymbol{\lambda}}_{l}^{k}\},\{{\boldsymbol{\alpha}}_{l}^{k}\},\{{\boldsymbol{\beta}}_{l}^{k}\}, \{\gamma_{s}^{k}\} 
,{\boldsymbol{\lambda}}_{l}^{k+1}-{\boldsymbol{\lambda}}_{l}^{k} \right\rangle \nonumber \\
& -\left\langle\nabla_{{\boldsymbol{\lambda}}_{l}}{L}_{p} (\{\boldsymbol{\mu}^{k}_{l}\}, \{ \mathbf{r}_{l}^{k}\},\{ \mathbf{p}_{l}^{k}\},\{ q_{s}^{k}\},\mathbf{x}^{k}, \{{\boldsymbol{\lambda}}_{l}^{k}\},\{{\boldsymbol{\alpha}}_{l}^{k}\},\{{\boldsymbol{\beta}}_{l}^{k}\}, \{\gamma_{s}^{k}\},{\boldsymbol{\lambda}}_{l}^{k+1}-{\boldsymbol{\lambda}}_{l}^{k} \right\rangle  
\nonumber \\ 
& + \frac{c_{{\boldsymbol{\lambda}}}^{k-1}-c_{{\boldsymbol{\lambda}}}^{k}}{2}(||\lambda_{l}^{k+1}||^{2}-||{\boldsymbol{\lambda}}_{l}^{k}||^{2})
-\frac{c_{{\boldsymbol{\lambda}}}^{k-1}-c_{{\boldsymbol{\lambda}}}^{k}}{2}||{\boldsymbol{\lambda}}_{l}^{k+1}-{\boldsymbol{\lambda}}_{l}^{k}||^{2} \nonumber \\ 
& \leq \frac{a_1}{2}\|{\boldsymbol{\lambda}}_{l}^{k+1}-{\boldsymbol{\lambda}}_{l}^{k}\| + 
\frac{1}{a_1}\| \boldsymbol{\mu}_{l}^{k+1}-\boldsymbol{\mu}_{l}^{k} \|^2 
+ \frac{1}{a_1}\|\mathbf{r}_{l}^{k+1}\circ\mathbf{r}_{l}^{k+1} -\mathbf{r}_{l}^{k}\circ \mathbf{r}_{l}^{k}\|^2  
+ \frac{c_{{\boldsymbol{\lambda}}}^{k-1}-c_{{\boldsymbol{\lambda}}}^{k}}{2}(||\lambda_{l}^{k+1}||^{2}-||{\boldsymbol{\lambda}}_{l}^{k}||^{2})  \nonumber 
 \\
& -\frac{c_{{\boldsymbol{\lambda}}}^{k-1}-c_{{\boldsymbol{\lambda}}}^{k}}{2}||{\boldsymbol{\lambda}}_{l}^{k+1}-{\boldsymbol{\lambda}}_{l}^{k}||^{2},  
\label{n_2}
\end{align}
where $a_1 > 0 $ is a constant.
From the definition of $\widetilde{L}_{p}$, with Cauchy-Schwarz inequality, we also have that,  
\begin{equation}
\begin{aligned}
& \left\langle\nabla_{{\boldsymbol{\lambda}}_{l}}\widetilde{L}_{p} (\{\boldsymbol{\mu}^{k}_{l}\}, \{ \mathbf{r}_{l}^{k}\},\{ \mathbf{p}_{l}^{k}\},\{ q_{s}^{k}\},\mathbf{x}^{k}, \{{\boldsymbol{\lambda}}_{l}^{k}\},\{{\boldsymbol{\alpha}}_{l}^{k}\},\{{\boldsymbol{\beta}}_{l}^{k}\}, \{\gamma_{s}^{k}\} 
,{\boldsymbol{\lambda}}_{l}^{k+1}-{\boldsymbol{\lambda}}_{l}^{k}-({\boldsymbol{\lambda}}_{l}^{k}-{\boldsymbol{\lambda}}_{l}^{k-1}) \right\rangle  \\
& - \left\langle\nabla_{{\boldsymbol{\lambda}}_{l}}\widetilde{L}_{p} (\{\boldsymbol{\mu}^{k}_{l}\}, \{ \mathbf{r}_{l}^{k}\},\{ \mathbf{p}_{l}^{k}\},\{ q_{s}^{k}\},\mathbf{x}^{k}, \{{\boldsymbol{\lambda}}_{l}^{k-1}\},\{{\boldsymbol{\alpha}}_{l}^{k-1}\},\{{\boldsymbol{\beta}}_{l}^{k-1}\}, \{\gamma_{s}^{k-1}\} 
,{\boldsymbol{\lambda}}_{l}^{k+1}-{\boldsymbol{\lambda}}_{l}^{k}-({\boldsymbol{\lambda}}_{l}^{k}-{\boldsymbol{\lambda}}_{l}^{k-1}) \right\rangle  \\
& \leq 
\frac{1}{2\eta_{{\boldsymbol{\lambda}}}} \| {\boldsymbol{\lambda}}_{l}^{k+1}-{\boldsymbol{\lambda}}_{l}^{k}-({\boldsymbol{\lambda}}_{l}^{k}-{\boldsymbol{\lambda}}_{l}^{k-1}) \|^2 + 
\frac{\eta_{{\boldsymbol{\lambda}}}}{2}\|   c_{{\boldsymbol{\lambda}}}^{k-1}( {\boldsymbol{\lambda}}_{l}^{k} - {\boldsymbol{\lambda}}^{k-1}_{l}   )       \|^2.
\label{n_3}
\end{aligned}
\end{equation}
Following \cite{xu}, since $\widetilde{L}_{p}$ is strong concave with respect to ${\boldsymbol{\lambda}}$,
we have
\begin{equation}
\begin{aligned}
& \left\langle\nabla_{{\boldsymbol{\lambda}}_{l}}\widetilde{L}_{p} (\{\boldsymbol{\mu}^{k}_{l}\}, \{ \mathbf{r}_{l}^{k}\},\{ \mathbf{p}_{l}^{k}\},\{ q_{s}^{k}\},\mathbf{x}^{k}, \{{\boldsymbol{\lambda}}_{l}^{k}\},\{{\boldsymbol{\alpha}}_{l}^{k}\},\{{\boldsymbol{\beta}}_{l}^{k}\}, \{\gamma_{s}^{k}\} 
, {\boldsymbol{\lambda}}_{l}^{k}-{\boldsymbol{\lambda}}_{l}^{k-1} \right\rangle \\
& - \left\langle\nabla_{{\boldsymbol{\lambda}}_{l}}\widetilde{L}_{p} (\{\boldsymbol{\mu}^{k}_{l}\}, \{ \mathbf{r}_{l}^{k}\},\{ \mathbf{p}_{l}^{k}\},\{ q_{s}^{k}\},\mathbf{x}^{k}, \{{\boldsymbol{\lambda}}_{l}^{k-1}\},\{{\boldsymbol{\alpha}}_{l}^{k-1}\},\{{\boldsymbol{\beta}}_{l}^{k-1}\}, \{\gamma_{s}^{k-1}\} 
,{\boldsymbol{\lambda}}_{l}^{k}-{\boldsymbol{\lambda}}_{l}^{k-1} \right\rangle  \\
& \leq 
 - \frac{1}{2c_{{\boldsymbol{\lambda}}}^{k-1} }  \| c_{{\boldsymbol{\lambda}}}^{k-1}( {\boldsymbol{\lambda}}_{l}^{k} - {\boldsymbol{\lambda}}^{k-1}_{l}   )\|^2    -  \frac{c_{{\boldsymbol{\lambda}}}^{k-1}}{2}\|  {\boldsymbol{\lambda}}_{l}^{k} - {\boldsymbol{\lambda}}^{k-1}_{l}   \|^2.
\label{n_4}
\end{aligned}
\end{equation}
 Since  $\frac{\eta_{{\boldsymbol{\lambda}}}}{2} \leq \frac{1}{2c_{{\boldsymbol{\lambda}}}^{k-1}}$, by employing (\ref{n_1})-(\ref{n_4}), we have that,
\begin{align}
& \left\langle\nabla_{{\boldsymbol{\lambda}}_{l}}\widetilde{L}_{p} (\{\boldsymbol{\mu}^{k+1}_{l}\}, \{ \mathbf{r}_{l}^{k+1}\},\{ \mathbf{p}_{l}^{k+1}\},\{ q_{s}^{k+1}\},\mathbf{x}^{k+1}, \{{\boldsymbol{\lambda}}_{l}^{k}\},\{{\boldsymbol{\alpha}}_{l}^{k}\},\{{\boldsymbol{\beta}}_{l}^{k}\}, \{\gamma_{s}^{k}\} 
,{\boldsymbol{\lambda}}_{l}^{k+1}-{\boldsymbol{\lambda}}_{l}^{k} \right\rangle \nonumber \\
& -\left\langle \nabla_{{\boldsymbol{\lambda}}_{l}}\widetilde{L}_{p} (\{\boldsymbol{\mu}^{k}_{l}\}, \{ \mathbf{r}_{l}^{k}\},\{ \mathbf{p}_{l}^{k}\},\{ q_{s}^{k}\},\mathbf{x}^{k}, \{{\boldsymbol{\lambda}}_{l}^{k-1}\},\{{\boldsymbol{\alpha}}_{l}^{k-1}\},\{{\boldsymbol{\beta}}_{l}^{k-1}\}, \{\gamma_{s}^{k-1}\}
,{\boldsymbol{\lambda}}_{l}^{k+1}-{\boldsymbol{\lambda}}_{l}^{k} \right\rangle \nonumber\\
& \leq \frac{a_1}{2}\|{\boldsymbol{\lambda}}_{l}^{k+1}-{\boldsymbol{\lambda}}_{l}^{k}\|^2 + 
\frac{1}{a_1}\| \boldsymbol{\mu}_{l}^{k+1}-\boldsymbol{\mu}_{l}^{k} \|^2 
+ \frac{1}{a_1}\|\mathbf{r}_{l}^{k+1}\circ\mathbf{r}_{l}^{k+1} -\mathbf{r}_{l}^{k}\circ \mathbf{r}_{l}^{k}\|^2  
+ \frac{c_{{\boldsymbol{\lambda}}}^{k-1}-c_{{\boldsymbol{\lambda}}}^{k}}{2}(||\boldsymbol\lambda_{l}^{k+1}||^{2}-||{\boldsymbol{\lambda}}_{l}^{k}||^{2}) 
\nonumber \\
& -\frac{c_{{\boldsymbol{\lambda}}}^{k-1}-c_{{\boldsymbol{\lambda}}}^{k}}{2}||{\boldsymbol{\lambda}}_{l}^{k+1}-{\boldsymbol{\lambda}}_{l}^{k}||^{2} + \frac{1}{2\eta_{{\boldsymbol{\lambda}}}} \| {\boldsymbol{\lambda}}_{l}^{k+1}-{\boldsymbol{\lambda}}_{l}^{k}-({\boldsymbol{\lambda}}_{l}^{k}-{\boldsymbol{\lambda}}_{l}^{k-1}) \|^2 -   \frac{c_{{\boldsymbol{\lambda}}}^{k-1}}{2}\|  {\boldsymbol{\lambda}}_{l}^{k} - {\boldsymbol{\lambda}}^{k-1}_{l}   \|^2.
\label{n_5}
\end{align}
It can be seen from \cref{assumption_1} that,
\begin{equation}
\frac{1}{a_1}\|\mathbf{r}_{l}^{k+1}\circ\mathbf{r}_{l}^{k+1} -\mathbf{r}_{l}^{k}\circ \mathbf{r}_{l}^{k}\|^2 \leq \frac{4w_r^2}{a_1}\| \mathbf{r}_{l}^{k+1}-\mathbf{r}_{l}^{k} \|^2.
\label{n_6}
\end{equation}
In addition, the following equality can be obtained,
\begin{equation}
\label{2_lambda_3}
\frac{1}{\eta_{\boldsymbol{\lambda}}}\left<{\boldsymbol{\lambda}}_l^k-{\boldsymbol{\lambda}}_l^{k-1},{\boldsymbol{\lambda}}_l^{k+1}-{\boldsymbol{\lambda}}_l^k\right> = 
\frac{1}{2\eta_{\boldsymbol{\lambda}}}||{\boldsymbol{\lambda}}_l^{k+1}-\boldsymbol\lambda_l^k||^2 + \frac{1}{2\eta_{\boldsymbol{\lambda}}}||{\boldsymbol{\lambda}}_l^k-{\boldsymbol{\lambda}}_l^{k-1}||^2 - \frac{1}{2\eta_{{\boldsymbol{\lambda}}}} \| {\boldsymbol{\lambda}}_{l}^{k+1}-{\boldsymbol{\lambda}}_{l}^{k}-({\boldsymbol{\lambda}}_{l}^{k}-{\boldsymbol{\lambda}}_{l}^{k-1}) \|^2.
\end{equation}
Combining (\ref{2_lambda_2}), (\ref{n_5}), (\ref{n_6}) with (\ref{2_lambda_3}), we can obtain,
\begin{equation}
\label{n_7}
\begin{aligned}
& {L}_{p} (\{\boldsymbol{\mu}^{k+1}_{l}\}, \{ \mathbf{r}_{l}^{k+1}\},\{ \mathbf{p}_{l}^{k+1}\},\{ q_{s}^{k+1}\},\mathbf{x}^{k+1}, \{{\boldsymbol{\lambda}}_{l}^{k+1}\},\{{\boldsymbol{\alpha}}_{l}^{k}\},\{{\boldsymbol{\beta}}_{l}^{k}\}, \{\gamma_{s}^{k}\} )\\
&-{L}_{p} (\{\boldsymbol{\mu}^{k+1}_{l}\}, \{ \mathbf{r}_{l}^{k+1}\},\{ \mathbf{p}_{l}^{k+1}\},\{ q_{s}^{k+1}\},\mathbf{x}^{k+1}, \{{\boldsymbol{\lambda}}_{l}^{k}\},\{{\boldsymbol{\alpha}}_{l}^{k}\},\{{\boldsymbol{\beta}}_{l}^{k}\}, \{\gamma_{s}^{k}\} ) \\
& \leq \frac{1}{a_1}\sum_{l=1}^{L}\|\boldsymbol{\mu}_{l}^{k+1}-\boldsymbol{\mu}_{l}^{k}\|^2 + 
\frac{4w_r^2}{a_1}\sum_{l=1}^{L}\|\mathbf{r}_{l}^{k+1}-\mathbf{r}_{l}^{k}\|^2 +
(\frac{a_1}{2} -  \frac{c_{{\boldsymbol{\lambda}}}^{k-1}-c_{{\boldsymbol{\lambda}}}^k}{2}+\frac{1}{2\eta_{{\boldsymbol{\lambda}}}})\sum_{l=1}^{L}\|{\boldsymbol{\lambda}}_{l}^{k+1}-{\boldsymbol{\lambda}}_{l}^{k}\|^2
 \\
& + \frac{c_{{\boldsymbol{\lambda}}}^{k-1}}{2}\sum_{l=1}^{L}( \|{\boldsymbol{\lambda}}_{l}^{k+1}\|^2 - \|{\boldsymbol{\lambda}}_{l}^{k}\|^2  ) + \frac{1}{2\eta_{{\boldsymbol{\lambda}}}}\sum_{l=1}^L\|{\boldsymbol{\lambda}}_{l}^{k}-{\boldsymbol{\lambda}}_{l}^{k-1}\|^2.
\end{aligned}
\end{equation}
Likewise, similar results can be obtained for other variables : 
\begin{equation}
\begin{aligned}
\label{2_alpha_1}
&  {L}_{p} (\{\boldsymbol{\mu}^{k+1}_{l}\}, \{ \mathbf{r}_{l}^{k+1}\},\{ \mathbf{p}_{l}^{k+1}\},\{ q_{s}^{k+1}\},\mathbf{x}^{k+1}, \{{\boldsymbol{\lambda}}_{l}^{k+1}\},\{{\boldsymbol{\alpha}}_{l}^{k+1}\},\{{\boldsymbol{\beta}}_{l}^{k}\}, \{\gamma_{s}^{k}\} ) \\
& -{L}_{p} (\{\boldsymbol{\mu}^{k+1}_{l}\}, \{ \mathbf{r}_{l}^{k+1}\},\{ \mathbf{p}_{l}^{k+1}\},\{ q_{s}^{k+1}\},\mathbf{x}^{k+1}, \{{\boldsymbol{\lambda}}_{l}^{k+1}\},\{{\boldsymbol{\alpha}}_{l}^{k}\},\{{\boldsymbol{\beta}}_{l}^{k}\}, \{\gamma_{s}^{k}\} )  \\
& \leq \frac{1}{a_2}\sum_{l=1}^{L}\|\boldsymbol{\mu}_{l}^{k+1}-\boldsymbol{\mu}_{l}^{k}\|^2 + 
\frac{4w_{\mathbf{p}}^2}{a_2}\sum_{l=1}^{L}\|\mathbf{p}_{l}^{k+1}-\mathbf{p}_{l}^{k}\|^2 +
(\frac{a_2}{2} -  \frac{c_{{\boldsymbol{\alpha}}}^{k-1}-c_{{\boldsymbol{\alpha}}}^k}{2}+\frac{1}{2\eta_{{\boldsymbol{\alpha}}}})\sum_{l=1}^{L}\|{\boldsymbol{\alpha}}_{l}^{k+1}-{\boldsymbol{\alpha}}_{l}^{k}\|^2
 \\
& + \frac{c_{{\boldsymbol{\alpha}}}^{k-1}}{2}\sum_{l=1}^{L}( \|{\boldsymbol{\alpha}}_{l}^{k+1}\|^2 - \|{\boldsymbol{\alpha}}_{l}^{k}\|^2  ) + \frac{1}{2\eta_{{\boldsymbol{\alpha}}}}\sum_{l=1}^L\|{\boldsymbol{\alpha}}_{l}^{k}-{\boldsymbol{\alpha}}_{l}^{k-1}\|^2,
\end{aligned}
\end{equation}
\begin{equation}
\label{2_beta_1}
\begin{aligned}
& {L}_{p} (\{\boldsymbol{\mu}^{k+1}_{l}\}, \{ \mathbf{r}_{l}^{k+1}\},\{ \mathbf{p}_{l}^{k+1}\},\{ q_{s}^{k+1}\},\mathbf{x}^{k+1}, \{{\boldsymbol{\lambda}}_{l}^{k+1}\},\{{\boldsymbol{\alpha}}_{l}^{k+1}\},\{{\boldsymbol{\beta}}_{l}^{k+1}\}, \{\gamma_{s}^{k}\} ) \\
& -{L}_{p} (\{\boldsymbol{\mu}^{k+1}_{l}\}, \{ \mathbf{r}_{l}^{k+1}\},\{ \mathbf{p}_{l}^{k+1}\},\{ q_{s}^{k+1}\},\mathbf{x}^{k+1}, \{{\boldsymbol{\lambda}}_{l}^{k+1}\},\{{\boldsymbol{\alpha}}_{l}^{k+1}\},\{{\boldsymbol{\beta}}_{l}^{k}\}, \{\gamma_{s}^{k}\} )
\\
& \leq \frac{1}{a_3}\sum_{l=1}^{L}\|\boldsymbol{\mu}_{l}^{k+1}-\boldsymbol{\mu}_{l}^{k}\|^2 + 
\frac{1}{a_3}\sum_{l=1}^{L}\|\mathbf{B}_{l}\mathbf{x}^{k+1}-\mathbf{B}_{l}\mathbf{x}^{k}\|^2 +
(\frac{a_3}{2} -  \frac{c_{{\boldsymbol{\beta}}}^{k-1}-c_{{\boldsymbol{\beta}}}^k}{2}+\frac{1}{2\eta_{{\boldsymbol{\beta}}}})\sum_{l=1}^{L}\|{\boldsymbol{\beta}}_{l}^{k+1}-{\boldsymbol{\beta}}_{l}^{k}\|^2
\\
& + \frac{c_{{\boldsymbol{\beta}}}^{k-1}}{2}\sum_{l=1}^{L}( \|{\boldsymbol{\beta}}_{l}^{k+1}\|^2 - \|{\boldsymbol{\beta}}_{l}^{k}\|^2  ) + \frac{1}{2\eta_{{\boldsymbol{\beta}}}}\sum_{l=1}^L\|{\boldsymbol{\beta}}_{l}^{k}-{\boldsymbol{\beta}}_{l}^{k-1}\|^2,
\end{aligned}
\end{equation}
\begin{equation}
\label{2_gamma_1}
\begin{aligned}
& {L}_{p} (\{\boldsymbol{\mu}^{k+1}_{l}\}, \{ \mathbf{r}_{l}^{k+1}\},\{ \mathbf{p}_{l}^{k+1}\},\{ q_{s}^{k+1}\},\mathbf{x}^{k+1}, \{{\boldsymbol{\lambda}}_{l}^{k+1}\},\{{\boldsymbol{\alpha}}_{l}^{k+1}\},\{{\boldsymbol{\beta}}_{l}^{k+1}\}, \{\gamma_{s}^{k+1}\} ) \\
& -{L}_{p} (\{\boldsymbol{\mu}^{k+1}_{l}\}, \{ \mathbf{r}_{l}^{k+1}\},\{ \mathbf{p}_{l}^{k+1}\},\{ q_{s}^{k+1}\},\mathbf{x}^{k+1}, \{{\boldsymbol{\lambda}}_{l}^{k+1}\},\{{\boldsymbol{\alpha}}_{l}^{k+1}\},\{{\boldsymbol{\beta}}_{l}^{k+1}\}, \{\gamma_{s}^{k}\} )
\\
& \leq \frac{1}{a_4}\sum_{s=1}^{|\mathcal{P}^k|}\|\mathbf{b}_s\|^2\|\mathbf{x}^{k+1}-\mathbf{x}^{k}\|^2 + 
\frac{4w_q^2}{a_4}\sum_{s=1}^{|\mathcal{P}^k|}\|{q}_s^{k+1}-{q}_{s}^{k}\|^2 +
(\frac{a_4}{2} -  \frac{c_{\gamma}^{k-1}-c_{\gamma}^k}{2}+\frac{1}{2\eta_{\gamma}})\sum_{s=1}^{|\mathcal{P}^k|}\|\gamma_{s}^{k+1}-\gamma_{s}^{k}\|^2
\\
& + \frac{c_{\gamma}^{k-1}}{2}\sum_{s=1}^{|\mathcal{P}^k|}( \|\gamma_{s}^{k+1}\|^2 - \|\gamma_{s}^{k}\|^2  ) + \frac{1}{2\eta_{\gamma}}\sum_{s=1}^{|\mathcal{P}^k|}\|\gamma_{s}^{k}-\gamma_{s}^{k-1}\|^2,
\end{aligned}
\end{equation}
where $a_2 > 0 $, $a_3 > 0 $ and $a_4 > 0 $ are constant. Combining Lemma \ref{lemma_1} with (\ref{n_7}), (\ref{2_alpha_1}), (\ref{2_beta_1}) and (\ref{2_gamma_1}),
we can obtain that,
\begin{align}
& {L}_{p} (\{\boldsymbol{\mu}^{k+1}_{l}\}, \{ \mathbf{r}_{l}^{k+1}\},\{ \mathbf{p}_{l}^{k+1}\},\{ q_{s}^{k+1}\},\mathbf{x}^{k+1}, \{{\boldsymbol{\lambda}}_{l}^{k+1}\},\{{\boldsymbol{\alpha}}_{l}^{k+1}\},\{{\boldsymbol{\beta}}_{l}^{k+1}\}, \{\gamma_{s}^{k+1}\} ) \nonumber \\
& - {L}_{p} (\{\boldsymbol{\mu}^{k}_{l}\}, \{ \mathbf{r}_{l}^{k}\},\{ \mathbf{p}_{l}^{k}\},\{ q_{s}^{k}\},\mathbf{x}^{k}, \{{\boldsymbol{\lambda}}_{l}^{k}\},\{{\boldsymbol{\alpha}}_{l}^{k}\},\{{\boldsymbol{\beta}}_{l}^{k}\}, \{\gamma_{s}^{k}\} ) 
\nonumber \\
& \leq (\frac{1}{a_1} + \frac{1}{a_2} + \frac{1}{a_3} +\frac{ w_{\boldsymbol{\mu}} L_{w}M^{\frac{1}{2}}  }{3} + 1 -\frac{1}{\eta_{\boldsymbol{\mu}}^{k}})\sum_{l=1}^{L}\|\boldsymbol{\mu}_{l}^{k+1}-\boldsymbol{\mu}_{l}^{k}\|^2
+  (\frac{4w_{\mathbf{r}}^2}{a_1}+\frac{w_{\mathbf{r}}L_{w}M^{\frac{1}{2}}}{3} + w_{{\boldsymbol{\lambda}}}-\frac{1}{\eta_{\mathbf{r}}^{k}} )\sum_{l=1}^{L}\|\mathbf{r}_{l}^{k+1}-\mathbf{r}_{l}^{k}\|^2 \nonumber \\
& + (\frac{4w_{\mathbf{p}}^2}{a_2}+\frac{w_{\mathbf{p}}L_{w}M^{\frac{1}{2}}}{3} + w_{{\boldsymbol{\alpha}}}-\frac{1}{\eta_{\mathbf{p}}^{k}} )\sum_{l=1}^{L}\|\mathbf{p}_{l}^{k+1}-\mathbf{p}_{l}^{k}\|^2 + (\frac{4w_{q}^2}{a_4}+\frac{w_{q}L_{w}}{3} + w_{\gamma}-\frac{1}{\eta_{q}^{k}} )\sum_{s=1}^{|\mathcal{P}^k|}\|q_{s}^{k+1}-q_{s}^{k}\|^2  \nonumber \\
& + ( -\frac{1}{\eta_{\mathbf{x}}^{k}}+\frac{L_{w}}{6})||\mathbf{x}^{k+1}-\mathbf{x}^{k}||^{2} + \frac{1}{a_3}\sum_{l=1}^{L}\|\mathbf{B}_{l}\mathbf{x}^{k+1}-\mathbf{B}_{l}\mathbf{x}^{k}\|^2 + \frac{1}{a_4}\sum_{s=1}^{|\mathcal{P}^k|}\|\mathbf{b}_s\|^2\|\mathbf{x}^{k+1}-\mathbf{x}^{k}\|^2  \nonumber \\
& +  (\frac{a_1}{2} -  \frac{c_{{\boldsymbol{\lambda}}}^{k-1}-c_{{\boldsymbol{\lambda}}}^k}{2}+\frac{1}{2\eta_{{\boldsymbol{\lambda}}}})\sum_{l=1}^{L}\|{\boldsymbol{\lambda}}_{l}^{k+1}-{\boldsymbol{\lambda}}_{l}^{k}\|^2
+ \frac{c_{{\boldsymbol{\lambda}}}^{k-1}}{2}\sum_{l=1}^{L}( \|{\boldsymbol{\lambda}}_{l}^{k+1}\|^2 - \|{\boldsymbol{\lambda}}_{l}^{k}\|^2  ) + \frac{1}{2\eta_{{\boldsymbol{\lambda}}}}\sum_{l=1}^L\|{\boldsymbol{\lambda}}_{l}^{k}-{\boldsymbol{\lambda}}_{l}^{k-1}\|^2  \nonumber \\
& + (\frac{a_2}{2} -  \frac{c_{{\boldsymbol{\alpha}}}^{k-1}-c_{{\boldsymbol{\alpha}}}^k}{2}+\frac{1}{2\eta_{{\boldsymbol{\alpha}}}})\sum_{l=1}^{L}\|{\boldsymbol{\alpha}}_{l}^{k+1}-{\boldsymbol{\alpha}}_{l}^{k}\|^2
 + \frac{c_{{\boldsymbol{\alpha}}}^{k-1}}{2}\sum_{l=1}^{L}( \|{\boldsymbol{\alpha}}_{l}^{k+1}\|^2 - \|{\boldsymbol{\alpha}}_{l}^{k}\|^2  ) + \frac{1}{2\eta_{{\boldsymbol{\alpha}}}}\sum_{l=1}^L\|{\boldsymbol{\alpha}}_{l}^{k}-{\boldsymbol{\alpha}}_{l}^{k-1}\|^2  \nonumber \\
 & + (\frac{a_3}{2} -  \frac{c_{{\boldsymbol{\beta}}}^{k-1}-c_{{\boldsymbol{\beta}}}^k}{2}+\frac{1}{2\eta_{{\boldsymbol{\beta}}}})\sum_{l=1}^{L}\|{\boldsymbol{\beta}}_{l}^{k+1}-{\boldsymbol{\beta}}_{l}^{k}\|^2
 + \frac{c_{{\boldsymbol{\beta}}}^{k-1}}{2}\sum_{l=1}^{L}( \|{\boldsymbol{\beta}}_{l}^{k+1}\|^2 - \|{\boldsymbol{\beta}}_{l}^{k}\|^2  ) + \frac{1}{2\eta_{{\boldsymbol{\beta}}}}\sum_{l=1}^L\|{\boldsymbol{\beta}}_{l}^{k}-{\boldsymbol{\beta}}_{l}^{k-1}\|^2  \nonumber \\
 & + (\frac{a_4}{2} -  \frac{c_{\gamma}^{k-1}-c_{\gamma}^k}{2}+\frac{1}{2\eta_{\gamma}})\sum_{s=1}^{|\mathcal{P}^k|}\|\gamma_{s}^{k+1}-\gamma_{s}^{k}\|^2
+ \frac{c_{\gamma}^{k-1}}{2}\sum_{s=1}^{|\mathcal{P}^k|}( \|\gamma_{s}^{k+1}\|^2 - \|\gamma_{s}^{k}\|^2  ) + \frac{1}{2\eta_{\gamma}}\sum_{s=1}^{|\mathcal{P}^k|}\|\gamma_{s}^{k}-\gamma_{s}^{k-1}\|^2,
\end{align}
which concludes the proof of Lemma \ref{lemma_2}.
\end{proof}

\begin{lemma}
\label{lemma_3}
Define $S_1^{k+1}$, $S_2^{k+1}$, $S_3^{k+1}$, $S_4^{k+1}$, $F^{k+1}$ and $a_5$ as :
\begin{equation}
\begin{aligned}
& S_1^{k+1}=\frac{4}{\eta_{\boldsymbol{\lambda}}^2c_{{\boldsymbol{\lambda}}}^{k+1}}\sum\limits_{l=1}^{L}||\lambda_l^{k+1}-\lambda_l^k||^2-\frac{4}{\eta_{{\boldsymbol{\lambda}}}}(\frac{c_{{\boldsymbol{\lambda}}}^{k-1}}{c_{{\boldsymbol{\lambda}}}^k}-1)\sum\limits_{l=1}^{L}||{\boldsymbol{\lambda}}_l^{k+1}||^2, \\
& S_2^{k+1}=\frac{4}{\eta_{{\boldsymbol{\alpha}}^2} c_{{\boldsymbol{\alpha}}}^{k+1}}\sum\limits_{l=1}^{L}||\alpha^{k+1}-\alpha^k||^2-\frac{4}{\eta_{{\boldsymbol{\alpha}}}}(\frac{c_{{\boldsymbol{\alpha}}}^{k-1}}{c_{{\boldsymbol{\alpha}}}^k}-1)\sum\limits_{l=1}^{L}||{\boldsymbol{\alpha}}_l^{k+1}||^2, \\
& S_3^{k+1}=\frac{4}{\eta_{\boldsymbol{\beta}}^2c_{{\boldsymbol{\beta}}}^{k+1}}\sum\limits_{l=1}^{L}||\beta^{k+1}-\beta^k||^2-\frac{4}{\eta_{{\boldsymbol{\beta}}}}(\frac{c_{{\boldsymbol{\beta}}}^{k-1}}{c_{{\boldsymbol{\beta}}}^k}-1)\sum\limits_{l=1}^{L}||{\boldsymbol{\beta}}^{k+1}_l||^2,\\
& S_4^{k+1}=\frac{4}{\eta_\gamma^2c_{\gamma}^{k+1}}\sum\limits_{s=1}^{|\mathcal{P}^k|}||\gamma_s^{k+1}-\gamma_s^k||^2-\frac{4}{\eta_{\gamma}}(\frac{c_{\gamma}^{k-1}}{c_{\gamma}^k}-1)\sum\limits_{l=1}^{L}||\gamma_s^{k+1}||^2.
\end{aligned}
\end{equation}
\begin{equation}
\begin{aligned}
F^{k+1} = & \quad {L}_{p} (\{\boldsymbol{\mu}^{k+1}_{l}\}, \{ \mathbf{r}_{l}^{k+1}\},\{ \mathbf{p}_{l}^{k+1}\},\{ q_{s}^{k+1}\},\mathbf{x}^{k+1}, \{{\boldsymbol{\lambda}}_{l}^{k+1}\},\{{\boldsymbol{\alpha}}_{l}^{k+1}\},\{{\boldsymbol{\beta}}_{l}^{k+1}\}, \{\gamma_{s}^{k+1}\} ) \\[3mm]
& \quad +   S_{1}^{k+1}+S_{2}^{k+1}+ S_{3}^{k+1}+ S_{4}^{k+1}  \\
& \quad -\frac{7}{2\eta_{\boldsymbol{\lambda}}}\sum\limits_{l=1}^{L}||{\boldsymbol{\lambda}}_l^{k+1}-{\boldsymbol{\lambda}}_l^k||^2-\frac{c_{{\boldsymbol{\lambda}}}^k}{2}\sum\limits_{l=1}^{L}||{\boldsymbol{\lambda}}_l^{k+1}||^2  \\
& \quad -\frac{7}{2\eta_{\boldsymbol{\alpha}}}\sum\limits_{l=1}^{L}||{\boldsymbol{\alpha}}_{l}^{k+1}-{\boldsymbol{\alpha}}_l^k||^2-\frac{c_{{\boldsymbol{\alpha}}}^k}{2}\sum\limits_{l=1}^{L}||{\boldsymbol{\alpha}}_l^{k+1}||^2  \\
&\quad -\frac{7}{2\eta_{\boldsymbol{\beta}}}\sum\limits_{l=1}^{L}||{\boldsymbol{\beta}}_{l}^{k+1}-{\boldsymbol{\beta}}_l^k||^2-\frac{c_{{\boldsymbol{\beta}}}^k}{2}\sum\limits_{l=1}^{L}||{\boldsymbol{\beta}}_l^{k+1}||^2\\
& \quad -\frac{7}{2\eta_\gamma}\sum\limits_{s=1}^{|\mathcal{P}^k|}||\gamma_{s}^{k+1}-\gamma_s^k||^2-\frac{c_{\gamma}^k}{2}\sum\limits_{s=1}^{|\mathcal{P}^k|}||\gamma_s^{k+1}||^2. 
\end{aligned}
\end{equation}
\vspace{-5pt}
\begin{equation}
\begin{aligned}
a_5 = &  \max\bigg\{ \eta_{{\boldsymbol{\lambda}}}+\eta_{{\boldsymbol{\alpha}}}+\eta_{{\boldsymbol{\beta}}}+ \frac{ w_{\boldsymbol{\mu}} L_{w}M^{\frac{1}{2}}  }{3}+ 1,
{4w_{\mathbf{r}}^2}\eta_{{\boldsymbol{\lambda}}}+\frac{w_{\mathbf{r}}L_{w}M^{\frac{1}{2}}}{3}  + w_{{\boldsymbol{\lambda}}}, \\ & 
{4w_{\mathbf{p}}^2}\eta_{{\boldsymbol{\alpha}}}+\frac{w_{\mathbf{p}}L_{w}M^{\frac{1}{2}}}{3} 
+ w_{{\boldsymbol{\alpha}}}, 
{4w_{q}^2}\eta_{\gamma}+\frac{w_{q}L_{w}}{3} + w_{\gamma},  
\eta_{{\boldsymbol{\beta}}} + \frac{L_{w}}{6} + \eta_{\gamma}\sum_{s=1}^{|\mathcal{P}^k|}\|\mathbf{b}_s\|^2
\bigg \}.
\end{aligned}
\end{equation}
And then $\forall k \geq K_1$, we have that, 
\begin{equation}
\begin{aligned}
F^{k+1} - F^{k} \leq & (a_5 - \frac{1}{\eta_{\boldsymbol{\mu}}^{k}} +   \frac{16}{\eta_{{\boldsymbol{\lambda}}}(c_{{\boldsymbol{\lambda}}}^k)^2}   + \frac{16}{\eta_{{\boldsymbol{\alpha}}}(c_{{\boldsymbol{\alpha}}}^k)^2}  +  \frac{16}{\eta_{{\boldsymbol{\beta}}}(c_{{\boldsymbol{\beta}}}^k)^2}  ) \sum_{l=1}^{L}\| \boldsymbol{\mu}_l^{k+1} - \boldsymbol{\mu}_l^k \|^2 \\
& +  (a_5-\frac{1}{\eta_{\mathbf{r}}^{k}} + \frac{64w^2_{\mathbf{r}}}{\eta_{{\boldsymbol{\lambda}}}(c_{{\boldsymbol{\lambda}}}^k)^2} )\sum_{l=1}^{L}\|\mathbf{r}_{l}^{k+1}-\mathbf{r}_{l}^{k}\|^2 \\
& + (a_5-\frac{1}{\eta_{\mathbf{p}}^{k}} + \frac{64w^2_{\mathbf{p}}}{\eta_{{\boldsymbol{\alpha}}}(c_{{\boldsymbol{\alpha}}}^k)^2} )\sum_{l=1}^{L}\|\mathbf{p}_{l}^{k+1}-\mathbf{p}_{l}^{k}\|^2 \\
& + (a_5-\frac{1}{\eta_{{q}}^{k}} + \frac{64w^2_{{q}}}{\eta_{{\gamma}}(c_{{\gamma}}^k)^2} )\sum_{s=1}^{|\mathcal{P}^k|}\|{q}_{s}^{k+1}-{q}_{s}^{k}\|^2 \\
& + (a_5 - \frac{1}{\eta_{\mathbf{x}}^{k}} + \frac{16}{\eta_{{\boldsymbol{\beta}}}(c_{{\boldsymbol{\beta}}}^k)^2}  + \frac{16}{\eta_{{\gamma}}(c_{{\gamma}}^k)^2} \sum_{s=1}^{|\mathcal{P}^k|}\|b_{s}\|^2
) \|\mathbf{x}^{k+1}-\mathbf{x}^{k}\|^2\\
& -\frac{1}{10\eta_{\boldsymbol{\lambda}}}\sum\limits_{l=1}^{L}||{\boldsymbol{\lambda}}_l^{k+1}-{\boldsymbol{\lambda}}_l^k||^2
+ \frac{c_{{\boldsymbol{\lambda}}}^{k-1}-c_{{\boldsymbol{\lambda}}}^k}{2}\sum\limits_{l=1}^{L}||{\boldsymbol{\lambda}}_l^{k+1}||^2
\\
& -\frac{1}{10\eta_{\boldsymbol{\alpha}}}\sum\limits_{l=1}^{L}||\boldsymbol\alpha_l^{k+1}-\boldsymbol\alpha_l^k||^2
+ \frac{c_{{\boldsymbol{\alpha}}}^{k-1}-c_{{\boldsymbol{\alpha}}}^k}{2}\sum\limits_{l=1}^{L}||{\boldsymbol{\alpha}}_l^{k+1}||^2
\\
& -\frac{1}{10\eta_{\boldsymbol{\beta}}}\sum\limits_{l=1}^{L}||{\boldsymbol{\beta}}_l^{k+1}-{\boldsymbol{\beta}}_l^k||^2
+ \frac{c_{{\boldsymbol{\beta}}}^{k-1}-c_{{\boldsymbol{\beta}}}^k}{2}\sum\limits_{l=1}^{L}||{\boldsymbol{\beta}}_l^{k+1}||^2
\\
& -\frac{1}{10\eta_\gamma}\sum\nolimits_{s=1}^{|\mathcal{P}^k|}||\gamma_s^{k+1}-\gamma_s^k||^2
+ \frac{c_{\gamma}^{k-1}-c_{\gamma}^k}{2}\sum\nolimits_{s=1}^{|\mathcal{P}^k|}||\gamma_s^{k+1}||^2
\\
& + \frac{4}{\eta_{{\boldsymbol{\lambda}}}}(\frac{c_{{\boldsymbol{\lambda}}}^{k-2}}{c_{{\boldsymbol{\lambda}}}^{k-1}}-\frac{c_{{\boldsymbol{\lambda}}}^{k-1}}{c_{{\boldsymbol{\lambda}}}^{k}})\sum_{l=1}^{L}||{\boldsymbol{\lambda}}_{l}^{k}||^{2}
+ \frac{4}{\eta_{{\boldsymbol{\alpha}}}}(\frac{c_{{\boldsymbol{\alpha}}}^{k-2}}{c_{{\boldsymbol{\alpha}}}^{k-1}}-\frac{c_{{\boldsymbol{\alpha}}}^{k-1}}{c_{{\boldsymbol{\alpha}}}^{k}})\sum_{l=1}^{L}||{\boldsymbol{\alpha}}_{l}^{k}||^{2}
\\
&+ \frac{4}{\eta_{{\boldsymbol{\beta}}}}(\frac{c_{{\boldsymbol{\beta}}}^{k-2}}{c_{{\boldsymbol{\beta}}}^{k-1}}-\frac{c_{{\boldsymbol{\beta}}}^{k-1}}{c_{{\boldsymbol{\beta}}}^{k}})\sum_{l=1}^{L}||{\boldsymbol{\beta}}_{l}^{k}||^{2}
+ \frac{4}{\eta_{\gamma}}(\frac{c_{\gamma}^{k-2}}{c_{\gamma}^{k-1}}-\frac{c_{\gamma}^{k-1}}{c_{\gamma}^{k}})\sum_{s=1}^{|\mathcal{P}^k|}||\gamma_{s}^{k}||^{2} .
\end{aligned}
\end{equation}
\end{lemma}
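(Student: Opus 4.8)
The statement is a one--step descent inequality for the augmented potential $F^{k+1}$, so the natural route is to add to the bounds already available the contributions of the auxiliary quantities $S_1^{k+1},\dots,S_4^{k+1}$ and of the quadratic correction terms $-\frac{7}{2\eta_{\boldsymbol\lambda}}\|{\boldsymbol\lambda}_l^{k+1}-{\boldsymbol\lambda}_l^k\|^2-\frac{c_{\boldsymbol\lambda}^k}{2}\|{\boldsymbol\lambda}_l^{k+1}\|^2$ (and the analogues for ${\boldsymbol\alpha},{\boldsymbol\beta},\gamma$) that enter the definition of $F^{k+1}$. Concretely I would write $F^{k+1}-F^{k}$ as the sum of three pieces: the increment of $L_p$ along the iterates, the increments $\sum_{i=1}^{4}(S_i^{k+1}-S_i^{k})$, and the increments of the correction terms. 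The first piece is bounded directly by \cref{lemma_2}. The plan is then to show that the other two pieces cancel the ``leftover'' quantities $\frac{1}{2\eta_{\boldsymbol\lambda}}\|{\boldsymbol\lambda}_l^{k}-{\boldsymbol\lambda}_l^{k-1}\|^2$ (and analogues) produced by \cref{lemma_2}, and convert the half--difference terms $\frac{c_{\boldsymbol\lambda}^{k-1}}{2}\bigl(\|{\boldsymbol\lambda}_l^{k+1}\|^2-\|{\boldsymbol\lambda}_l^k\|^2\bigr)$ into the telescoping form $\frac{c_{\boldsymbol\lambda}^{k-1}-c_{\boldsymbol\lambda}^k}{2}\|{\boldsymbol\lambda}_l^{k+1}\|^2$ that appears on the right-hand side, while generating the ratio terms $\frac{4}{\eta_{\boldsymbol\lambda}}\bigl(\frac{c_{\boldsymbol\lambda}^{k-2}}{c_{\boldsymbol\lambda}^{k-1}}-\frac{c_{\boldsymbol\lambda}^{k-1}}{c_{\boldsymbol\lambda}^{k}}\bigr)\|{\boldsymbol\lambda}_l^{k}\|^2$.

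The heart of the argument is the treatment of the dual increments $\|{\boldsymbol\lambda}_l^{k+1}-{\boldsymbol\lambda}_l^k\|^2$ that enter $S_1^{k+1}$ with the large, iteration--dependent coefficient $\frac{4}{\eta_{\boldsymbol\lambda}^2c_{\boldsymbol\lambda}^{k+1}}$. Using the update rule (\ref{lambda_update}) together with non-expansiveness of the projection onto the nonnegative orthant, one has $\|{\boldsymbol\lambda}_l^{k+1}-{\boldsymbol\lambda}_l^k\|\le\eta_{\boldsymbol\lambda}\|\nabla_{{\boldsymbol\lambda}_l}\widetilde L_p\|$ evaluated at the $(k+1)$-st primal iterates and the $k$-th dual iterates; since from (\ref{ALM_3}) this gradient equals $\bigl(-{\boldsymbol\mu}_l^{k+1}-\rho_U\mathbf 1+\mathbf r_l^{k+1}\!\circ\!\mathbf r_l^{k+1}\bigr)-c_{\boldsymbol\lambda}^k{\boldsymbol\lambda}_l^k$, I would split it into the change of the primal residual from step $k$ to $k+1$ — controlled, via Cauchy--Schwarz and the bound $\|\mathbf r_l^{k+1}\!\circ\!\mathbf r_l^{k+1}-\mathbf r_l^{k}\!\circ\!\mathbf r_l^{k}\|\le 2w_{\mathbf r}\|\mathbf r_l^{k+1}-\mathbf r_l^{k}\|$ of \cref{assumption_1}, by $\|{\boldsymbol\mu}_l^{k+1}-{\boldsymbol\mu}_l^k\|^2$ and $4w_{\mathbf r}^2\|\mathbf r_l^{k+1}-\mathbf r_l^k\|^2$ — and the regularization part $c_{\boldsymbol\lambda}^k{\boldsymbol\lambda}_l^k$, together with the value of the residual at step $k$ re-expressed through the step-$k$ gradient. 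This is exactly what produces the coefficients $\frac{16}{\eta_{\boldsymbol\lambda}(c_{\boldsymbol\lambda}^k)^2}$ multiplying $\|{\boldsymbol\mu}_l^{k+1}-{\boldsymbol\mu}_l^k\|^2$, the coefficient $\frac{64w_{\mathbf r}^2}{\eta_{\boldsymbol\lambda}(c_{\boldsymbol\lambda}^k)^2}$ multiplying $\|\mathbf r_l^{k+1}-\mathbf r_l^k\|^2$, and the $\|{\boldsymbol\lambda}_l^k\|^2$ ratio terms in the right-hand side. At the same time the portion $-\frac{4}{\eta_{\boldsymbol\lambda}^2c_{\boldsymbol\lambda}^k}\|{\boldsymbol\lambda}_l^k-{\boldsymbol\lambda}_l^{k-1}\|^2$ coming from $-S_1^k$, combined with $+\frac{7}{2\eta_{\boldsymbol\lambda}}\|{\boldsymbol\lambda}_l^k-{\boldsymbol\lambda}_l^{k-1}\|^2$ from the correction increment and $+\frac{1}{2\eta_{\boldsymbol\lambda}}\|{\boldsymbol\lambda}_l^k-{\boldsymbol\lambda}_l^{k-1}\|^2$ from \cref{lemma_2}, leaves a net nonpositive coefficient on $\|{\boldsymbol\lambda}_l^k-{\boldsymbol\lambda}_l^{k-1}\|^2$ — this uses $\eta_{\boldsymbol\lambda}c_{\boldsymbol\lambda}^k=(k+1)^{-1/4}\le 1$ — which is precisely why that term does not appear at all on the right-hand side, and it also leaves the coefficient of $\|{\boldsymbol\lambda}_l^{k+1}-{\boldsymbol\lambda}_l^k\|^2$ below $-\frac{1}{10\eta_{\boldsymbol\lambda}}$. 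The identical chain is run for ${\boldsymbol\alpha}$ (yielding the $\mathbf p$-terms), for ${\boldsymbol\beta}$ (yielding the $\mathbf B_l\mathbf x$-terms, which are then folded into $\|\mathbf x^{k+1}-\mathbf x^k\|^2$ after $\|\mathbf B_l\mathbf x\|\le\|\mathbf B_l\|\|\mathbf x\|$), and for $\gamma$ (yielding the $q$-terms and the $\sum_s\|\mathbf b_s\|^2\|\mathbf x^{k+1}-\mathbf x^k\|^2$ term).

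What remains is bookkeeping: substitute the $L_p$-bound of \cref{lemma_2}, collect the coefficient of each of the nine increment blocks $\{{\boldsymbol\mu}_l\},\{\mathbf r_l\},\{\mathbf p_l\},\{q_s\},\mathbf x,\{{\boldsymbol\lambda}_l\},\{{\boldsymbol\alpha}_l\},\{{\boldsymbol\beta}_l\},\{\gamma_s\}$, and bound every ``constant--type'' contribution uniformly by the single quantity $a_5$ — this is exactly the purpose of the maximum in the definition of $a_5$, and of the lower bounds $\underline{c_{\boldsymbol\lambda}},\dots$: because $c_{\boldsymbol\lambda}^k$ is decreasing, every term of the form $\frac{1}{\eta_{\boldsymbol\lambda}^k(\cdots)}$, $\frac{w_{\mathbf r}L_w M^{1/2}}{3}$, etc., is dominated at every $k$ by its value at $k=0$. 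The step where I expect to spend most effort is precisely this assembly: tracking the many cross terms across the eight variable blocks without sign or factor errors, and in particular verifying that the re-expression of the growing-coefficient dual increments reproduces the exact constants $16$, $64$, $\frac{1}{10}$ and the ratio differences $\frac{c^{k-2}}{c^{k-1}}-\frac{c^{k-1}}{c^{k}}$ stated in the lemma rather than mere $O(\cdot)$ versions of them. Once the coefficients are aligned, the claimed inequality follows immediately.
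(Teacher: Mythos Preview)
Your overall decomposition --- split $F^{k+1}-F^k$ into the $L_p$ increment (handled by \cref{lemma_2} with $a_i=1/\eta_{\cdot}$), the increments $S_i^{k+1}-S_i^k$, and the increments of the quadratic correction terms --- is exactly the structure the paper uses, and you have correctly identified the cancellation on $\|{\boldsymbol\lambda}_l^{k}-{\boldsymbol\lambda}_l^{k-1}\|^2$ and the target coefficients. But the mechanism you propose for controlling the growing term $\frac{4}{\eta_{\boldsymbol\lambda}^2 c_{\boldsymbol\lambda}^{k+1}}\|{\boldsymbol\lambda}_l^{k+1}-{\boldsymbol\lambda}_l^k\|^2$ in $S_1^{k+1}$ has a gap.

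You bound $\|{\boldsymbol\lambda}_l^{k+1}-{\boldsymbol\lambda}_l^k\|\le\eta_{\boldsymbol\lambda}\|\nabla_{{\boldsymbol\lambda}_l}\widetilde L_p\|$ by non-expansiveness and then propose to split the gradient as the change of the primal residual plus ``the residual at step $k$ re-expressed through the step-$k$ gradient''. That re-expression gives $\nabla^{(k+1)}=[\text{res}^{k+1}-\text{res}^k]+\nabla^{(k)}+c_{\boldsymbol\lambda}^{k-1}{\boldsymbol\lambda}_l^{k-1}-c_{\boldsymbol\lambda}^{k}{\boldsymbol\lambda}_l^{k}$, and after squaring you carry a term $\frac{\text{const}}{c_{\boldsymbol\lambda}^{k+1}}\|\nabla^{(k)}\|^2$ with a coefficient that grows like $(k+1)^{1/4}$. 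Non-expansiveness only yields $\|\nabla^{(k)}\|\ge\frac{1}{\eta_{\boldsymbol\lambda}}\|{\boldsymbol\lambda}_l^{k}-{\boldsymbol\lambda}_l^{k-1}\|$, i.e.\ a \emph{lower} bound on $\|\nabla^{(k)}\|$, so you cannot absorb this into the $-S_1^k$ contribution and the recursion does not close.

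The paper obtains the needed recursion differently: it combines the projection variational inequalities at iterations $k$ and $k{+}1$ (choosing ${\boldsymbol\lambda}={\boldsymbol\lambda}_l^k$ in the $(k{+}1)$-st and ${\boldsymbol\lambda}={\boldsymbol\lambda}_l^{k+1}$ in the $k$-th) together with the $c_{\boldsymbol\lambda}^{k-1}$-strong concavity of $\widetilde L_p$ in ${\boldsymbol\lambda}$, to get directly
\[
\tfrac{1}{2\eta_{\boldsymbol\lambda}}\|{\boldsymbol\lambda}_l^{k+1}-{\boldsymbol\lambda}_l^k\|^2-\tfrac{1}{2\eta_{\boldsymbol\lambda}}\|{\boldsymbol\lambda}_l^{k}-{\boldsymbol\lambda}_l^{k-1}\|^2
\le \tfrac{1}{b_1^k}\|{\boldsymbol\mu}_l^{k+1}-{\boldsymbol\mu}_l^k\|^2+\tfrac{4w_{\mathbf r}^2}{b_1^k}\|{\mathbf r}_l^{k+1}-{\mathbf r}_l^k\|^2+\tfrac{b_1^k}{2}\|{\boldsymbol\lambda}_l^{k+1}-{\boldsymbol\lambda}_l^k\|^2+\ldots-\tfrac{c_{\boldsymbol\lambda}^{k-1}}{2}\|{\boldsymbol\lambda}_l^{k}-{\boldsymbol\lambda}_l^{k-1}\|^2.
\]
Multiplying by $\frac{8}{\eta_{\boldsymbol\lambda}c_{\boldsymbol\lambda}^k}$ and choosing $b_1^k=c_{\boldsymbol\lambda}^k/2$ is exactly what produces the constants $\frac{16}{\eta_{\boldsymbol\lambda}(c_{\boldsymbol\lambda}^k)^2}$, $\frac{64w_{\mathbf r}^2}{\eta_{\boldsymbol\lambda}(c_{\boldsymbol\lambda}^k)^2}$, and the ratio terms $\frac{4}{\eta_{\boldsymbol\lambda}}\bigl(\frac{c_{\boldsymbol\lambda}^{k-2}}{c_{\boldsymbol\lambda}^{k-1}}-\frac{c_{\boldsymbol\lambda}^{k-1}}{c_{\boldsymbol\lambda}^{k}}\bigr)$, while keeping the coefficient of $\|{\boldsymbol\lambda}_l^{k+1}-{\boldsymbol\lambda}_l^k\|^2$ bounded (using also $\frac{1}{c_{\boldsymbol\lambda}^{k+1}}-\frac{1}{c_{\boldsymbol\lambda}^{k}}\le\frac{\eta_{\boldsymbol\lambda}}{10}$ to control the index mismatch between $S_1^{k+1}$ and the recursion). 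Once you have this bound on each $S_i^{k+1}-S_i^k$, the remainder of your bookkeeping plan --- adding the \cref{lemma_2} bound, collapsing $\sum_l\|\mathbf B_l\mathbf x^{k+1}-\mathbf B_l\mathbf x^k\|^2=\|\mathbf x^{k+1}-\mathbf x^k\|^2$, and majorizing the constant-type coefficients by $a_5$ --- goes through exactly as you describe.
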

\begin{proof}
Let $a_1 = \frac{1}{\eta_{{\boldsymbol{\lambda}}}}, a_2 = \frac{1}{\eta_{{\boldsymbol{\alpha}}}}, a_3 = \frac{1}{\eta_{{\boldsymbol{\beta}}}}, a_4 = \frac{1}{\eta_{\gamma}},$ and substitute them into the Lemma \ref{lemma_2}, $\forall k \geq K_1$, we have,
\begin{equation}
\label{3_1}
\begin{aligned}
& {L}_{p} (\{\boldsymbol{\mu}^{k+1}_{l}\}, \{ \mathbf{r}_{l}^{k+1}\},\{ \mathbf{p}_{l}^{k+1}\},\{ q_{s}^{k+1}\},\mathbf{x}^{k+1}, \{{\boldsymbol{\lambda}}_{l}^{k+1}\},\{{\boldsymbol{\alpha}}_{l}^{k+1}\},\{{\boldsymbol{\beta}}_{l}^{k+1}\}, \{\gamma_{s}^{k+1}\} ) \\
& -{L}_{p} (\{\boldsymbol{\mu}^{k}_{l}\}, \{ \mathbf{r}_{l}^{k}\},\{ \mathbf{p}_{l}^{k}\},\{ q_{s}^{k}\},\mathbf{x}^{k}, \{{\boldsymbol{\lambda}}_{l}^{k}\},\{{\boldsymbol{\alpha}}_{l}^{k}\},\{{\boldsymbol{\beta}}_{l}^{k}\}, \{\gamma_{s}^{k}\} ) \\
& \leq (\eta_{{\boldsymbol{\lambda}}} + \eta_{{\boldsymbol{\alpha}}} +\eta_{{\boldsymbol{\beta}}} + \frac{ w_{\boldsymbol{\mu}} L_{w}M^{\frac{1}{2}}  }{3} + 1 -\frac{1}{\eta_{\boldsymbol{\mu}}^{k}})\sum_{l=1}^{L}\|\boldsymbol{\mu}_{l}^{k+1}-\boldsymbol{\mu}_{l}^{k}\|^2
+  (4w_{\mathbf{r}}^2\eta_{{\boldsymbol{\lambda}}}+\frac{w_{\mathbf{r}}L_{w}M^{\frac{1}{2}}}{3} + w_{{\boldsymbol{\lambda}}}-\frac{1}{\eta_{\mathbf{r}}^{k}} )\sum_{l=1}^{L}\|\mathbf{r}_{l}^{k+1}-\mathbf{r}_{l}^{k}\|^2  \\ 
& + (4w_{\mathbf{p}}^2\eta_{{\boldsymbol{\alpha}}}+\frac{w_{\mathbf{p}}L_{w}M^{\frac{1}{2}}}{3} + w_{{\boldsymbol{\alpha}}}-\frac{1}{\eta_{\mathbf{p}}^{k}} )\sum_{l=1}^{L}\|\mathbf{p}_{l}^{k+1}-\mathbf{p}_{l}^{k}\|^2 + (4w_{q}^2\eta_{\gamma}+\frac{w_{q}L_{w}}{3} + w_{\gamma}-\frac{1}{\eta_{q}^{k}} )\sum_{s=1}^{|\mathcal{P}^k|}\|q_{s}^{k+1}-q_{s}^{k}\|^2\\
& + ( -\frac{1}{\eta_{\mathbf{x}}^{k}}+\frac{L_{w}}{6})||\mathbf{x}^{k+1}-\mathbf{x}^{k}||^{2} + \eta_{{\boldsymbol{\beta}}}\sum_{l=1}^{L}\|\mathbf{B}_{l}\mathbf{x}^{k+1}-\mathbf{B}_{l}\mathbf{x}^{k}\|^2 + \eta_{\gamma}\sum_{s=1}^{|\mathcal{P}^k|}\|\mathbf{b}_s\|^2\|\mathbf{x}^{k+1}-\mathbf{x}^{k}\|^2 \\
& +  ( -  \frac{c_{{\boldsymbol{\lambda}}}^{k-1}-c_{{\boldsymbol{\lambda}}}^k}{2}+\frac{1}{\eta_{{\boldsymbol{\lambda}}}})\sum_{l=1}^{L}\|{\boldsymbol{\lambda}}_{l}^{k+1}-{\boldsymbol{\lambda}}_{l}^{k}\|^2
+ \frac{c_{{\boldsymbol{\lambda}}}^{k-1}}{2}\sum_{l=1}^{L}( \|{\boldsymbol{\lambda}}_{l}^{k+1}\|^2 - \|{\boldsymbol{\lambda}}_{l}^{k}\|^2  ) + \frac{1}{2\eta_{{\boldsymbol{\lambda}}}}\sum_{l=1}^L\|{\boldsymbol{\lambda}}_{l}^{k}-{\boldsymbol{\lambda}}_{l}^{k-1}\|^2 \\
& + ( -  \frac{c_{{\boldsymbol{\alpha}}}^{k-1}-c_{{\boldsymbol{\alpha}}}^k}{2}+\frac{1}{\eta_{{\boldsymbol{\alpha}}}})\sum_{l=1}^{L}\|{\boldsymbol{\alpha}}_{l}^{k+1}-{\boldsymbol{\alpha}}_{l}^{k}\|^2
 + \frac{c_{{\boldsymbol{\alpha}}}^{k-1}}{2}\sum_{l=1}^{L}( \|{\boldsymbol{\alpha}}_{l}^{k+1}\|^2 - \|{\boldsymbol{\alpha}}_{l}^{k}\|^2  ) + \frac{1}{2\eta_{{\boldsymbol{\alpha}}}}\sum_{l=1}^L\|{\boldsymbol{\alpha}}_{l}^{k}-{\boldsymbol{\alpha}}_{l}^{k-1}\|^2 \\
 & + (- \frac{c_{{\boldsymbol{\beta}}}^{k-1}-c_{{\boldsymbol{\beta}}}^k}{2}+\frac{1}{\eta_{{\boldsymbol{\beta}}}})\sum_{l=1}^{L}\|{\boldsymbol{\beta}}_{l}^{k+1}-{\boldsymbol{\beta}}_{l}^{k}\|^2
 + \frac{c_{{\boldsymbol{\beta}}}^{k-1}}{2}\sum_{l=1}^{L}(\|{\boldsymbol{\beta}}_{l}^{k+1}\|^2- \|{\boldsymbol{\beta}}_{l}^{k}\|^2  ) + \frac{1}{2\eta_{{\boldsymbol{\beta}}}}\sum_{l=1}^L\|{\boldsymbol{\beta}}_{l}^{k}-{\boldsymbol{\beta}}_{l}^{k-1}\|^2 \\
 & + (-  \frac{c_{\gamma}^{k-1}-c_{\gamma}^k}{2}+\frac{1}{\eta_{\gamma}})\sum_{s=1}^{|\mathcal{P}^k|}\|\gamma_{s}^{k+1}-\gamma_{s}^{k}\|^2
+ \frac{c_{\gamma}^{k-1}}{2}\sum_{s=1}^{|\mathcal{P}^k|}( \|\gamma_{s}^{k+1}\|^2 - \|\gamma_{s}^{k}\|^2  ) + \frac{1}{2\eta_{\gamma}}\sum_{s=1}^{|\mathcal{P}^k|}\|\gamma_{s}^{k}-\gamma_{s}^{k-1}\|^2.
\end{aligned}
\end{equation}
According to (\ref{2_lambda_0}) ,(\ref{2_lambda_1}) and (\ref{n_5}), in the ${(k+1)}^{th}$ iteration, we can obtain,
\begin{equation}
\begin{aligned}
&\frac{1}{\eta_{{\boldsymbol{\lambda}}}}\big\langle{\boldsymbol{\lambda}}_{l}^{k+1}-{\boldsymbol{\lambda}}_{l}^{k} - ({\boldsymbol{\lambda}}_{l}^{k}-{\boldsymbol{\lambda}}_{l}^{k-1}) ,
{\boldsymbol{\lambda}}_l^{k+1}-{\boldsymbol{\lambda}}_l^k\big\rangle
 \\
&\leq 
\left\langle\nabla_{{\boldsymbol{\lambda}}_{l}}\widetilde{L}_{p} (\{\boldsymbol{\mu}^{k+1}_{l}\}, \{ \mathbf{r}_{l}^{k+1}\},\{ \mathbf{p}_{l}^{k+1}\},\{ q_{s}^{k+1}\},\mathbf{x}^{k+1}, \{{\boldsymbol{\lambda}}_{l}^{k}\},\{{\boldsymbol{\alpha}}_{l}^{k}\},\{{\boldsymbol{\beta}}_{l}^{k}\}, \{\gamma_{s}^{k}\} 
,{\boldsymbol{\lambda}}_{l}^{k+1}-{\boldsymbol{\lambda}}_{l}^{k} \right\rangle 
\\
&-\left\langle\nabla_{{\boldsymbol{\lambda}}_{l}}\widetilde{L}_{p} (\{\boldsymbol{\mu}^{k}_{l}\}, \{ \mathbf{r}_{l}^{k}\},\{ \mathbf{p}_{l}^{k}\},\{ q_{s}^{k}\},\mathbf{x}^{k}, \{{\boldsymbol{\lambda}}_{l}^{k-1}\},\{{\boldsymbol{\alpha}}_{l}^{k-1}\},\{{\boldsymbol{\beta}}_{l}^{k-1}\}, \{\gamma_{s}^{k-1}\}
,{\boldsymbol{\lambda}}_{l}^{k+1}-{\boldsymbol{\lambda}}_{l}^{k} \right\rangle 
\\
&\leq \frac{1}{b_1^k}\| \boldsymbol{\mu}^{k+1}_{l} -\boldsymbol{\mu}^{k}_{l}  \|^2 + \frac{4w_{\mathbf{r}}^2}{b_1^k}\| \mathbf{r}^{k+1}_{l} -\mathbf{r}^{k}_{l}  \|^2 + 
\frac{b_1^k}{2}\| {\boldsymbol{\lambda}}_{l}^{k+1} -  {\boldsymbol{\lambda}}_{l}^{k}\|^2  + \frac{c_{{\boldsymbol{\lambda}}}^{k-1}-c_{{\boldsymbol{\lambda}}}^{k}}{2}(||{\boldsymbol{\lambda}}_{l}^{k+1}||^{2}-||{\boldsymbol{\lambda}}_{l}^{k}||^{2}) \\ 
& -\frac{c_{{\boldsymbol{\lambda}}}^{k-1}-c_{{\boldsymbol{\lambda}}}^{k}}{2}||{\boldsymbol{\lambda}}_{l}^{k+1}-{\boldsymbol{\lambda}}_{l}^{k}||^{2} + \frac{1}{2\eta_{{\boldsymbol{\lambda}}}} \| {\boldsymbol{\lambda}}_{l}^{k+1}-{\boldsymbol{\lambda}}_{l}^{k}-({\boldsymbol{\lambda}}_{l}^{k}-{\boldsymbol{\lambda}}_{l}^{k-1}) \|^2 -   \frac{c_{{\boldsymbol{\lambda}}}^{k-1}}{2}\|  {\boldsymbol{\lambda}}_{l}^{k} - {\boldsymbol{\lambda}}^{k-1}_{l}   \|^2.
\label{3_lambda_1}
\end{aligned}
\end{equation}
where $b_1^k > 0 $. 
And we have, 
\begin{equation}
\frac{1}{\eta_{{\boldsymbol{\lambda}}}}\big\langle{\boldsymbol{\lambda}}_{l}^{k+1}-{\boldsymbol{\lambda}}_{l}^{k} - ({\boldsymbol{\lambda}}_{l}^{k}-{\boldsymbol{\lambda}}_{l}^{k-1}) ,
{\boldsymbol{\lambda}}_l^{k+1}-{\boldsymbol{\lambda}}_l^k\big\rangle 
= \frac{1}{2\eta_{\boldsymbol{\lambda}}}||{\boldsymbol{\lambda}}_l^{k+1}-\lambda_l^k||^2 - \frac{1}{2\eta_{\boldsymbol{\lambda}}}||{\boldsymbol{\lambda}}_l^k-{\boldsymbol{\lambda}}_l^{k-1}||^2 +  \frac{1}{2\eta_{\boldsymbol{\lambda}}}
\| {\boldsymbol{\lambda}}_{l}^{k+1}-{\boldsymbol{\lambda}}_{l}^{k}-
({\boldsymbol{\lambda}}_{l}^{k}-{\boldsymbol{\lambda}}_{l}^{k-1} )\|^2.
\label{3_lambda_2}
\end{equation}
Combining (\ref{3_lambda_1}) with (\ref{3_lambda_2}),
we have 
\begin{equation}
\begin{aligned}
&\frac{1}{2\eta_{\boldsymbol{\lambda}}}||{\boldsymbol{\lambda}}_l^{k+1}-\boldsymbol\lambda_l^k||^2 - \frac{1}{2\eta_{\boldsymbol{\lambda}}}||{\boldsymbol{\lambda}}_l^k-{\boldsymbol{\lambda}}_l^{k-1}||^2
\leq \frac{1}{b_1^k}\| \boldsymbol{\mu}^{k+1}_{l} -\boldsymbol{\mu}^{k}_{l}  \|^2 + \frac{4w_{\mathbf{r}}^2}{b_1^k}\| \mathbf{r}^{k+1}_{l} -\mathbf{r}^{k}_{l}  \|^2 + 
\frac{b_1^k}{2}\| {\boldsymbol{\lambda}}_{l}^{k+1} -  {\boldsymbol{\lambda}}_{l}^{k}\|^2 \\
& + \frac{c_{{\boldsymbol{\lambda}}}^{k-1}-c_{{\boldsymbol{\lambda}}}^{k}}{2}(||{\boldsymbol{\lambda}}_{l}^{k+1}||^{2}-||{\boldsymbol{\lambda}}_{l}^{k}||^{2})-\frac{c_{{\boldsymbol{\lambda}}}^{k-1}-c_{{\boldsymbol{\lambda}}}^{k}}{2}||{\boldsymbol{\lambda}}_{l}^{k+1}-{\boldsymbol{\lambda}}_{l}^{k}||^{2}
 -   \frac{c_{{\boldsymbol{\lambda}}}^{k-1}}{2}\|  {\boldsymbol{\lambda}}_{l}^{k} - {\boldsymbol{\lambda}}^{k-1}_{l}   \|^2.
\end{aligned}
\end{equation}
Multiplying both sides of by $\frac{8}{\eta_{{\boldsymbol{\lambda}}}c_{{\boldsymbol{\lambda}}}^k}$, we have,
\begin{equation}
\begin{aligned}
&\frac{4}{\eta_{\boldsymbol{\lambda}}^2c_{{\boldsymbol{\lambda}}}^k}||{\boldsymbol{\lambda}}_l^{k+1}-{\boldsymbol{\lambda}}_l^k||^2-\frac{4}{\eta_{\boldsymbol{\lambda}}}(\frac{c_{{\boldsymbol{\lambda}}}^{k-1}-c_{{\boldsymbol{\lambda}}}^k}{c_{{\boldsymbol{\lambda}}}^k})||{\boldsymbol{\lambda}}_l^{k+1}||^2 \\
&\leq\frac{4}{\eta_{{\boldsymbol{\lambda}}}^{2}c_{{\boldsymbol{\lambda}}}^{k}}||{\boldsymbol{\lambda}}_{l}^{k}-{\boldsymbol{\lambda}}_{l}^{k-1}||^{2}-\frac{4}{\eta_{{\boldsymbol{\lambda}}}}(\frac{c_{{\boldsymbol{\lambda}}}^{k-1}-c_{{\boldsymbol{\lambda}}}^{k}}{c_{{\boldsymbol{\lambda}}}^{k}})||\boldsymbol\lambda_{l}^{k}||^{2}  \\ & 
+\frac{4b_{1}^{k}}{\eta_{{\boldsymbol{\lambda}}}c_{{\boldsymbol{\lambda}}}^{k}}||{\boldsymbol{\lambda}}_{l}^{k+1}-{\boldsymbol{\lambda}}_{l}^{k}||^{2}-\frac{4}{\eta_{{\boldsymbol{\lambda}}}}||{\boldsymbol{\lambda}}_{l}^{k}-{\boldsymbol{\lambda}}_{l}^{k-1}||^{2} \\
& + \frac{8}{b_1^k\eta_{{\boldsymbol{\lambda}}}c_{{\boldsymbol{\lambda}}}^k} \| \boldsymbol{\mu}_{l}^{k+1} - \boldsymbol\mu_{l}^k \|^2 
+ \frac{32w^2_{\mathbf{r}}}{b_1^k\eta_{{\boldsymbol{\lambda}}}c_{{\boldsymbol{\lambda}}}^k} \| \mathbf{r}_{l}^{k+1} - \mathbf{r}_{l}^k \|^2 .
\label{3_lambda_3}
\end{aligned}
\end{equation}
Setting $b_1^k = \frac{c_{{\boldsymbol{\lambda}}}^k}{2}$ in (\ref{3_lambda_3}) and combine it with the definition of $S_{1}^k$, we have,
\vspace{-5pt}
\begin{equation}
\label{3_2}
\begin{aligned}
S_1^{k+1}-S_1^k  &
\leq\sum\limits_{l=1}^{L}\frac{4}{\eta_{\boldsymbol{\lambda}}}(\frac{c_{{\boldsymbol{\lambda}}}^{k-2}}{c_{{\boldsymbol{\lambda}}}^{k-1}}-\frac{c_{{\boldsymbol{\lambda}}}^{k-1}}{c_{{\boldsymbol{\lambda}}}^{k}})||{\boldsymbol{\lambda}}_l^k||^2+\sum\limits_{l=1}^{L}(\frac{2}{\eta_{\boldsymbol{\lambda}}}+\frac{4}{\eta_{\boldsymbol{\lambda}}^2}(\frac{1}{c_{{\boldsymbol{\lambda}}}^{k+1}}-\frac{1}{c_{{\boldsymbol{\lambda}}}^{k}}))||{\boldsymbol{\lambda}}_l^{k+1}-{\boldsymbol{\lambda}}_l^k||^2 \\
&-\sum_{l=1}^{L}\frac{4}{\eta_{{\boldsymbol{\lambda}}}}||{\boldsymbol{\lambda}}_{l}^{k}-{\boldsymbol{\lambda}}_{l}^{k-1}||^{2} +  \frac{16}{\eta_{{\boldsymbol{\lambda}}}(c_{{\boldsymbol{\lambda}}}^k)^2} \sum_{l=1}^{L}\| \boldsymbol{\mu}_{l}^{k+1} - \boldsymbol\mu_{l}^k \|^2 
+ \frac{64w^2_{\mathbf{r}}}{\eta_{{\boldsymbol{\lambda}}}(c_{{\boldsymbol{\lambda}}}^k)^2} \sum_{l=1}^{L}\| \mathbf{r}_{l}^{k+1} - \mathbf{r}_{l}^k \|^2 .
\end{aligned}
\end{equation}
Similar results can be obtained for $S_2^k$, $S_3^k$, $S_4^k$,
\begin{equation}
\label{3_3}
\begin{aligned}
S_2^{k+1}-S_2^k &
\leq\sum\limits_{l=1}^{L}\frac{4}{\eta_{\boldsymbol{\alpha}}}(\frac{c_{{\boldsymbol{\alpha}}}^{k-2}}{c_{{\boldsymbol{\alpha}}}^{k-1}}-\frac{c_{{\boldsymbol{\alpha}}}^{k-1}}{c_{{\boldsymbol{\alpha}}}^{k}})||{\boldsymbol{\alpha}}^k_l||^2+\sum\limits_{l=1}^{L}(\frac{2}{\eta_{\boldsymbol{\alpha}}}+\frac{4}{\eta_{\boldsymbol{\alpha}}^2}(\frac{1}{c_{{\boldsymbol{\alpha}}}^{k+1}}-\frac{1}{c_{{\boldsymbol{\alpha}}}^{k}}))||{\boldsymbol{\alpha}}_l^{k+1}-{\boldsymbol{\alpha}}_l^k||^2 \\
&-\sum_{l=1}^{L}\frac{4}{\eta_{{\boldsymbol{\alpha}}}}||{\boldsymbol{\alpha}}_{l}^{k}-{\boldsymbol{\alpha}}_{l}^{k-1}||^{2} +  \frac{16}{\eta_{{\boldsymbol{\alpha}}}(c_{{\boldsymbol{\alpha}}}^k)^2} \sum_{l=1}^{L}\| \boldsymbol{\mu}_{l}^{k+1} - \boldsymbol\mu_{l}^k \|^2 
+ \frac{64w^2_{\mathbf{p}}}{\eta_{{\boldsymbol{\alpha}}}(c_{{\boldsymbol{\alpha}}}^k)^2} \sum_{l=1}^{L}\| \mathbf{p}_{l}^{k+1} - \mathbf{p}_{l}^k \|^2. 
\end{aligned}
\end{equation}
\begin{equation}
\label{3_4}
\begin{aligned}
S_3^{k+1}-S_3^k &
\leq\sum\limits_{l=1}^{L}\frac{4}{\eta_{\boldsymbol{\beta}}}(\frac{c_{{\boldsymbol{\beta}}}^{k-2}}{c_{{\boldsymbol{\beta}}}^{k-1}}-\frac{c_{{\boldsymbol{\beta}}}^{k-1}}{c_{{\boldsymbol{\beta}}}^{k}})||{\boldsymbol{\beta}}_l^k||^2+\sum\limits_{l=1}^{L}(\frac{2}{\eta_{\boldsymbol{\beta}}}+\frac{4}{\eta_{\boldsymbol{\beta}}^2}(\frac{1}{c_{{\boldsymbol{\beta}}}^{k+1}}-\frac{1}{c_{{\boldsymbol{\beta}}}^{k}}))||{\boldsymbol{\beta}}_{l}^{k+1}-{\boldsymbol{\beta}}_l^k||^2 \\
&-\sum_{l=1}^{L}\frac{4}{\eta_{{\boldsymbol{\beta}}}}||{\boldsymbol{\beta}}_{l}^{k}-{\boldsymbol{\beta}}_{l}^{k-1}||^{2} +  \frac{16}{\eta_{{\boldsymbol{\beta}}}(c_{{\boldsymbol{\beta}}}^k)^2} \sum_{l=1}^{L}\| \boldsymbol{\mu}_{l}^{k+1} - \boldsymbol\mu_{l}^k \|^2 
+ \frac{16}{\eta_{{\boldsymbol{\beta}}}(c_{{\boldsymbol{\beta}}}^k)^2} \sum_{l=1}^{L}\| \mathbf{B}_{l}\mathbf{x}^{k+1} - \mathbf{B}_{l}\mathbf{x}^k \|^2 .
\end{aligned}
\end{equation}
\begin{equation}
\label{3_5}
\begin{aligned}
S_4^{k+1}-S_4^k & 
\leq\sum\limits_{s=1}^{|\mathcal{P}^k|}\frac{4}{\eta_\gamma}(\frac{c_{\gamma}^{k-2}}{c_{\gamma}^{k-1}}-\frac{c_{\gamma}^{k-1}}{c_{\gamma}^{k}})||\gamma_s^k||^2+\sum\limits_{s=1}^{|\mathcal{P}^k|}(\frac{2}{\eta_\gamma}+\frac{4}{\eta_\gamma^2}(\frac{1}{c_{\gamma}^{k+1}}-\frac{1}{c_{\gamma}^{k}}))||\gamma_{s}^{k+1}-\gamma_s^k||^2 \\
&-\sum_{s=1}^{|\mathcal{P}^k|}\frac{4}{\eta_{\gamma}}||\gamma_{s}^{k}-\gamma_{s}^{k-1}||^{2} +  \frac{16}{\eta_{\gamma}(c_{\gamma}^k)^2} \sum_{s=1}^{|\mathcal{P}^k|} \|\mathbf{b}_s\|^2  \| \mathbf{x}^{k+1}_l - \mathbf{x}_{l}^k \|^2 
+ \frac{64w^2_{\mathbf{q}}}{\eta_{\gamma}(c_{\gamma}^k)^2} \sum_{s=1}^{|\mathcal{P}^k|}\| q_{s}^{k+1} - q_{s}^k \|^2 .
\end{aligned}    
\end{equation}
Based on the setting of $c_{{\boldsymbol{\lambda}}}^k$, $c_{{\boldsymbol{\alpha}}}^k$, $c_{{\boldsymbol{\beta}}}^k$ and $c_{\gamma}^k$, we can obtain that, $\frac{\eta_{{\boldsymbol{\lambda}}}}{10} \geq \frac{1}{c_{{\boldsymbol{\lambda}}}^{k+1}} - \frac{1}{c_{{\boldsymbol{\lambda}}}^k}$, $\frac{\eta_{{\boldsymbol{\alpha}}}}{10} \geq \frac{1}{c_{{\boldsymbol{\alpha}}}^{k+1}} - \frac{1}{c_{{\boldsymbol{\alpha}}}^k}$, $\frac{\eta_{{\boldsymbol{\beta}}}}{10} \geq \frac{1}{c_{{\boldsymbol{\beta}}}^{k+1}} - \frac{1}{c_{{\boldsymbol{\beta}}}^k}$, $\frac{\eta_{\gamma}}{10} \geq \frac{1}{c_{\gamma}^{k+1}} - \frac{1}{c_{\gamma}^k}$, $\forall k \geq K_1$.
In addition, according to the definition of $\mathbf{B}_l$, the following inequality can be obtained,
\vspace{-5pt}
\begin{equation}
\sum_{l=1}^{L}\| \mathbf{B}_{l}\mathbf{x}^{k+1} - \mathbf{B}_{l}\mathbf{x}^k \|^2 
 =  \| \mathbf{x}^{k+1} - \mathbf{x}^k \|^2 .
 \label{n_8}
\end{equation}
According to the definition of $a_5$,
combining (\ref{3_2})-(\ref{n_8}) with (\ref{3_1}), we can obtain that,
\begin{equation}
\begin{aligned}
F^{k+1} - F^{k} \leq & (a_5 - \frac{1}{\eta_{\boldsymbol{\mu}}^{k}} +   \frac{16}{\eta_{{\boldsymbol{\lambda}}}(c_{{\boldsymbol{\lambda}}}^k)^2}   + \frac{16}{\eta_{{\boldsymbol{\alpha}}}(c_{{\boldsymbol{\alpha}}}^k)^2}  +  \frac{16}{\eta_{{\boldsymbol{\beta}}}(c_{{\boldsymbol{\beta}}}^k)^2}  ) \sum_{l=1}^{L}\| \boldsymbol{\mu}_l^{k+1} - \boldsymbol{\mu}_l^k \|^2 \\
& +  (a_5-\frac{1}{\eta_{\mathbf{r}}^{k}} + \frac{64w^2_{\mathbf{r}}}{\eta_{{\boldsymbol{\lambda}}}(c_{{\boldsymbol{\lambda}}}^k)^2} )\sum_{l=1}^{L}\|\mathbf{r}_{l}^{k+1}-\mathbf{r}_{l}^{k}\|^2 \\
& + (a_5-\frac{1}{\eta_{\mathbf{p}}^{k}} + \frac{64w^2_{\mathbf{p}}}{\eta_{{\boldsymbol{\alpha}}}(c_{{\boldsymbol{\alpha}}}^k)^2} )\sum_{l=1}^{L}\|\mathbf{p}_{l}^{k+1}-\mathbf{p}_{l}^{k}\|^2 \\
& + (a_5-\frac{1}{\eta_{{q}}^{k}} + \frac{64w^2_{{q}}}{\eta_{{\gamma}}(c_{{\gamma}}^k)^2} )\sum_{s=1}^{|\mathcal{P}^k|}\|{q}_{s}^{k+1}-{q}_{s}^{k}\|^2 \\
& + (a_5 - \frac{1}{\eta_{\mathbf{x}}^{k}} + \frac{16}{\eta_{{\boldsymbol{\beta}}}(c_{{\boldsymbol{\beta}}}^k)^2}  + \frac{16}{\eta_{{\gamma}}(c_{{\gamma}}^k)^2} \sum_{s=1}^{|\mathcal{P}^k|}\|b_{s}\|^2
) \|\mathbf{x}^{k+1}-\mathbf{x}^{k}\|^2\\
& -\frac{1}{10\eta_{\boldsymbol{\lambda}}}\sum\limits_{l=1}^{L}||{\boldsymbol{\lambda}}_l^{k+1}-{\boldsymbol{\lambda}}_l^k||^2
+ \frac{c_{{\boldsymbol{\lambda}}}^{k-1}-c_{{\boldsymbol{\lambda}}}^k}{2}\sum\limits_{l=1}^{L}||{\boldsymbol{\lambda}}_l^{k+1}||^2
\\
& -\frac{1}{10\eta_{\boldsymbol{\alpha}}}\sum\limits_{l=1}^{L}||\boldsymbol\alpha_l^{k+1}-\boldsymbol\alpha_l^k||^2
+ \frac{c_{{\boldsymbol{\alpha}}}^{k-1}-c_{{\boldsymbol{\alpha}}}^k}{2}\sum\limits_{l=1}^{L}||{\boldsymbol{\alpha}}_l^{k+1}||^2
\\
& -\frac{1}{10\eta_{\boldsymbol{\beta}}}\sum\limits_{l=1}^{L}||{\boldsymbol{\beta}}_l^{k+1}-{\boldsymbol{\beta}}_l^k||^2
+ \frac{c_{{\boldsymbol{\beta}}}^{k-1}-c_{{\boldsymbol{\beta}}}^k}{2}\sum\limits_{l=1}^{L}||{\boldsymbol{\beta}}_l^{k+1}||^2
\\
& -\frac{1}{10\eta_\gamma}\sum\limits_{s=1}^{|\mathcal{P}^k|}||\gamma_s^{k+1}-\gamma_s^k||^2
+ \frac{c_{\gamma}^{k-1}-c_{\gamma}^k}{2}\sum\limits_{s=1}^{|\mathcal{P}^k|}||\gamma_s^{k+1}||^2
\\
& + \frac{4}{\eta_{{\boldsymbol{\lambda}}}}(\frac{c_{{\boldsymbol{\lambda}}}^{k-2}}{c_{{\boldsymbol{\lambda}}}^{k-1}}-\frac{c_{{\boldsymbol{\lambda}}}^{k-1}}{c_{{\boldsymbol{\lambda}}}^{k}})\sum_{l=1}^{L}||{\boldsymbol{\lambda}}_{l}^{k}||^{2}
+ \frac{4}{\eta_{{\boldsymbol{\alpha}}}}(\frac{c_{{\boldsymbol{\alpha}}}^{k-2}}{c_{{\boldsymbol{\alpha}}}^{k-1}}-\frac{c_{{\boldsymbol{\alpha}}}^{k-1}}{c_{{\boldsymbol{\alpha}}}^{k}})\sum_{l=1}^{L}||{\boldsymbol{\alpha}}_{l}^{k}||^{2}
\\
&+ \frac{4}{\eta_{{\boldsymbol{\beta}}}}(\frac{c_{{\boldsymbol{\beta}}}^{k-2}}{c_{{\boldsymbol{\beta}}}^{k-1}}-\frac{c_{{\boldsymbol{\beta}}}^{k-1}}{c_{{\boldsymbol{\beta}}}^{k}})\sum_{l=1}^{L}||{\boldsymbol{\beta}}_{l}^{k}||^{2}
+ \frac{4}{\eta_{\gamma}}(\frac{c_{\gamma}^{k-2}}{c_{\gamma}^{k-1}}-\frac{c_{\gamma}^{k-1}}{c_{\gamma}^{k}})\sum_{s=1}^{|\mathcal{P}^k|}||\gamma_{s}^{k}||^{2} .
\end{aligned}
\end{equation}
which concludes the proof of Lemma \ref{lemma_3}.
\end{proof}
\clearpage
Finally, based on \cref{lemma_1}, \cref{lemma_2} and \cref{lemma_3}, we provide the proof of Theorem \ref{theorem_2}.
\begin{proof}
First, we set
\begin{equation}
\begin{aligned}
a_6^k = \min\bigg\{  \frac{16}{\eta_{{\boldsymbol{\lambda}}}(c_{{\boldsymbol{\lambda}}}^k)^2}   + \frac{16}{\eta_{{\boldsymbol{\alpha}}}(c_{{\boldsymbol{\alpha}}}^k)^2}  +  \frac{16}{\eta_{{\boldsymbol{\beta}}}(c_{{\boldsymbol{\beta}}}^k)^2},
\frac{64w^2_{\mathbf{r}}}{\eta_{{\boldsymbol{\lambda}}}(c_{{\boldsymbol{\lambda}}}^k)^2},
\frac{64w^2_{\mathbf{p}}}{\eta_{{\boldsymbol{\alpha}}}(c_{{\boldsymbol{\alpha}}}^k)^2}, \\
\frac{64w^2_{{q}}}{\eta_{{\gamma}}(c_{{\gamma}}^k)^2},
\frac{16}{\eta_{{\boldsymbol{\beta}}}(c_{{\boldsymbol{\beta}}}^k)^2}  + \frac{16}{\eta_{{\gamma}}(c_{{\gamma}}^k)^2} \sum_{s=1}^{|\mathcal{P}^k|}\|b_{s}\|^2
\bigg\} \frac{\xi-2}{2} - a_5.
\end{aligned}
\end{equation}
where constant $\xi > 2 $ and satisfies 
\begin{equation}
\begin{aligned}
\min\bigg\{  \frac{16}{\eta_{{\boldsymbol{\lambda}}}(c_{{\boldsymbol{\lambda}}}^0)^2}   + \frac{16}{\eta_{{\boldsymbol{\alpha}}}(c_{{\boldsymbol{\alpha}}}^0)^2}  +  \frac{16}{\eta_{{\boldsymbol{\beta}}}(c_{{\boldsymbol{\beta}}}^0)^2},
\frac{64w^2_{\mathbf{r}}}{\eta_{{\boldsymbol{\lambda}}}(c_{{\boldsymbol{\lambda}}}^0)^2},
\frac{64w^2_{\mathbf{p}}}{\eta_{{\boldsymbol{\alpha}}}(c_{{\boldsymbol{\alpha}}}^0)^2}, \\
\frac{64w^2_{{q}}}{\eta_{{\gamma}}(c_{{\gamma}}^0)^2},
\frac{16}{\eta_{{\boldsymbol{\beta}}}(c_{{\boldsymbol{\beta}}}^0)^2}  + \frac{16}{\eta_{{\gamma}}(c_{{\gamma}}^0)^2} \sum_{s=1}^{|\mathcal{P}^k|}\|b_{s}\|^2
\bigg\} \frac{\xi-2}{2} > a_5.
\end{aligned}
\end{equation}
Thus, we have $a_6^k > 0, \forall k$.
According to the setting of $\eta^k_{{\boldsymbol{\lambda}}}$, $\eta^k_{{\boldsymbol{\alpha}}}$, $\eta^k_{{\boldsymbol{\beta}}}$, $\eta^k_{\gamma}$ and $c^k_{{\boldsymbol{\lambda}}}$, $c^k_{{\boldsymbol{\alpha}}}$, $c^k_{{\boldsymbol{\beta}}}$, $c^k_{\gamma}$, we have,
\begin{equation}
\begin{aligned}
a_5 - \frac{1}{\eta_{\boldsymbol{\mu}}^{k}} +   \frac{16}{\eta_{{\boldsymbol{\lambda}}}(c_{{\boldsymbol{\lambda}}}^k)^2}   + \frac{16}{\eta_{{\boldsymbol{\alpha}}}(c_{{\boldsymbol{\alpha}}}^k)^2}  +  \frac{16}{\eta_{{\boldsymbol{\beta}}}(c_{{\boldsymbol{\beta}}}^k)^2} & \leq -a_6^k, \\
a_5-\frac{1}{\eta_{\mathbf{r}}^{k}} + \frac{64w^2_{\mathbf{r}}}{\eta_{{\boldsymbol{\lambda}}}(c_{{\boldsymbol{\lambda}}}^k)^2} & \leq -a_6^k, \\
a_5-\frac{1}{\eta_{\mathbf{p}}^{k}} + \frac{64w^2_{\mathbf{p}}}{\eta_{{\boldsymbol{\alpha}}}(c_{{\boldsymbol{\alpha}}}^k)^2}  &\leq -a_6^k, \\
(a_5-\frac{1}{\eta_{{q}}^{k}} + \frac{64w^2_{{q}}}{\eta_{{\gamma}}(c_{{\gamma}}^k)^2} )&\leq -a_6^k, \\
a_5 - \frac{1}{\eta_{\mathbf{x}}^{k}} + \frac{16}{\eta_{{\boldsymbol{\beta}}}(c_{{\boldsymbol{\beta}}}^k)^2}  + \frac{16}{\eta_{{\gamma}}(c_{{\gamma}}^k)^2} \sum_{s=1}^{|\mathcal{P}^k|}\|b_{s}\|^2 &\leq -a_6^k .
\end{aligned}
\end{equation}
Combining it with Lemma \ref{lemma_3}, $\forall k \geq K_1$,  we can obtain that,
\begin{equation}
\label{add_1}
\begin{aligned}
& a_6^k\left(\sum_{l=1}^{L}\| \boldsymbol{\mu}_l^{k+1} - \boldsymbol{\mu}_l^k \|^2  + \sum_{l=1}^{L}\|\mathbf{r}_{l}^{k+1}-\mathbf{r}_{l}^{k}\|^2 + 
\sum_{l=1}^{L}\|\mathbf{p}_{l}^{k+1}-\mathbf{p}_{l}^{k}\|^2 + 
\sum_{s=1}^{|\mathcal{P}^k|}\|{q}_{s}^{k+1}-{q}_{s}^{k}\|^2 +
\|\mathbf{x}^{k+1}-\mathbf{x}^{k}\|^2  \right)\\
& \leq   F^{k} - F^{k+1}
-\frac{1}{10\eta_{\boldsymbol{\lambda}}}\sum\limits_{l=1}^{L}||{\boldsymbol{\lambda}}_l^{k+1}-{\boldsymbol{\lambda}}_l^k||^2
+ \frac{c_{{\boldsymbol{\lambda}}}^{k-1}-c_{{\boldsymbol{\lambda}}}^k}{2}\sum\limits_{l=1}^{L}||{\boldsymbol{\lambda}}_l^{k+1}||^2
+ \frac{4}{\eta_{{\boldsymbol{\lambda}}}}(\frac{c_{{\boldsymbol{\lambda}}}^{k-2}}{c_{{\boldsymbol{\lambda}}}^{k-1}}-\frac{c_{{\boldsymbol{\lambda}}}^{k-1}}{c_{{\boldsymbol{\lambda}}}^{k}})\sum_{l=1}^{L}||{\boldsymbol{\lambda}}_{l}^{k}||^{2}
\\
& -\frac{1}{10\eta_{\boldsymbol{\alpha}}}\sum\limits_{l=1}^{L}||\boldsymbol\alpha_l^{k+1}-\boldsymbol\alpha_l^k||^2
+ \frac{c_{{\boldsymbol{\alpha}}}^{k-1}-c_{{\boldsymbol{\alpha}}}^k}{2}\sum\limits_{l=1}^{L}||{\boldsymbol{\alpha}}_l^{k+1}||^2
+ \frac{4}{\eta_{{\boldsymbol{\alpha}}}}(\frac{c_{{\boldsymbol{\alpha}}}^{k-2}}{c_{{\boldsymbol{\alpha}}}^{k-1}}-\frac{c_{{\boldsymbol{\alpha}}}^{k-1}}{c_{{\boldsymbol{\alpha}}}^{k}})\sum_{l=1}^{L}||{\boldsymbol{\alpha}}_{l}^{k}||^{2}
\\
& -\frac{1}{10\eta_{\boldsymbol{\beta}}}\sum\limits_{l=1}^{L}||{\boldsymbol{\beta}}_l^{k+1}-{\boldsymbol{\beta}}_l^k||^2
+ \frac{c_{{\boldsymbol{\beta}}}^{k-1}-c_{{\boldsymbol{\beta}}}^k}{2}\sum\limits_{l=1}^{L}||{\boldsymbol{\beta}}_l^{k+1}||^2
+ \frac{4}{\eta_{{\boldsymbol{\beta}}}}(\frac{c_{{\boldsymbol{\beta}}}^{k-2}}{c_{{\boldsymbol{\beta}}}^{k-1}}-\frac{c_{{\boldsymbol{\beta}}}^{k-1}}{c_{{\boldsymbol{\beta}}}^{k}})\sum_{l=1}^{L}||{\boldsymbol{\beta}}_{l}^{k}||^{2}
\\
& -\frac{1}{10\eta_\gamma}\sum\limits_{s=1}^{|\mathcal{P}^k|}||\gamma_s^{k+1}-\gamma_s^k||^2
+ \frac{c_{\gamma}^{k-1}-c_{\gamma}^k}{2}\sum\limits_{s=1}^{|\mathcal{P}^k|}||\gamma_s^{k+1}||^2
+ \frac{4}{\eta_{\gamma}}(\frac{c_{\gamma}^{k-2}}{c_{\gamma}^{k-1}}-\frac{c_{\gamma}^{k-1}}{c_{\gamma}^{k}})\sum_{s=1}^{|\mathcal{P}^k|}||\gamma_{s}^{k}||^{2}.
\end{aligned}
\end{equation}
Given the definition of $\nabla\widetilde{G}^{k}$, we have that, 
\begin{equation}
\begin{aligned}
(\nabla\widetilde{G}^{k})_{{\boldsymbol{\mu}}_{l}} = & \nabla_{{\boldsymbol{\mu}}_{l}}
\widetilde{L}_{p} (\{\boldsymbol{\mu}^{\hat{k}_l}_{l}\}, \{ \mathbf{r}_{l}^{\hat{k}_l}\},\{ \mathbf{p}_{l}^{\hat{k}_l}\},\{ q_{s}^{\hat{k}_l}\},\mathbf{x}^{\hat{k}_l}, \{{\boldsymbol{\lambda}}_{l}^{\hat{k}}\},\{{\boldsymbol{\alpha}}_{l}^{\hat{k}}\},\{{\boldsymbol{\beta}}_{l}^{\hat{k}}\}, \{\gamma_{s}^{\hat{k}}\} ) \\
+ &  \nabla_{{\boldsymbol{\mu}}_{l}}
\widetilde{L}_{p} (\{\boldsymbol{\mu}^{k}_{l}\}, \{ \mathbf{r}_{l}^{k}\},\{ \mathbf{p}_{l}^{k}\},\{ q_{s}^{k}\},\mathbf{x}^{k}, \{{\boldsymbol{\lambda}}_{l}^{k}\},\{{\boldsymbol{\alpha}}_{l}^{k}\},\{{\boldsymbol{\beta}}_{l}^{k}\}, \{\gamma_{s}^{k}\} )\\
- &  \nabla_{{\boldsymbol{\mu}}_{l}}
\widetilde{L}_{p} (\{\boldsymbol{\mu}^{\hat{k}_l}_{l}\}, \{ \mathbf{r}_{l}^{\hat{k}_l}\},\{ \mathbf{p}_{l}^{\hat{k}_l}\},\{ q_{s}^{\hat{k}_l}\},\mathbf{x}^{\hat{k}_l}, \{{\boldsymbol{\lambda}}_{l}^{\hat{k}}\},\{{\boldsymbol{\alpha}}_{l}^{\hat{k}}\},\{{\boldsymbol{\beta}}_{l}^{\hat{k}}\}, \{\gamma_{s}^{\hat{k}}\} ).\\
\end{aligned}
\end{equation}
Combining it with (\ref{lm_1}), we have that,
\begin{equation}
\|(\nabla \widetilde{G}^k)_{\boldsymbol{\mu}_l} \|^2\leq \frac{1}{\eta_{\boldsymbol{\mu}}^2}\| \boldsymbol{\mu}_l^{\bar{k}_l} -   \boldsymbol{\mu}_l^k \|^2.
\label{th_1}
\end{equation}
Similar results can be derived for other variables, 
\begin{equation}
\begin{aligned}
& \|(\nabla \widetilde{G}^k)_{\mathbf{r}_l}\|^2 \leq \frac{1}{\eta_{\mathbf{r}}^2}\| \mathbf{r}_l^{\bar{k}_l} -   \mathbf{r}_l^k \|^2 .\\
& \|(\nabla \widetilde{G}^k)_{\mathbf{p}_l}\|^2 \leq \frac{1}{\eta_{\mathbf{p}}^2}\| \mathbf{p}_l^{\bar{k}_l} -   \mathbf{p}_l^k \|^2. \\
& \|(\nabla \widetilde{G}^k)_{q_s}\|^2 \leq \frac{1}{\eta_{q}^2}\| q_s^{k+1} - q_s^k \|^2. \\
& \|(\nabla \widetilde{G}^k)_{\mathbf{x}}\|^2 \leq \frac{1}{\eta_{\mathbf{x}}^2}\| \mathbf{x}^{k+1} -   \mathbf{x}^k \|^2. \\
\end{aligned}
\end{equation}
According to the definition \ref{definition_3}, we have,  
\begin{equation}
\begin{aligned}
(\nabla\widetilde{G}^{k})_{{\boldsymbol{\lambda}}_{l}} = & \nabla_{{\boldsymbol{\lambda}}_{l}}
\widetilde{L}_{p} (\{\boldsymbol{\mu}^{\bar{k}_l}_{l}\}, \{ \mathbf{r}_{l}^{\bar{k}_l}\},\{ \mathbf{p}_{l}^{\bar{k}_l}\},\{ q_{s}^{\bar{k}_l}\},\mathbf{x}^{\bar{k}_l}, \{{\boldsymbol{\lambda}}_{l}^{k}\},\{{\boldsymbol{\alpha}}_{l}^{k}\},\{{\boldsymbol{\beta}}_{l}^{k}\}, \{\gamma_{s}^{k}\} ) \\
+ &  \nabla_{{\boldsymbol{\lambda}}_{l}}
\widetilde{L}_{p} (\{\boldsymbol{\mu}^{k}_{l}\}, \{ \mathbf{r}_{l}^{k}\},\{ \mathbf{p}_{l}^{k}\},\{ q_{s}^{k}\},\mathbf{x}^{k}, \{{\boldsymbol{\lambda}}_{l}^{k}\},\{{\boldsymbol{\alpha}}_{l}^{k}\},\{{\boldsymbol{\beta}}_{l}^{k}\}, \{\gamma_{s}^{k}\} )\\
- &  \nabla_{{\boldsymbol{\lambda}}_{l}}
\widetilde{L}_{p} (\{\boldsymbol{\mu}^{\bar{k}_l}_{l}\}, \{ \mathbf{r}_{l}^{\bar{k}_l}\},\{ \mathbf{p}_{l}^{\bar{k}_l}\},\{ q_{s}^{\bar{k}_l}\},\mathbf{x}^{\bar{k}_l}, \{{\boldsymbol{\lambda}}_{l}^{k}\},\{{\boldsymbol{\alpha}}_{l}^{k}\},\{{\boldsymbol{\beta}}_{l}^{k}\}, \{\gamma_{s}^{k}\} ).
\end{aligned}
\end{equation}
Combining trigonometric inequality, (\ref{lambda_update}) with Assumption \ref{assumption_1}, we can obtain 
\begin{equation}
\begin{aligned}
\|(\nabla\widetilde{G}^{k})_{{\boldsymbol{\lambda}}_{l}}\|^{2} \leq &  3\| \nabla_{{\boldsymbol{\lambda}}_{l}}
\widetilde{L}_{p} (\{\boldsymbol{\mu}^{\bar{k}_l}_{l}\}, \{ \mathbf{r}_{l}^{\bar{k}_l}\},\{ \mathbf{p}_{l}^{\bar{k}_l}\},\{ q_{s}^{\bar{k}_l}\},\mathbf{x}^{\bar{k}_l}, \{{\boldsymbol{\lambda}}_{l}^{k}\},\{{\boldsymbol{\alpha}}_{l}^{k}\},\{{\boldsymbol{\beta}}_{l}^{k}\}, \{\gamma_{s}^{k}\} ) \|^2   \\
  + & 3((c_{{\boldsymbol{\lambda}}}^{\hat{k}_l-1})^{2}-(c_{{\boldsymbol{\lambda}}}^{\bar{k}_l-1})^{2})||{\boldsymbol{\lambda}}_{l}^{k}||^{2} +  3 \|  \boldsymbol{\mu}_{l}^{\bar{k}} - \boldsymbol{\mu}_{l}^{k} + \mathbf{r}_{l}^{k} \circ  \mathbf{r}_{l}^{k} - \mathbf{r}_{l}^{\bar{k}} \circ  \mathbf{r}_{l}^{\bar{k}}
\|^2   \\
 \leq & \frac{3}{\eta_{{\boldsymbol{\lambda}}}^2} ||{\boldsymbol{\lambda}}_{l}^{\bar{k}_l}-{\boldsymbol{\lambda}}_{l}^{k}||^{2}+3((c_{{\boldsymbol{\lambda}}}^{\hat{k}_l-1})^{2}-(c_{{\boldsymbol{\lambda}}}^{\bar{k}_l-1})^{2})||{\boldsymbol{\lambda}}_{l}^{k}||^{2} 
\\  + & 6\| \boldsymbol{\mu}_l^{\bar{k}_l} -   \boldsymbol{\mu}_l^k \|^2 + 24w_{\mathbf{r}}^2\| \mathbf{r}_l^{\bar{k}_l} -   \mathbf{r}_l^k \|^2.  
\end{aligned}
\end{equation}
Similar results can be derived for other variables as well,
\begin{equation}
\begin{aligned}
 \|(\nabla\widetilde{G}^{k})_{{\boldsymbol{\alpha}}_{l}}\|^{2} & \leq\frac{3}{\eta_{{\boldsymbol{\alpha}}}^{2}}||{\boldsymbol{\alpha}}_{l}^{\bar{k}_l}-{\boldsymbol{\alpha}}_{l}^{k}||^{2}+3((c_{{\boldsymbol{\alpha}}}^{\hat{k}_l-1})^{2}-(c_{{\boldsymbol{\alpha}}}^{\bar{k}_l-1})^{2})||{\boldsymbol{\alpha}}_{l}^{k}||^{2} 
 \\
& + 6\| \boldsymbol{\mu}_l^{\bar{k}_l} -   \boldsymbol{\mu}_l^k \|^2 + 24w_{\mathbf{p}}^2\| \mathbf{p}_l^{\bar{k}_l} -   \mathbf{p}_l^k \|^2.  \\
 \|(\nabla\widetilde{G}^{k})_{{\boldsymbol{\beta}}_{l}}\|^{2} & \leq\frac{3}{\eta_{{\boldsymbol{\beta}}}^{2}}||{\boldsymbol{\beta}}_{l}^{\bar{k}_l}-{\boldsymbol{\beta}}_{l}^{k}||^{2}+3((c_{{\boldsymbol{\beta}}}^{\hat{k}_l-1})^{2}-(c_{{\boldsymbol{\beta}}}^{\bar{k}_l-1})^{2})||{\boldsymbol{\beta}}_{l}^{k}||^{2} 
 \\
& + 6\| \boldsymbol{\mu}_l^{\bar{k}_l} -   \boldsymbol{\mu}_l^k \|^2 + 6\| \mathbf{x}^{\bar{k}_l} -   \mathbf{x}^k \|^2 . \\
\|(\nabla\widetilde{G}^{k})_{\gamma_{s}}\|^{2} & \leq\frac{3}{\eta_{\gamma}^{2}}||\gamma_{s}^{k+1}-\gamma_{s}^{k}||^{2}+3((c_{\gamma}^{k-1})^{2}-(c_{\gamma}^{k})^{2})||\gamma_{s}^{k}||^{2} 
 \\
&+ 6 \|\mathbf{b}_s\|^2 \| \mathbf{x}^{k+1} - \mathbf{x}^{k} \|^2 + 24w_q^2\| q^{k+1}_s -   q_s^k \|^2.  \\
\end{aligned}
\label{th_2}
\end{equation}
According to Assumption \ref{assumption_1}, we have,
\begin{equation}
\| \mathbf{x}^{\bar{k}_l} -   \mathbf{x}^k \|^2 \leq \tau k_{1}\vartheta \leq \tau k_{1}\| \mathbf{x}^{k+1} - \mathbf{x}^{k} \|^{2}.
\label{v_tau}
\end{equation}
Combining it with (\ref{th_1})-(\ref{th_2}), we can obtain that,
\begin{equation}
\begin{aligned}
||\nabla \widetilde{G}^{k}||^{2}
&=\sum_{l=1}^{L}(||(\nabla \widetilde{G}^{k})_{\boldsymbol{\mu}_{l}}||^{2}+||(\nabla \widetilde{G}^{k})_{\mathbf{r}_{l}}||^{2}+||\nabla G^{k})_{\mathbf{p}_{l}}||^{2} + ||\nabla \widetilde{G}^{k})_{{\boldsymbol{\lambda}}_{l}}||^{2} +||(\nabla G^{k})_{{\boldsymbol{\alpha}}_{l}}||^{2} + ||(\nabla \widetilde{G}^{k})_{{\boldsymbol{\beta}}_{l}}||^{2} ) \\
&  + \sum_{s=1}^{|\mathcal{P}^{k}|}||(\nabla \widetilde{G}^{k})_{{q}_{s}}||^{2}
+||(\nabla \widetilde{G}^{k})_{\mathbf{x}}||^{2}+\sum_{s=1}^{|\mathcal{P}^{k}|}||(\nabla \widetilde{G}^{k})_{\gamma_{s}}||^{2}
\\
& \leq (\frac{1}{\eta_{\boldsymbol{\mu}}^2} + 18)  \sum_{l=1}^{L}\| \boldsymbol{\mu}_l^{\bar{k}_l} -   \boldsymbol{\mu}_l^k \|^2 +  (\frac{1}{\eta_{\mathbf{r}}^2}+ 24w_{\mathbf{r}}^2)\sum_{l=1}^{L}\| \mathbf{r}_l^{\bar{k}_l} -   \mathbf{r}_l^k \|^2 
\\& +  (\frac{1}{\eta_{\mathbf{p}}^2}+24w_{\mathbf{p}}^2 )\sum_{l=1}^{L}\| \mathbf{p}_l^{\bar{k}_l} -   \mathbf{p}_l^k \|^2\\
& + (\frac{1}{\eta_{q}^2}+ 24w_{\mathbf{q}}^2)\sum_{s=1}^{|\mathcal{P}^k|}\| q_s^{k+1} - q_s^k \|^2 
\\& + (\frac{1}{\eta_{\mathbf{x}}^2} + \sum_{s=1}^{|\mathcal{P}^k|}6 \|\mathbf{b}_s\|^2 + 6L\tau k_1)\| \mathbf{x}^{k+1} -   \mathbf{x}^k \|^2  \\
& + \frac{3}{\eta_{{\boldsymbol{\lambda}}}^2}\sum_{l=1}^{L} ||{\boldsymbol{\lambda}}_{l}^{\bar{k}_l}-{\boldsymbol{\lambda}}_{l}^{k}||^{2}+3\sum_{l=1}^{L}((c_{{\boldsymbol{\lambda}}}^{\hat{k}_l-1})^{2}-(c_{{\boldsymbol{\lambda}}}^{\bar{k}_l-1})^{2})||{\boldsymbol{\lambda}}_{l}^{k}||^{2} \\
& + \frac{3}{\eta_{{\boldsymbol{\alpha}}}^{2}}\sum_{l=1}^L||{\boldsymbol{\alpha}}_{l}^{\bar{k}_l}-{\boldsymbol{\alpha}}_{l}^{k}||^{2}+3\sum_{l=1}^L((c_{{\boldsymbol{\alpha}}}^{\hat{k}_l-1})^{2}-(c_{{\boldsymbol{\alpha}}}^{\bar{k}_l-1})^{2})||{\boldsymbol{\alpha}}_{l}^{k}||^{2} \\
& +\frac{3}{\eta_{{\boldsymbol{\beta}}}^{2}}\sum_{l=1}^L||{\boldsymbol{\beta}}_{l}^{\bar{k}_l}-{\boldsymbol{\beta}}_{l}^{k}||^{2}+3\sum_{l=1}^L((c_{{\boldsymbol{\beta}}}^{\hat{k}_l-1})^{2}-(c_{{\boldsymbol{\beta}}}^{\bar{k}_l-1})^{2})||{\boldsymbol{\beta}}_{l}^{k}||^{2} \\
& + \frac{3}{\eta_{\gamma}^{2}}\sum_{s=1}^{|\mathcal{P}^k|}||\gamma_{s}^{k+1}-\gamma_{s}^{k}||^{2}+3\sum_{s=1}^{|\mathcal{P}^k|}((c_{\gamma}^{k-1})^{2}-(c_{\gamma}^{k})^{2})||\gamma_{s}^{k}||^{2} .
\label{th_3}
\end{aligned}
\end{equation}
Let constant $\underline{a_6}$ denote the lower bound of $a_6^k$ ($\underline{a_6}>0$), and we set constants $d_1$, $d_2$, $d_3$, $d_4$, $d_5$ that,
\begin{equation}
\label{th_4}
\begin{aligned}
 d_{1} = \frac{k_{\tau}\tau + 18k_{\tau}\tau   \eta_{\boldsymbol{\mu}}^2}
{{\eta_{\boldsymbol\mu}}^{2}(\underline{a_{6}})^{2}} &\geq  \frac{k_{\tau}\tau + 18k_{\tau}\tau   \eta_{\boldsymbol{\mu}}^2}
{{\eta_{\boldsymbol\mu}}^{2}(a_{6}^k)^{2}},
\\
 d_2 = \frac{k_{\tau}\tau + 24w_{\mathbf{r}}^2 k_{\tau}\tau {\eta_{\mathbf r}}^{2}  }
{{\eta_{\mathbf r}}^{2}(\underline{a_{6}})^{2}}  &\geq  \frac{k_{\tau}\tau + 24w_{\mathbf{r}}^2 k_{\tau}\tau {\eta_{\mathbf r}}^{2}  }
{{\eta_{\mathbf r}}^{2}(a^k_{6})^{2}} ,
\\
 d_3 = \frac{k_{\tau}\tau + 24w_{\mathbf{p}}^2 k_{\tau}\tau {\eta_{\mathbf p}}^{2}  }
{{\eta_{\mathbf p}}^{2}(\underline{a_{6}})^{2}}  &\geq  \frac{k_{\tau}\tau + 24w_{\mathbf{p}}^2 k_{\tau}\tau {\eta_{\mathbf p}}^{2}  }
{{\eta_{\mathbf p}}^{2}({a^k_{6}})^{2}} ,
\\
 d_4 = \frac{1 + 24w_{{q}}^2  {\eta_{ q}}^{2}  }
{{\eta_{ q}}^{2}(\underline{a_{6}})^{2}} & \geq  \frac{1 + 24w_{{q}}^2 {\eta_{ q}}^{2}  }
{{\eta_{ q}}^{2}({a_{6}^k})^{2}},
\\
d_5 = \frac{1 + (\sum_{s=1}^{|\mathcal{P}^k|}6 \|\mathbf{b}_s\|^2 + 6L\tau k_1)  {\eta_{\mathbf x}}^{2}  }
{{\eta_{\mathbf x}}^{2}(\underline{a_{6}})^{2}} &\geq \frac{1 + (\sum_{s=1}^{|\mathcal{P}^k|}6 \|\mathbf{b}_s\|^2 + 6L\tau k_1)  {\eta_{\mathbf x}}^{2}  }
{{\eta_{\mathbf x}}^{2}({a_{6}^k})^{2}}.
\end{aligned}
\end{equation}
where $k_{\tau}$ is a positive constant.
By employing (\ref{th_3}) and (\ref{th_4}), we can obtain, 
\vspace{-10pt}
\begin{equation}
\label{th_5}
\begin{aligned}
||\nabla \widetilde{G}^{k}||^{2} 
 & \leq  (a_6^k)^2\left(d_1\sum_{l=1}^{L}\| \boldsymbol{\mu}_l^{k+1} - \boldsymbol{\mu}_l^k \|^2  + d_2\sum_{l=1}^{L}\|\mathbf{r}_{l}^{k+1}-\mathbf{r}_{l}^{k}\|^2 + 
d_3\sum_{l=1}^{L}\|\mathbf{p}_{l}^{k+1}-\mathbf{p}_{l}^{k}\|^2 
 \right)\\
& + (a_6^k)^2\left(  d_4\sum_{s=1}^{|\mathcal{P}^k|}\|{q}_{s}^{k+1}-{q}_{s}^{k}\|^2 +
d_5\|\mathbf{x}^{k+1}-\mathbf{x}^{k}\|^2    \right) \\
& + (\frac{1}{\eta_{\boldsymbol{\mu}}^2} + 18)  \sum_{l=1}^{L}\| \boldsymbol{\mu}_l^{\bar{k}_l} -   \boldsymbol{\mu}_l^k \|^2  - (\frac{1}{\eta_{\boldsymbol{\mu}}^2} + 18) k_{\tau}\tau \sum_{l=1}^{L}\| \boldsymbol{\mu}_l^{k+1} -   \boldsymbol{\mu}_l^k \|^2 \\
& +  (\frac{1}{\eta_{\mathbf{r}}^2} + 24w_{\mathbf{r}}^2)\sum_{l=1}^{L}\| \mathbf{r}_l^{\bar{k}_l} -   \mathbf{r}_l^k \|^2 -(\frac{1}{\eta_{\mathbf{r}}^2} + 24w_{\mathbf{r}}^2)k_{\tau}\tau\sum_{l=1}^{L}\| \mathbf{r}_l^{k+1} -   \mathbf{r}_l^k \|^2 \\
& +  (\frac{1}{\eta_{\mathbf{p}}^2}+24w_{\mathbf{p}}^2 )\sum_{l=1}^{L}\| \mathbf{p}_l^{\bar{k}_l} -   \mathbf{p}_l^k \|^2 - (\frac{1}{\eta_{\mathbf{p}}^2}+24w_{\mathbf{p}}^2 )k_{\tau}\tau\sum_{l=1}^{L}\| \mathbf{p}_l^{k+1} -   \mathbf{p}_l^k \|^2 \\
& + \frac{3}{\eta_{{\boldsymbol{\lambda}}}^2}\sum_{l=1}^{L} ||{\boldsymbol{\lambda}}_{l}^{\bar{k}_l}-{\boldsymbol{\lambda}}_{l}^{k}||^{2}+3\sum_{l=1}^{L}((c_{{\boldsymbol{\lambda}}}^{\hat{k}_l-1})^{2}-(c_{{\boldsymbol{\lambda}}}^{\bar{k}_l-1})^{2})||{\boldsymbol{\lambda}}_{l}^{k}||^{2} \\
& + \frac{3}{\eta_{{\boldsymbol{\alpha}}}^{2}}\sum_{l=1}^L||{\boldsymbol{\alpha}}_{l}^{\bar{k}_l}-{\boldsymbol{\alpha}}_{l}^{k}||^{2}+3\sum_{l=1}^L((c_{{\boldsymbol{\alpha}}}^{\hat{k}_l-1})^{2}-(c_{{\boldsymbol{\alpha}}}^{\bar{k}_l-1})^{2})||{\boldsymbol{\alpha}}_{l}^{k}||^{2} \\
& +\frac{3}{\eta_{{\boldsymbol{\beta}}}^{2}}\sum_{l=1}^L||{\boldsymbol{\beta}}_{l}^{\bar{k}_l}-{\boldsymbol{\beta}}_{l}^{k}||^{2}+3\sum_{l=1}^L((c_{{\boldsymbol{\beta}}}^{\hat{k}_l-1})^{2}-(c_{{\boldsymbol{\beta}}}^{\bar{k}_l-1})^{2})||{\boldsymbol{\beta}}_{l}^{k}||^{2} \\
& + \frac{3}{\eta_{\gamma}^{2}}\sum\nolimits_{s=1}^{|\mathcal{P}^k|}||\gamma_{s}^{k+1}-\gamma_{s}^{k}||^{2}+3\sum\nolimits_{s=1}^{|\mathcal{P}^k|}((c_{\gamma}^{k-1})^{2}-(c_{\gamma}^{k})^{2})||\gamma_{s}^{k}||^{2}. 
\end{aligned}
\end{equation}
On the basis of (\ref{th_5}), we can further obtain that,
\begin{equation}
\label{add_2}
\begin{aligned}
d_6^k||\nabla \widetilde{G}^{k}||^{2}
 \leq & a_6^k\left(\sum_{l=1}^{L}\| \boldsymbol{\mu}_l^{k+1} - \boldsymbol{\mu}_l^k \|^2  + \sum_{l=1}^{L}\|\mathbf{r}_{l}^{k+1}-\mathbf{r}_{l}^{k}\|^2 + 
\sum_{l=1}^{L}\|\mathbf{p}_{l}^{k+1}-\mathbf{p}_{l}^{k}\|^2 + 
\sum_{s=1}^{|\mathcal{P}^k|}\|{q}_{s}^{k+1}-{q}_{s}^{k}\|^2 +
\|\mathbf{x}^{k+1}-\mathbf{x}^{k}\|^2  \right)\\
& + d_6^k(\frac{1}{\eta_{\boldsymbol{\mu}}^2} + 18)  \sum_{l=1}^{L}\| \boldsymbol{\mu}_l^{\bar{k}_l} -   \boldsymbol{\mu}_l^k \|^2  - d_6^k(\frac{1}{\eta_{\boldsymbol{\mu}}^2} + 18) k_{\tau}\tau \sum_{l=1}^{L}\| \boldsymbol{\mu}_l^{k+1} -   \boldsymbol{\mu}_l^k \|^2 \\
& +  d_6^k(\frac{1}{\eta_{\mathbf{r}}^2} + 24w_{\mathbf{r}}^2)\sum_{l=1}^{L}\| \mathbf{r}_l^{\bar{k}_l} -   \mathbf{r}_l^k \|^2 -d_6^k(\frac{1}{\eta_{\mathbf{r}}^2} + 24w_{\mathbf{r}}^2)k_{\tau}\tau\sum_{l=1}^{L}\| \mathbf{r}_l^{k+1} -   \mathbf{r}_l^k \|^2 \\
& +  d_6^k(\frac{1}{\eta_{\mathbf{p}}^2}+24w_{\mathbf{p}}^2 )\sum_{l=1}^{L}\| \mathbf{p}_l^{\bar{k}_l} -   \mathbf{p}_l^k \|^2 -d_6^k(\frac{1}{\eta_{\mathbf{p}}^2}+24w_{\mathbf{p}}^2 )k_{\tau}\tau\sum_{l=1}^{L}\| \mathbf{p}_l^{k+1} -   \mathbf{p}_l^k \|^2 \\
& + \frac{1}{10\tau\eta_{{\boldsymbol{\lambda}}}}\sum_{l=1}^{L} ||{\boldsymbol{\lambda}}_{l}^{\bar{k}_l}-{\boldsymbol{\lambda}}_{l}^{k}||^{2}+3d_6^k\sum_{l=1}^{L}((c_{{\boldsymbol{\lambda}}}^{\hat{k}_l-1})^{2}-(c_{{\boldsymbol{\lambda}}}^{\bar{k}_l-1})^{2})||{\boldsymbol{\lambda}}_{l}^{k}||^{2} \\
& + \frac{1}{10\tau\eta_{{\boldsymbol{\alpha}}}}\sum_{l=1}^L||{\boldsymbol{\alpha}}_{l}^{\bar{k}_l}-{\boldsymbol{\alpha}}_{l}^{k}||^{2}+3d_6^k\sum_{l=1}^L((c_{{\boldsymbol{\alpha}}}^{\hat{k}_l-1})^{2}-(c_{{\boldsymbol{\alpha}}}^{\bar{k}_l-1})^{2})||{\boldsymbol{\alpha}}_{l}^{k}||^{2} \\
& +\frac{1}{10\tau\eta_{{\boldsymbol{\beta}}}}\sum_{l=1}^L||{\boldsymbol{\beta}}_{l}^{\bar{k}_l}-{\boldsymbol{\beta}}_{l}^{k}||^{2}+3d_6^k\sum_{l=1}^L((c_{{\boldsymbol{\beta}}}^{\hat{k}_l-1})^{2}-(c_{{\boldsymbol{\beta}}}^{\bar{k}_l-1})^{2})||{\boldsymbol{\beta}}_{l}^{k}||^{2} \\
& + \frac{1}{10\eta_{\gamma}}\sum_{s=1}^{|\mathcal{P}^k|}||\gamma_{s}^{k+1}-\gamma_{s}^{k}||^{2}+3d_6^k\sum_{s=1}^{|\mathcal{P}^k|}((c_{\gamma}^{k-1})^{2}-(c_{\gamma}^{k})^{2})||\gamma_{s}^{k}||^{2},
\end{aligned}
\end{equation}
where $d_6^k$ represents a nonnegative sequence, i.e, 
\begin{equation}
d_6^k = \left( 
\operatorname*{max}\left\{d_{1}a_{6}^{k},d_{2}a_{6}^{k},d_{3}a_{6}^{k},d_{4}a_{6}^{k},d_{5}a_{6}^{k},
\frac{30\tau}{\eta_{{\boldsymbol{\lambda}}}},\frac{30\tau}{\eta_{{\boldsymbol{\alpha}}}},\frac{30\tau}{\eta_{{\boldsymbol{\beta}}}},\frac{30}{\eta_{{\gamma}}}\right\}
\right)^{-1}
.
\end{equation}
And the upper and lower bound of $d_6$ is denoted as $\overline{d_6}$ and $\underline{d_6}$, respectively. 
According to Assumption \ref{assumption_1} and combining (\ref{add_1}) with (\ref{add_2}), we have,
\vspace{-10pt}
\begin{equation}
\label{add_3}
\begin{aligned}
d_6^k||\nabla \widetilde{G}^{k}||^{2}
& \leq  F^{k} - F^{k+1} 
\\ & 
+ \overline{d_6}(\frac{1}{\eta_{\boldsymbol{\mu}}^2} + 18)  \sum_{l=1}^{L}\| \boldsymbol{\mu}_l^{\bar{k}_l} -   \boldsymbol{\mu}_l^k \|^2  - \underline{d_6}(\frac{1}{\eta_{\boldsymbol{\mu}}^2} + 18) k_{\tau}\tau \sum_{l=1}^{L}\| \boldsymbol{\mu}_l^{k+1} -   \boldsymbol{\mu}_l^k \|^2 \\
& +  \overline{d_6}(\frac{1}{\eta_{\mathbf{r}}^2} + 24w_{\mathbf{r}}^2)\sum_{l=1}^{L}\| \mathbf{r}_l^{\bar{k}_l} -   \mathbf{r}_l^k \|^2 -\underline{d_6}(\frac{1}{\eta_{\mathbf{r}}^2} + 24w_{\mathbf{r}}^2)k_{\tau}\tau\sum_{l=1}^{L}\| \mathbf{r}_l^{k+1} -   \mathbf{r}_l^k \|^2 \\
& +  \overline{d_6}(\frac{1}{\eta_{\mathbf{p}}^2}+24w_{\mathbf{p}}^2 )\sum_{l=1}^{L}\| \mathbf{p}_l^{\bar{k}_l} -   \mathbf{p}_l^k \|^2 -\underline{d_6}(\frac{1}{\eta_{\mathbf{p}}^2}+24w_{\mathbf{p}}^2 )k_{\tau}\tau\sum_{l=1}^{L}\| \mathbf{p}_l^{k+1} -   \mathbf{p}_l^k \|^2 \\
& + \frac{1}{10\tau\eta_{{\boldsymbol{\lambda}}}}\sum_{l=1}^{L} ||{\boldsymbol{\lambda}}_{l}^{\bar{k}_l}-{\boldsymbol{\lambda}}_{l}^{k}||^{2} -\frac{1}{10\eta_{{\boldsymbol{\lambda}}}}\sum_{l=1}^{L} ||{\boldsymbol{\lambda}}_{l}^{k+1}-{\boldsymbol{\lambda}}_{l}^{k}||^{2}
 \\
& + \frac{1}{10\tau\eta_{{\boldsymbol{\alpha}}}}\sum_{l=1}^L||{\boldsymbol{\alpha}}_{l}^{\bar{k}_l}-{\boldsymbol{\alpha}}_{l}^{k}||^{2}
-\frac{1}{10\eta_{{\boldsymbol{\alpha}}}}\sum_{l=1}^L||{\boldsymbol{\alpha}}_{l}^{\bar{k}_l}-{\boldsymbol{\alpha}}_{l}^{k}||^{2}
 \\
& +\frac{1}{10\tau\eta_{{\boldsymbol{\beta}}}}\sum_{l=1}^L||{\boldsymbol{\beta}}_{l}^{\bar{k}_l}-{\boldsymbol{\beta}}_{l}^{k}||^{2}
- \frac{1}{10\eta_{{\boldsymbol{\beta}}}}\sum_{l=1}^L||{\boldsymbol{\beta}}_{l}^{\bar{k}_l}-{\boldsymbol{\beta}}_{l}^{k}||^{2}
 \\
&+3\overline{d_6}\sum_{l=1}^{L}((c_{{\boldsymbol{\lambda}}}^{\hat{k}_l-1})^{2}-(c_{{\boldsymbol{\lambda}}}^{\bar{k}_l-1})^{2})||{\boldsymbol{\lambda}}_{l}^{k}||^{2}
\\
&+3\overline{d_6}\sum_{l=1}^L((c_{{\boldsymbol{\alpha}}}^{\hat{k}_l-1})^{2}-(c_{{\boldsymbol{\alpha}}}^{\bar{k}_l-1})^{2})||{\boldsymbol{\alpha}}_{l}^{k}||^{2} \\& +3\overline{d_6}\sum_{l=1}^L((c_{{\boldsymbol{\beta}}}^{\hat{k}_l-1})^{2}-(c_{{\boldsymbol{\beta}}}^{\bar{k}_l-1})^{2})||{\boldsymbol{\beta}}_{l}^{k}||^{2}  \\
& +3\overline{d_6}\sum_{s=1}^{|\mathcal{P}^k|}((c_{\gamma}^{k-1})^{2}-(c_{\gamma}^{k})^{2})||\gamma_{s}^{k}||^{2} \\
& + \frac{c_{{\boldsymbol{\lambda}}}^{k-1}-c_{{\boldsymbol{\lambda}}}^k}{2}Mw_{{\boldsymbol{\lambda}}}^2
+ \frac{c_{{\boldsymbol{\alpha}}}^{k-1}-c_{{\boldsymbol{\alpha}}}^k}{2}Mw_{{\boldsymbol{\alpha}}}^2
\\ 
& + \frac{c_{{\boldsymbol{\beta}}}^{k-1}-c_{{\boldsymbol{\beta}}}^k}{2}Mw_{{\boldsymbol{\beta}}}^2
+ \frac{c_{\gamma}^{k-1}-c_{\gamma}^k}{2}Pw_{\gamma}^2
\\ 
&  + \frac{4}{\eta_{{\boldsymbol{\lambda}}}}(\frac{c_{{\boldsymbol{\lambda}}}^{k-2}}{c_{{\boldsymbol{\lambda}}}^{k-1}}-\frac{c_{{\boldsymbol{\lambda}}}^{k-1}}{c_{{\boldsymbol{\lambda}}}^{k}})\sum_{l=1}^{L}||{\boldsymbol{\lambda}}_{l}^{k}||^{2} + \frac{4}{\eta_{{\boldsymbol{\alpha}}}}(\frac{c_{{\boldsymbol{\alpha}}}^{k-2}}{c_{{\boldsymbol{\alpha}}}^{k-1}}-\frac{c_{{\boldsymbol{\alpha}}}^{k-1}}{c_{{\boldsymbol{\alpha}}}^{k}})\sum_{l=1}^{L}||{\boldsymbol{\alpha}}_{l}^{k}||^{2}
 \\
 & + \frac{4}{\eta_{{\boldsymbol{\beta}}}}(\frac{c_{{\boldsymbol{\beta}}}^{k-2}}{c_{{\boldsymbol{\beta}}}^{k-1}}-\frac{c_{{\boldsymbol{\beta}}}^{k-1}}{c_{{\boldsymbol{\beta}}}^{k}})\sum_{l=1}^{L}||{\boldsymbol{\beta}}_{l}^{k}||^{2}  + \frac{4}{\eta_{\gamma}}(\frac{c_{\gamma}^{k-2}}{c_{\gamma}^{k-1}}-\frac{c_{\gamma}^{k-1}}{c_{\gamma}^{k}})\sum_{s=1}^{|\mathcal{P}^k|}||\gamma_{s}^{k}||^{2} 
.
\end{aligned}
\end{equation}
Denoting $\widetilde{K}(\epsilon)$ as $\widetilde{K}(\epsilon)=\min\{k \mid||\nabla\widetilde{G}^{K_1+k}||^2\le\frac{\epsilon}{4},k\ge2\}$.
Summing up (\ref{add_3}) from $K_1 + 2$
to $K_1 + \widetilde{K}(\epsilon)$, we can obtain that,
\begin{equation}
\label{add_4}
\begin{aligned}
\sum_{k=K_{1}+2}^{K_{1}+\widetilde{K}(\epsilon)}d_{6}^{k}||\nabla\widetilde{G}^{k}||^{2} 
& \leq  F^{K_1+2} - F^{K_1 + \widetilde{K}(\epsilon) +1} 
 + \frac{c_{{\boldsymbol{\lambda}}}^1}{2}Mw_{{\boldsymbol{\lambda}}}^2
+ \frac{c_{{\boldsymbol{\alpha}}}^1}{2}Mw_{{\boldsymbol{\alpha}}}^2
+ \frac{c_{{\boldsymbol{\beta}}}^1}{2}Mw_{{\boldsymbol{\beta}}}^2
+ \frac{c_{\gamma}^1}{2}Pw_{\gamma}^2
\\
& + \frac{4}{\eta_{\boldsymbol{\lambda}}}\frac{c_{{\boldsymbol{\lambda}}}^0}{c_{{\boldsymbol{\lambda}}}^1} M w_{{\boldsymbol{\lambda}}}^2 
+ \frac{4}{\eta_{\boldsymbol{\alpha}}}\frac{c_{{\boldsymbol{\alpha}}}^0}{c_{{\boldsymbol{\alpha}}}^1} M w_{{\boldsymbol{\alpha}}}^2
+ \frac{4}{\eta_{\boldsymbol{\beta}}}\frac{c_{{\boldsymbol{\beta}}}^0}{c_{{\boldsymbol{\beta}}}^1} M w_{{\boldsymbol{\beta}}}^2
+ \frac{4}{\eta_\gamma}\frac{c_{\mathbf{\gamma}}^0}{c_{\mathbf{\gamma}}^1} P w_{\mathbf{\gamma}}^2
\\
& + \overline{d_6}(\frac{1}{\eta_{\boldsymbol{\mu}}^2} + 18)  \sum_{k=K_{1}+2}^{K_{1}+\widetilde{K}(\epsilon)}\sum_{l=1}^{L}\| \boldsymbol{\mu}_l^{\bar{k}_l} -   \boldsymbol{\mu}_l^k \|^2  - \underline{d_6}(\frac{1}{\eta_{\boldsymbol{\mu}}^2} + 18) k_{\tau}\tau \sum_{k=K_{1}+2}^{K_{1}+\widetilde{K}(\epsilon)}\sum_{l=1}^{L}\| \boldsymbol{\mu}_l^{k+1} -   \boldsymbol{\mu}_l^k \|^2 \\
& +  \overline{d_6}(\frac{1}{\eta_{\mathbf{r}}^2} + 24w_{\mathbf{r}}^2)\sum_{k=K_{1}+2}^{K_{1}+\widetilde{K}(\epsilon)}\sum_{l=1}^{L}\| \mathbf{r}_l^{\bar{k}_l} -   \mathbf{r}_l^k \|^2 -\underline{d_6}(\frac{1}{\eta_{\mathbf{r}}^2} + 24w_{\mathbf{r}}^2)k_{\tau}\tau\sum_{k=K_{1}+2}^{K_{1}+\widetilde{K}(\epsilon)}\sum_{l=1}^{L}\| \mathbf{r}_l^{k+1} -   \mathbf{r}_l^k \|^2 \\
& +  \overline{d_6}(\frac{1}{\eta_{\mathbf{p}}^2}+24w_{\mathbf{p}}^2 )\sum_{k=K_{1}+2}^{K_{1}+\widetilde{K}(\epsilon)}\sum_{l=1}^{L}\| \mathbf{p}_l^{\bar{k}_l} -   \mathbf{p}_l^k \|^2 -\underline{d_6}(\frac{1}{\eta_{\mathbf{p}}^2}+24w_{\mathbf{p}}^2 )k_{\tau}\tau\sum_{k=K_{1}+2}^{K_{1}+\widetilde{K}(\epsilon)}\sum_{l=1}^{L}\| \mathbf{p}_l^{k+1} -   \mathbf{p}_l^k \|^2 \\
& + \frac{1}{10\tau\eta_{{\boldsymbol{\lambda}}}}\sum_{k=K_{1}+2}^{K_{1}+\widetilde{K}(\epsilon)}\sum_{l=1}^{L} ||{\boldsymbol{\lambda}}_{l}^{\bar{k}_l}-{\boldsymbol{\lambda}}_{l}^{k}||^{2} -\frac{1}{10\eta_{{\boldsymbol{\lambda}}}}\sum_{k=K_{1}+2}^{K_{1}+\widetilde{K}(\epsilon)}\sum_{l=1}^{L} ||{\boldsymbol{\lambda}}_{l}^{k+1}-{\boldsymbol{\lambda}}_{l}^{k}||^{2}
 \\
& + \frac{1}{10\tau\eta_{{\boldsymbol{\alpha}}}}\sum_{k=K_{1}+2}^{K_{1}+\widetilde{K}(\epsilon)}\sum_{l=1}^L||{\boldsymbol{\alpha}}_{l}^{\bar{k}_l}-{\boldsymbol{\alpha}}_{l}^{k}||^{2}
-\frac{1}{10\eta_{{\boldsymbol{\alpha}}}}\sum_{k=K_{1}+2}^{K_{1}+\widetilde{K}(\epsilon)}\sum_{l=1}^L||{\boldsymbol{\alpha}}_{l}^{k+1}-{\boldsymbol{\alpha}}_{l}^{k}||^{2}
 \\
& +\frac{1}{10\tau\eta_{{\boldsymbol{\beta}}}}\sum_{k=K_{1}+2}^{K_{1}+\widetilde{K}(\epsilon)}\sum_{l=1}^L||{\boldsymbol{\beta}}_{l}^{\bar{k}_l}-{\boldsymbol{\beta}}_{l}^{k}||^{2}
- \frac{1}{10\eta_{{\boldsymbol{\beta}}}}\sum_{k=K_{1}+2}^{K_{1}+\widetilde{K}(\epsilon)}\sum_{l=1}^L||{\boldsymbol{\beta}}_{l}^{k+1}-{\boldsymbol{\beta}}_{l}^{k}||^{2}
 \\
&+3\overline{d_6} \sum_{k=K_{1}+2}^{K_{1}+\widetilde{K}(\epsilon)}\sum_{l=1}^{L}((c_{{\boldsymbol{\lambda}}}^{\hat{k}_l-1})^{2}-(c_{{\boldsymbol{\lambda}}}^{\bar{k}_l-1})^{2})||{\boldsymbol{\lambda}}_{l}^{k}||^{2}+3\overline{d_6}\sum_{k=K_{1}+2}^{K_{1}+\widetilde{K}(\epsilon)}\sum_{l=1}^L((c_{{\boldsymbol{\alpha}}}^{\hat{k}_l-1})^{2}-(c_{{\boldsymbol{\alpha}}}^{\bar{k}_l-1})^{2})||{\boldsymbol{\alpha}}_{l}^{k}||^{2} \\& +3\overline{d_6}\sum_{k=K_{1}+2}^{K_{1}+\widetilde{K}(\epsilon)}\sum_{l=1}^L((c_{{\boldsymbol{\beta}}}^{\hat{k}_l-1})^{2}-(c_{{\boldsymbol{\beta}}}^{\bar{k}_l-1})^{2})||{\boldsymbol{\beta}}_{l}^{k}||^{2}  +  3\overline{d_6}{(c_{\gamma}^{1})}^2 Pw_{\gamma}^2.\\
\end{aligned}
\end{equation}
For each worker $l$, we have that $\overline{k_l} - \hat{k}_l \le \tau$, thus,
\begin{equation}
\label{add_5}
\begin{aligned}
& 3\overline{d_6} \sum_{k=K_{1}+2}^{K_{1}+\widetilde{K}(\epsilon)}\sum_{l=1}^{L}((c_{{\boldsymbol{\lambda}}}^{\hat{k}_l-1})^{2}-(c_{{\boldsymbol{\lambda}}}^{\bar{k}_l-1})^{2})||{\boldsymbol{\lambda}}_{l}^{k}||^{2} \leq 3\tau \overline{d_6}({c_{{\boldsymbol{\lambda}}}^1})^2Mw_{{\boldsymbol{\lambda}}}^2,
\\
& 3\overline{d_6} \sum_{k=K_{1}+2}^{K_{1}+\widetilde{K}(\epsilon)}\sum_{l=1}^{L}((c_{{\boldsymbol{\alpha}}}^{\hat{k}_l-1})^{2}-(c_{{\boldsymbol{\alpha}}}^{\bar{k}_l-1})^{2})||{\boldsymbol{\alpha}}_{l}^{k}||^{2} \leq 3\tau \overline{d_6}({c_{{\boldsymbol{\alpha}}}^1})^2Mw_{{\boldsymbol{\alpha}}}^2,
\\
& 3\overline{d_6}\sum_{k=K_{1}+2}^{K_{1}+\widetilde{K}(\epsilon)}\sum_{l=1}^{L}((c_{{\boldsymbol{\beta}}}^{\hat{k}_l-1})^{2}-(c_{{\boldsymbol{\beta}}}^{\bar{k}_l-1})^{2})||{\boldsymbol{\beta}}_{l}^{k}||^{2} \leq 3\tau \overline{d_6}({c_{{\boldsymbol{\beta}}}^1})^2Mw_{{\boldsymbol{\beta}}}^2.
\end{aligned}
\end{equation}
In our asynchronous algorithm, inactive workers do not update their variables in each master iteration, Thus, for any $k$ which satisfies $\hat{v}_l(j-1)\leq k<\hat{v}_l(j)$, we have 
$\boldsymbol{\mu}_{l}^{k}=\boldsymbol{\mu}_{l}^{\hat{v}_{l}(j)-1}$.  And for $k \notin \mathcal{V}_l(K)$, we have $\| \boldsymbol{\mu}_{l}^k - \boldsymbol{\mu}_{l}^{k-1}\|^2 =0$.
Since $\hat{v}_l(j)-\hat{v}_l(j-1)\leq\tau$, we can obtain
\begin{equation}
\label{add_6}
\sum_{k= K_{1}+2}^{K_{1}+\widetilde{K}(\epsilon)}\sum_{l=1}^{L}\| \boldsymbol{\mu}_l^{\bar{k}_l} -   \boldsymbol{\mu}_l^k \|^2 \leq
\tau \sum_{k=K_{1}+2}^{K_{1}+\widetilde{K}(\epsilon)}\sum_{l=1}^{L}||\boldsymbol{\mu}_{l}^{k+1}-\boldsymbol{\mu}_{l}^{k}||^{2}+4\tau(\tau-1)M w_{\boldsymbol{\mu}}^2.
\end{equation}
Similarly, we can obtain
\begin{equation}
\label{add_7}
\begin{aligned}
& \sum_{k= K_{1}+2}^{K_{1}+\widetilde{K}(\epsilon)}\sum_{l=1}^{L}\| \mathbf{r}_l^{\bar{k}_l} -   \mathbf{r}_l^k \|^2 \leq
\tau \sum_{k=K_{1}+2}^{K_{1}+\widetilde{K}(\epsilon)}\sum_{l=1}^{L}||\mathbf{r}_{l}^{k+1}-\mathbf{r}_{l}^{k}||^{2}+4\tau(\tau-1)M w_{\mathbf{r}}^2,
\\
& \sum_{k= K_{1}+2}^{K_{1}+\widetilde{K}(\epsilon)}\sum_{l=1}^{L}\| \mathbf{p}_l^{\bar{k}_l} -   \mathbf{p}_l^k \|^2 \leq
\tau \sum_{k=K_{1}+2}^{K_{1}+\widetilde{K}(\epsilon)}\sum_{l=1}^{L}||\mathbf{p}_{l}^{k+1}-\mathbf{p}_{l}^{k}||^{2}+4\tau(\tau-1)M w_{\mathbf{p}}^2,
\\
& \sum_{k= K_{1}+2}^{K_{1}+\widetilde{K}(\epsilon)}\sum_{l=1}^{L}\| {\boldsymbol{\lambda}}_l^{\bar{k}_l} -   {\boldsymbol{\lambda}}_l^k \|^2 \leq
\tau \sum_{k=K_{1}+2}^{K_{1}+\widetilde{K}(\epsilon)}\sum_{l=1}^{L}||{\boldsymbol{\lambda}}_{l}^{k+1}-{\boldsymbol{\lambda}}_{l}^{k}||^{2}+4\tau(\tau-1)M w_{{\boldsymbol{\lambda}}}^2,
\\
& \sum_{k= K_{1}+2}^{K_{1}+\widetilde{K}(\epsilon)}\sum_{l=1}^{L}\| {\boldsymbol{\alpha}}_l^{\bar{k}_l} -   {\boldsymbol{\alpha}}_l^k \|^2 \leq
\tau \sum_{k=K_{1}+2}^{K_{1}+\widetilde{K}(\epsilon)}\sum_{l=1}^{L}||{\boldsymbol{\alpha}}_{l}^{k+1}-{\boldsymbol{\alpha}}_{l}^{k}||^{2}+4\tau(\tau-1)M w_{{\boldsymbol{\alpha}}}^2,
\\
& \sum_{k= K_{1}+2}^{K_{1}+\widetilde{K}(\epsilon)}\sum_{l=1}^{L}\| {\boldsymbol{\beta}}_l^{\bar{k}_l} -   {\boldsymbol{\beta}}_l^k \|^2 \leq
\tau \sum_{k=K_{1}+2}^{K_{1}+\widetilde{K}(\epsilon)}\sum_{l=1}^{L}||{\boldsymbol{\beta}}_{l}^{k+1}-{\boldsymbol{\beta}}_{l}^{k}||^{2}+4\tau(\tau-1)M w_{{\boldsymbol{\beta}}}^2.
\end{aligned}
\end{equation}
We set the value of $k_{\tau}$ to satisfy that,
\begin{equation}
\label{k_tau}
k_{\tau}\geq\operatorname*{max}\left\{\frac{\overline{{d_{6}}}(\frac{1}{\eta_{\boldsymbol{\mu}}^2} + 18)}{\underline{{d_{6}}}(\frac{1}{\overline{\eta_{\boldsymbol{\mu}}^2}} + 18)},
\frac{\overline{{d_{6}}}( \frac{1}{\eta_{\mathbf{r}}^2} + 24w_{\mathbf{r}}^2)}{\underline{{d_{6}}}(\frac{1}{\overline{\eta_{\mathbf{r}}^2}} + 24w_{\mathbf{r}}^2)},
\frac{\overline{{d_{6}}}( \frac{1}{\eta_{\mathbf{p}}^2} + 24w_{\mathbf{p}}^2)}{\underline{{d_{6}}}(\frac{1}{\overline{\eta_{\mathbf{p}}^2}} + 24w_{\mathbf{p}}^2)}\right\},
\end{equation}
where $\overline{\eta_{\boldsymbol{\mu}}}$, $\overline{\eta_{\mathbf{r}}}$, and $\overline{\eta_{\mathbf{p}}}$  are the upper bounds of $\eta_{\boldsymbol{\mu}}^k$, $\eta_{\mathbf{r}}^k$, and $\eta_{\mathbf{p}}^k$, respectively. 
By employing (\ref{add_4}), (\ref{add_5}), (\ref{add_6})   (\ref{add_7}) and (\ref{k_tau}), we can obtain
\begin{equation}
\begin{aligned}
\sum_{k=K_{1}+2}^{K_{1}+\widetilde{K}(\epsilon)}d_{6}^{k}||\nabla\widetilde{G}^{k}||^{2} 
& \leq  F^{K_1+2} - F^{K_1 + \widetilde{K}(\epsilon) +1} \\[-2mm]
&  + \frac{c_{{\boldsymbol{\lambda}}}^1}{2}Mw_{{\boldsymbol{\lambda}}}^2
+ \frac{c_{{\boldsymbol{\alpha}}}^1}{2}Mw_{{\boldsymbol{\alpha}}}^2
+ \frac{c_{{\boldsymbol{\beta}}}^1}{2}Mw_{{\boldsymbol{\beta}}}^2
+ \frac{c_{\gamma}^1}{2}Pw_{\gamma}^2
\\
& + \frac{4}{\eta_{\boldsymbol{\lambda}}}\frac{c_{{\boldsymbol{\lambda}}}^0}{c_{{\boldsymbol{\lambda}}}^1} M w_{{\boldsymbol{\lambda}}}^2 
+ \frac{4}{\eta_{\boldsymbol{\alpha}}}\frac{c_{{\boldsymbol{\alpha}}}^0}{c_{{\boldsymbol{\alpha}}}^1} M w_{{\boldsymbol{\alpha}}}^2
+ \frac{4}{\eta_{\boldsymbol{\beta}}}\frac{c_{{\boldsymbol{\beta}}}^0}{c_{{\boldsymbol{\beta}}}^1} M w_{{\boldsymbol{\beta}}}^2
+ \frac{4}{\eta_\gamma}\frac{c_{\mathbf{\gamma}}^0}{c_{\mathbf{\gamma}}^1} P w_{\mathbf{\gamma}}^2
 \\
& + 3\tau \overline{d_6}({c_{{\boldsymbol{\lambda}}}^1})^2Mw_{{\boldsymbol{\lambda}}}^2 +  
3\tau \overline{d_6}({c_{{\boldsymbol{\alpha}}}^1})^2Mw_{{\boldsymbol{\alpha}}}^2 +
3\tau \overline{d_6}({c_{{\boldsymbol{\beta}}}^1})^2Mw_{{\boldsymbol{\beta}}}^2 +  3\overline{d_6}({c_{\gamma}^{1}})^2 Pw_{\gamma}^2\\
& + \bigg(\frac{2Mw_{{\boldsymbol{\lambda}}}^2}{5\eta_{{\boldsymbol{\lambda}}}} + \frac{2Mw_{{\boldsymbol{\alpha}}}^2}{5\eta_{{\boldsymbol{\alpha}}}} + \frac{2Mw_{{\boldsymbol{\beta}}}^2}{5\eta_{{\boldsymbol{\beta}}}}
+ 4\overline{d_6}(\frac{1}{\eta_{\boldsymbol{\mu}}^2} + 18)Mw_{\boldsymbol{\mu}}^2\tau 
\\
& + 4\overline{d_6}(\frac{1}{\eta_{\mathbf{r}}^2} + 24w^2_{\mathbf{r}})Mw_{\mathbf{r}}^2\tau
+ 4\overline{d_6}(\frac{1}{\eta_{\mathbf{p}}^2} + 24w^2_{\mathbf{p}})Mw_{\mathbf{p}}^2\tau
\bigg)  (\tau -1)
\\& = (d_7 +  k_d\tau)(\tau -1).
\end{aligned}
\end{equation}
where $d_7$ and $k_d$ are constants.
Set constant $d_8$ as 
\begin{equation}
d_8 = \left( 
\operatorname*{max}\left\{d_{1},d_{2},d_{3},d_{4},d_{5},
\frac{30\tau}{\eta_{{\boldsymbol{\lambda}}}\underline{a_6}},\frac{30\tau}{\eta_{{\boldsymbol{\alpha}}}\underline{a_6}},\frac{30\tau}{\eta_{{\boldsymbol{\beta}}}\underline{a_6}},\frac{30}{\eta_{{\gamma}}\underline{a_6}}\right\}
\right)
 \geq \frac{1}{d_6^k a_6^k} .
\end{equation}
Thus, we can obtain that 
\begin{equation}
\sum_{k=K_1+2}^{K_1+\tilde{K}(\epsilon)}\frac{1}{d_8a_6^k}\|\nabla\widetilde{G}^{K_1+\tilde{K}(\epsilon)}\|^2\le\sum_{k=K_1+2}^{K_1+\tilde{K}(\epsilon)}\frac{1}{d_8a_6^k}\|\nabla\widetilde{G}^k\|^2\le\sum_{k=K_1+2}^{K_1+\tilde{K}(\epsilon)}d_6^k\|\nabla\widetilde{G}^k\|^2\leq (d_7+ k_d\tau)(\tau-1).
\end{equation}
We can further obtain
\begin{equation}
\label{add_9}
||\nabla\widetilde{G}^{K_1+\widetilde{K}(\epsilon)}||^2\le\frac{(d_7 + k_d\tau)(\tau-1)d_8}{\sum\limits_{k=K_1+2}^{K_1+\tilde{K}} \frac{1}{a_6^k}}.
\end{equation}
According to the setting of $c_{{\boldsymbol{\lambda}}}^k$, $c_{{\boldsymbol{\alpha}}}^k$, $c_{{\boldsymbol{\beta}}}^k$ and $c_{\gamma}^k$, we have,
\begin{equation}
\frac{1}{a_6^k}\geq \frac{1} {8({\xi-2})(k+1)^{\frac{1}{2}}  \min\{ \eta_{{\boldsymbol{\lambda}}}+ \eta_{{\boldsymbol{\alpha}}} + \eta_{{\boldsymbol{\beta}}}, 4w^2_{\mathbf{r}}\eta_{{\boldsymbol{\lambda}}},  4w^2_{\mathbf{p}}\eta_{{\boldsymbol{\alpha}}}, 4w^2_{\mathbf{q}}\eta_{\gamma},\eta_{{\boldsymbol\beta}} + \eta_{{\gamma}}\sum_{s=1}^{|\mathcal{P}^k|}\|b_{s}\|^2 \}}.
\end{equation}
Summing up $a_6^k$ from $k = K_1 +2 $  to $k = K_1+\widetilde{K} $, we have
\begin{equation}
\sum_{k=K_1+2}^{K_1+\tilde{K}(\epsilon)}\frac{1}{a_6^k} \geq \frac{(K_1+\widetilde{K}(\epsilon))^{\frac{1}{2}} -(K_1+2)^{\frac{1}{2}}} {
8({\xi-2})\min\{ \eta_{{\boldsymbol{\lambda}}}+ \eta_{{\boldsymbol{\alpha}}} + \eta_{{\boldsymbol{\beta}}}, 4w^2_{\mathbf{r}}\eta_{{\boldsymbol{\lambda}}},  4w^2_{\mathbf{p}}\eta_{{\boldsymbol{\alpha}}}, 4w^2_{\mathbf{q}}\eta_{\gamma},\eta_{{\boldsymbol\beta}} + \eta_{{\gamma}}\sum_{s=1}^{|\mathcal{P}^k|}\|b_{s}\|^2 \} }.
\label{add_8}
\end{equation}

Recall that $\widetilde{K}(\epsilon)=\min\{k \mid||\nabla\widetilde{G}^{K_1+k}||^2\le\frac{\epsilon}{4},k\ge2\}$. Therefore, by employing (\ref{add_8}) and (\ref{add_9}), when 
\begin{equation}
\label{add_10}
K_1+\widetilde{K}(\epsilon)\geq(\frac{  d_9(d_7+ k_d\tau)(\tau-1)d_8}{\epsilon}+(K_1+2)^{\frac{1}{2}})^2,
\end{equation}
the value of $||\nabla\widetilde{G}^{K_1+\widetilde{K}(\epsilon)}||^2$ can be guaranteed to be smaller than $\frac{\epsilon}{4}$, where 
\begin{equation}
d_9 = 32({\xi-2}) \min\{ \eta_{{\boldsymbol{\lambda}}}+ \eta_{{\boldsymbol{\alpha}}} + \eta_{{\boldsymbol{\beta}}}, 4w^2_{\mathbf{r}}\eta_{{\boldsymbol{\lambda}}},  4w^2_{\mathbf{p}}\eta_{{\boldsymbol{\alpha}}}, 4w^2_{\mathbf{q}}\eta_{\gamma},\eta_{{\gamma}} + \eta_{{\gamma}}\sum_{s=1}^{L}\|b_{s}\|^2 \}.
\end{equation}
Combining the definition of $\nabla{G}^{k}$ and $\nabla\widetilde{G}^{k}$ with trigonometric inequality, we then get:
\begin{equation}
||\nabla G^k||-||\nabla\widetilde{G}^k||\leq||\nabla G^k-\nabla\widetilde{G}^k||\leq\sqrt{\sum_{l=1}^{L}||c_{{\boldsymbol{\lambda}}}^{k-1}{\boldsymbol{\lambda}}_l^k||^2+\sum_{l=1}^L||c_{{\boldsymbol{\alpha}}}^{k-1}{\boldsymbol{\alpha}}_l^k||^2 + +\sum_{l=1}^L||c_{{\boldsymbol{\beta}}}^{k-1}{\boldsymbol{\beta}}_l^k||^2+ +\sum_{s=1}^{|\mathcal{P}^k|}||c_{\gamma}^{k-1}\gamma_s^k||^2}.
\end{equation}
If $k \geq 16(\frac{Mw_{{\boldsymbol{\lambda}}}^2}{\eta_\lambda^2} + \frac{Mw_{{\boldsymbol{\alpha}}}^2}{\eta_{\boldsymbol{\alpha}}^2} +
\frac{Mw_{{\boldsymbol{\beta}}}^2}{\eta_{\boldsymbol{\beta}}^2} + 
\frac{Pw_{\gamma}^2}{\eta_\gamma^2})^2 \frac{1}{\epsilon^2}$, we have 
\begin{equation}
\sqrt{\sum_{l=1}^{L}||c_{{\boldsymbol{\lambda}}}^{k-1}{\boldsymbol{\lambda}}_l^k||^2+\sum_{l=1}^L||c_{{\boldsymbol{\alpha}}}^{k-1}{\boldsymbol{\alpha}}_l^k||^2 + +\sum_{l=1}^L||c_{{\boldsymbol{\beta}}}^{k-1}{\boldsymbol{\beta}}_l^k||^2+ +\sum_{s=1}^{|\mathcal{P}^k|}|||c_{\gamma}^{k-1}\gamma_s^k||^2} \leq \frac{\sqrt{\epsilon}}{2}.
\end{equation}
Combining it
with (\ref{add_10}), we can conclude that 
\begin{equation}
K(\epsilon) \sim \mathcal{O}\left(\max ~\left\{~(16(\frac{Mw_{{\boldsymbol{\lambda}}}^2}{\eta_\lambda^2} + \frac{Mw_{{\boldsymbol{\alpha}}}^2}{\eta_{\boldsymbol{\alpha}}^2} +
\frac{Mw_{{\boldsymbol{\beta}}}^2}{\eta_{\boldsymbol{\beta}}^2} + 
\frac{Pw_{\gamma}^2}{\eta_\gamma^2})^2 \frac{1}{\epsilon^2},
(\frac{  d_9(d_7 + k_d\tau)(\tau-1)d_8}{\epsilon}+(K_1+2)^{\frac{1}{2}})^2
\right\}~ 
\right).
\end{equation}
which concludes our proof.
\end{proof}

\section{Proof of Theorem \ref{theorem_1}}
\label{AD_C}
In Algorithm \ref{algorithm_1}, cutting plane set are updated every $w$ iteration, 
i.e.,
\begin{equation}
\mathcal{P}^0\supseteq\mathcal{P}^w\supseteq\cdots\supseteq\mathcal{P}^{nw}.
\end{equation}
The feasible region of problem (\ref{distributed_3}) in the $w^{th}$ iteration is represented as $\mathcal{R}^w$, while the feasible region of problem (\ref{distributed_2}) is represented as  $\mathcal{R}^{*}$. As such, we have
\begin{equation}
\mathcal{R}^0\supseteq\mathcal{R}^w\supseteq\cdots\supseteq\mathcal{R}^{nw} \supseteq\mathcal{R}^{*}.
\label{c_1}
\end{equation}
Let $F^{w*}$ represent the optimal objective value of the problem (\ref{distributed_3}) in the $w^{th}$ iteration and let $F^{*} \leq 0 $ represent the optimal objective value of the problem in (\ref{distributed_2}). Based on (\ref{c_1}), we can obtain
\begin{equation}
F^{0*} \leq F^{w*} \leq \cdots \leq F^{nw*} \leq F^*.
\end{equation}
Thus, we have 
\begin{equation}
\frac{F^*}{F^{0*}} \leq \frac{F^*}{F^{w*}} \leq \cdots \leq \frac{F^*}{F^{nw*}} \leq \Omega,
\label{c_2}
\end{equation}
where $\Omega \leq 1$.
It can be seen from (\ref{c_2}) that as the number of cutting planes increases, the sequence is monotonically non-decreasing.
When $nw \to \infty$, $\frac{F^*}{F^{nw*}}$ monotonically converges to $\Omega$.

\end{document}